\documentclass{article}
\usepackage{microtype}
\usepackage{graphicx}
\usepackage{subcaption}
\usepackage{placeins}
\usepackage{pdfpages}
\usepackage{float}
\usepackage{tikz}
\usetikzlibrary{arrows.meta}
\usepackage{stfloats}
\usepackage{booktabs} 
\usepackage{hyperref}
\usepackage{bookmark}   
\usepackage[preprint]{icml2026}
\usepackage{amsmath}
\usepackage{amssymb}
\usepackage{mathtools}
\usepackage{amsthm}
\usepackage{booktabs}
\usepackage{amsmath}
\usepackage[table,dvipsnames]{xcolor}
\usepackage{pifont}
\usepackage{multirow}   
\usepackage[capitalize,noabbrev]{cleveref}
\usepackage[textsize=tiny]{todonotes}

\newcommand{\cmark}{\ding{51}} 
\newcommand{\xmark}{\ding{55}} 
\theoremstyle{plain}
\newtheorem{example}{Example}
\newtheorem{theorem}{Theorem}
\newtheorem{definition}{Definition}

\newtheorem{lemma}{Lemma}
\newtheorem{proposition}{Proposition}

\newtheorem{remark}{Remark}

\usepackage{centernot}

\newcommand{\CI}{\mathrel{\perp\mspace{-10mu}\perp}}
\newcommand{\nCI}{\centernot{\CI}}
\usepackage{titletoc}
\usepackage{cleveref}
\crefname{theorem}{Theorem}{Theorems}
\crefname{lemma}{Lemma}{Lemmas}
\crefname{assumption}{Assumption}{Assumptions}
\crefname{proposition}{Proposition}{Propositions}
\crefname{property}{Property}{Properties}
\crefname{corollary}{Corollary}{Corollaries}
\crefname{figure}{Figure}{Figures}
\crefname{section}{Section}{Sections}
\crefname{definition}{Definition}{Definitions}
\crefname{table}{Table}{Tables}
\crefname{algorithm}{Algorithm}{Algorithms}
\crefname{example}{Example}{Examples}
\crefname{remark}{Remark}{Remarks}
\crefname{appendix}{Appendix}{Appendices}
\crefname{section}{Section}{Sections}
\usepackage{enumitem}

\usepackage{tikz}
\newenvironment{nospaceflalign*}
 {\setlength{\abovedisplayskip}{1pt}\setlength{\belowdisplayskip}{1pt}%
  \csname flalign*\endcsname}
 {\csname endflalign*\endcsname\ignorespacesafterend}

\newdimen\arrowsize
\pgfarrowsdeclare{arcsq}{arcsq}
{
	\arrowsize=0.2pt
	\advance\arrowsize by .5\pgflinewidth
	\pgfarrowsleftextend{-4\arrowsize-.5\pgflinewidth}
	\pgfarrowsrightextend{.5\pgflinewidth}
}
{
	\arrowsize=0.8pt
	\advance\arrowsize by .5\pgflinewidth
	\pgfsetdash{}{0pt} 
	\pgfsetroundjoin   
	\pgfsetroundcap    
	\pgfpathmoveto{\pgfpoint{0\arrowsize}{0\arrowsize}}
	\pgfpatharc{-90}{-140}{4\arrowsize}
	\pgfusepathqstroke
	\pgfpathmoveto{\pgfpointorigin}
	\pgfpatharc{90}{140}{4\arrowsize}
	\pgfusepathqstroke
}     

\pgfarrowsdeclare{arcsqblue}{arcsqblue}
{
	\arrowsize=0.5pt 
	\advance\arrowsize by .5\pgflinewidth
	\pgfarrowsleftextend{-12\arrowsize-.10\pgflinewidth}
	\pgfarrowsrightextend{.10\pgflinewidth}
}
{
	\arrowsize=0.8pt 
	\advance\arrowsize by .5\pgflinewidth
	\pgfsetdash{}{0pt} 
	\pgfsetroundjoin   
	\pgfsetroundcap    
	\pgfsetcolor{blue} 
	\pgfsetlinewidth{1.5pt} 
	\pgfpathmoveto{\pgfpoint{0\arrowsize}{0\arrowsize}}
	\pgfpatharc{-90}{-130}{6\arrowsize}
	\pgfusepathqstroke
	\pgfpathmoveto{\pgfpointorigin}
	\pgfpatharc{90}{130}{6\arrowsize}
	\pgfusepathqstroke
}

\makeatletter
\pgfarrowsdeclare{circle}{circle}
{
    \pgfarrowsleftextend{0pt}
    \pgfarrowsrightextend{6\pgflinewidth}
}
{
    \pgfpathcircle{\pgfqpoint{1.5pt}{0pt}}{1.5pt} 
    \pgfusepathqstroke 
}

\DeclareMathOperator{\MAG}{\mathcal{M}}
\DeclareMathOperator{\PAG}{\mathcal{P}}
\DeclareMathOperator{\DAG}{\mathcal{D}}

\newcommand{\vars}[1][V]{\mathbf{#1}}

\newcommand{\g}[1][G]{\mathcal{#1}}

\DeclareMathOperator{\PossDe}{\mathrm{PossDe}}

\DeclareMathOperator{\De}{\mathrm{De}}
\DeclareMathOperator{\An}{\mathrm{An}}
\DeclareMathOperator{\Pa}{\mathrm{Pa}}

\DeclareMathOperator{\Ch}{\mathrm{Ch}}

\DeclareMathOperator{\Adj}{\mathrm{Adj}}

\DeclareMathOperator{\MB}{\mathrm{MB}}
\DeclareMathOperator{\MBplus}{\mathrm{MB^{+}}}

\DeclareMathOperator{\Pastar}{\mathrm{Pa}^{\!*}}

\newcommand{\ie}{{i.e.~}}       
         
\newcommand{\wrt}{{w.r.t.~}}       
         
\newcommand{\XY}{{$(X, Y)$}}

\newcommand{\PMB}{{$\mathcal{P}_{\mathit{MB}^+(X)}$}}

\newcommand{\PX}{{$\mathcal{P}_{\underline{X}}$}}

\definecolor{MyPinkFill}{HTML}{FDE7EE}   
\definecolor{MyPinkEdge}{HTML}{EE1059}    
\definecolor{MyGreenFill}{HTML}{F3F9EC}   
\definecolor{MyGreenEdge}{HTML}{B3D980}   
\definecolor{MyGoldFill}{HTML}{FEF8EC}    
\definecolor{MyGoldEdge}{HTML}{FDCF72}    
\definecolor{MyPurpleFill}{RGB}{237,235,244}    
\definecolor{MyPurpleEdge}{RGB}{171,163,205}   
\definecolor{MyBlueFill}{HTML}{E6F2FF}   
\definecolor{MyBlueEdge}{HTML}{4DA3FF}   
\definecolor{MyGrayFill}{HTML}{F2F3F5}   
\definecolor{MyGrayEdge}{HTML}{A6ABB3}   

\icmltitlerunning{Local Covariate Selection for Causal Effect Estimation without Pretreatment and Causal Sufficiency Assumptions}

\begin{document}

\twocolumn[
  \icmltitle{Local Covariate Selection for Average Causal Effect Estimation without Pretreatment and Causal Sufficiency Assumptions}

  \icmlsetsymbol{equal}{*}

  \begin{icmlauthorlist}
    \icmlauthor{Zeyu Liu}{aaa}
    \icmlauthor{Zheng Li}{yyy,fd}
    \icmlauthor{Feng Xie}{aaa}
    \icmlauthor{Yan Zeng}{aaa}
    \icmlauthor{Hao Zhang}{yyy}
    \icmlauthor{Kun Zhang}{cmu,mbz}
  \end{icmlauthorlist}

  \icmlaffiliation{aaa}{Department of Applied Statistics, Beijing Technology and Business University, Beijing, China}
  \icmlaffiliation{yyy}{Shenzhen Institute of Advanced Technology, Chinese Academy of Sciences, Shenzhen, China}
  \icmlaffiliation{fd}{College of Computer Science and Artificial Intelligence, Fudan University, Shanghai, China}
  \icmlaffiliation{mbz}{Department of Machine Learning, Mohamed bin Zayed University of Artificial Intelligence, Abu Dhabi, UAE}
  \icmlaffiliation{cmu}{
Carnegie Mellon University,
Pittsburgh, PA, USA
}
 
\icmlcorrespondingauthor{Feng Xie}{fengxie@btbu.edu.cn}
\icmlcorrespondingauthor{Yan Zeng}{yanazeng013@btbu.edu.cn}

  \icmlkeywords{Machine Learning, ICML}

  \vskip 0.3in
]

\printAffiliationsAndNotice{}

\begin{abstract}
We study the problem of selecting covariates for unbiased estimation of the total causal effect.
Existing approaches typically rely on global causal structure learning over all variables, or on strong assumptions such as causal sufficiency—where observed variables share no latent confounders—or the pretreatment assumption, which limits covariates to those unaffected by the treatment or outcome. 
These requirements are often unrealistic in practice, and global learning becomes computationally prohibitive in high-dimensional settings.
To address these challenges, we propose a novel local learning method for covariate selection in nonparametric causal effect estimation that avoids both the pretreatment and causal sufficiency assumptions. 
We first characterize a local boundary that contains at least one valid adjustment set whenever one exists for identifying the causal effect, and then develop local identification procedures to efficiently search within this boundary. 
We prove that the proposed method is sound and complete. 
Experiments on multiple synthetic datasets and two real-world datasets show that our approach achieves accurate causal effect estimation while substantially improving computational efficiency.
\end{abstract}

\vspace{-6mm}
\section{Introduction}
\label{sec:introduction}

Causal effect estimation is central to understanding the impact of interventions across domains such as medicine~\cite{rubin1974estimating}, economics~\cite{angrist1996identification}, social sciences~\cite{spirtes2000causation,pearl2009causality}, and public policy~\cite{imbens2015causal,hernan2020causal}. It addresses questions of the form: What would happen to the outcome $Y$ if we intervened on the treatment $X$?
A widely used strategy for causal effect estimation is covariate adjustment, which seeks a subset of observed variables—known as an adjustment set—to remove confounding bias~\cite{pearl1993comment,maathuis2015generalized,perkovi2018complete}. When the full causal structure is known, valid adjustment sets can be identified using graphical criteria such as the back-door criterion~\cite{pearl1993comment,pearl2009causality} and its generalizations~\cite{maathuis2015generalized,perkovi2018complete}. However, in practice, the underlying causal structure is rarely known in advance.

\begin{figure}[htp!]
    \centering
 
   \includegraphics[width=0.42\textwidth]{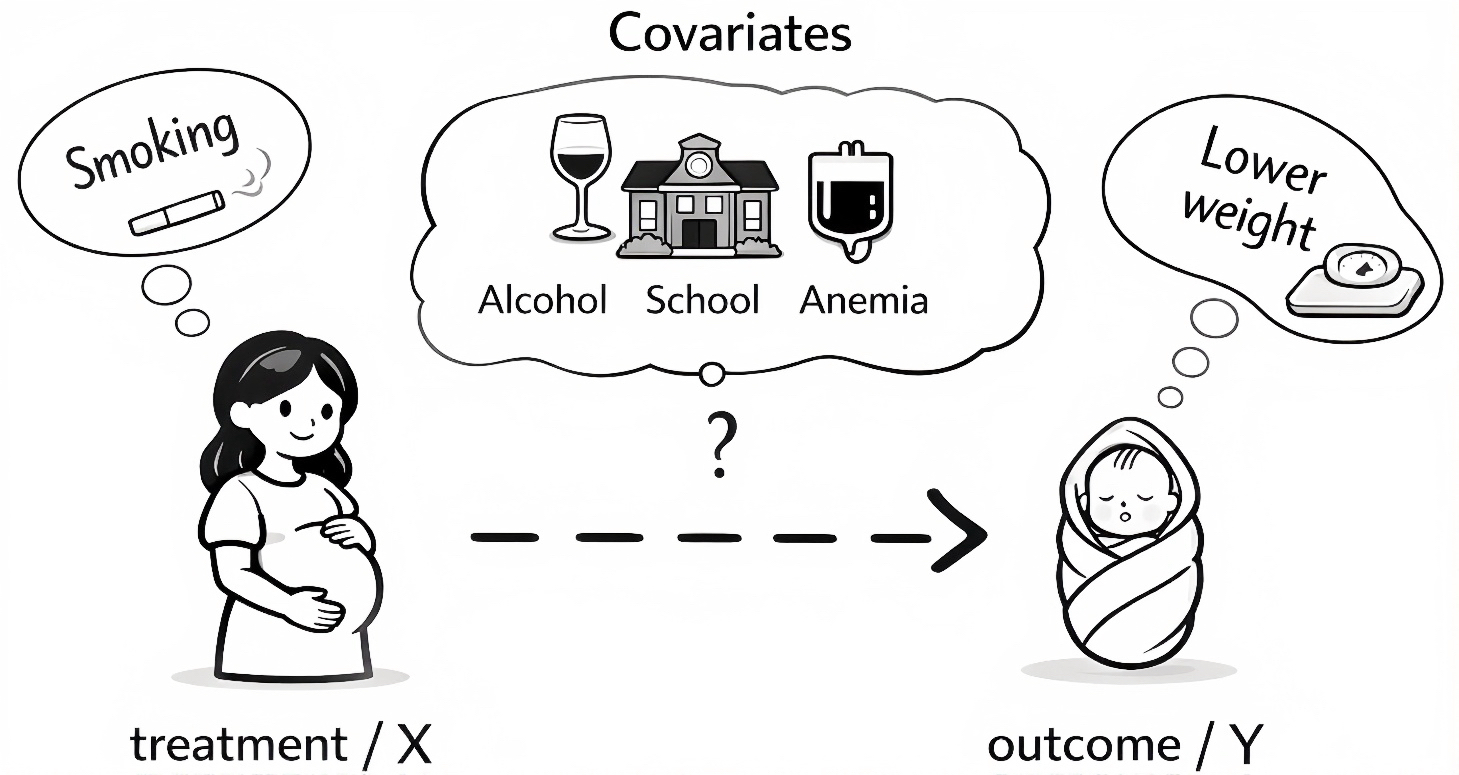}
    \vspace{0.2cm}
    \caption{Illustration of the causal effect of maternal smoking (treatment $X$) on birth weight (outcome $Y$), with the covariates (e.g., alcohol use, education, and anemia) potentially affecting the $X \rightarrow Y$ relationship ~\cite{almond2005costs}.}
    \vspace{-0.3cm}
    \label{fig:conceptual}
\end{figure}

Consider, for example, the effect of maternal smoking during pregnancy on infant birth weight, illustrated in Figure~\ref{fig:conceptual}. Besides the treatment and outcome, many additional variables—such as alcohol use, maternal education, and health conditions like anemia—may influence both smoking behavior and birth outcomes. To obtain an unbiased estimate of the causal effect, one must carefully select a set of covariates that blocks spurious back-door paths without introducing bias through inappropriate adjustment. 
In realistic settings, however, it is often unclear which variables are safe to adjust for, especially when some covariates may be affected by the treatment or share unobserved confounders with other variables.


\begin{figure*}[htbp!]
\centering

\begin{minipage}[t]{0.24\textwidth}
\centering
\begin{tikzpicture}[scale=0.09]
\tikzstyle{every node}+=[inner sep=0pt]
\draw (20,-10) node(V1) [circle, draw=MyGreenEdge, fill=MyGreenFill, minimum size=0.6cm] {$V_{1}$};
\draw (35, -10) node(X)  [circle, draw=MyGoldEdge,  fill=MyGoldFill,  minimum size=0.6cm] {$X$};
\draw (50, -10) node(Y)  [circle, draw=MyGoldEdge,  fill=MyGoldFill,  minimum size=0.6cm] {$Y$};
\draw (20, 2)  node(V2) [circle, draw=MyPinkEdge,  fill=MyPinkFill,  minimum size=0.6cm] {$V_{2}$};
\draw (35, 2)  node(V3) [circle, draw=MyPinkEdge,  fill=MyPinkFill,  minimum size=0.6cm] {$V_{3}$};
\draw (50, 2)  node(V4) [circle, draw=MyPinkEdge,  fill=MyPinkFill,  minimum size=0.6cm] {$V_{4}$};

\draw[-arcsq] (X) -- (Y); 
\draw[arcsq-arcsq] (X) -- (V1);
\draw[arcsq-arcsq] (Y) -- (V4);
\draw[-arcsq] (V4) -- (V1);
\draw[arcsq-arcsq] (V1) -- (V2);
\draw[-arcsq] (V3) -- (V2);
\draw[-arcsq] (V3) -- (V4);
\draw[-arcsq] (V2) -- (Y);
\draw (35,-18) node {(a)};
\end{tikzpicture}
\end{minipage}
\hfill
\begin{minipage}[t]{0.24\textwidth}
\centering
\begin{tikzpicture}[scale=0.09]
\tikzstyle{every node}+=[inner sep=0pt]

\draw (-2, -10) node(X) [circle, draw=MyGoldEdge, fill=MyGoldFill, minimum size=0.6cm] {$X$};
\draw (15, 0) node(V1) [circle, draw=MyBlueEdge, fill=MyBlueFill, minimum size=0.6cm] {$V_{1}$};
\draw (36, 0) node(V2) [circle, draw=MyBlueEdge, fill=MyBlueFill, minimum size=0.6cm] {$V_{2}$};
\draw (36, -10) node(Y) [circle, draw=MyGoldEdge, fill=MyGoldFill, minimum size=0.6cm] {$Y$};

\draw (12, -10) node(V7) [circle, draw=MyBlueEdge, fill=MyBlueFill, minimum size=0.6cm] {$V_{7}$};
\draw (24, -10) node(V8) [circle, draw=MyBlueEdge, fill=MyBlueFill, minimum size=0.6cm] {$V_{8}$};
\draw (-2, 10) node(V4) [circle, draw=MyGreenEdge, fill=MyGreenFill, minimum size=0.6cm] {$V_{4}$};
\draw (9, 10) node(V5) [circle, draw=MyPinkEdge, fill=MyPinkFill, minimum size=0.6cm] {$V_{5}$};
\draw (27, 10) node(V6) [circle, draw=MyPurpleEdge, fill=MyPurpleFill, minimum size=0.6cm] {$V_{6}$};

\draw[-arcsq] (X) -- (V7);
\draw[-arcsq] (V7) -- (V8);
\draw[-arcsq] (V8) -- (Y);
\draw[-arcsq] (X) -- (V1);
\draw[-arcsq] (V1) -- (V2);
\draw[-arcsq] (V2) -- (Y);
\draw[arcsq-arcsq] (V4) -- (X);
\draw[-arcsq] (V5) -- (X);
\draw[arcsq-arcsq] (V5) -- (V2);

\draw[-arcsq] (V6) -- (V1);
\draw (18,-18) node {(b)};
\end{tikzpicture}
\end{minipage}
\hfill
\begin{minipage}[t]{0.24\textwidth}
\centering
\begin{tikzpicture}[scale=0.09]
\tikzstyle{every node}+=[inner sep=0pt]

\draw (4.5, 1) node(X) [circle, draw=MyGoldEdge, fill=MyGoldFill, minimum size=0.6cm] {$X$};
\draw (25, 1) node(Y) [circle, draw=MyGoldEdge, fill=MyGoldFill, minimum size=0.6cm] {$Y$};
\draw (-10, 10) node(V1) [circle, draw=MyGreenEdge, fill=MyGreenFill, minimum size=0.6cm] {$V_{1}$};
\draw (15.5, -9) node(V2) [circle, draw=MyBlueEdge, fill=MyBlueFill, minimum size=0.6cm] {$V_{2}$};
\draw (30.5, -9) node(V6) [circle, draw=MyPurpleEdge, fill=MyPurpleFill, minimum size=0.6cm] {$V_{6}$};

\draw (-10, -9) node(V3) [circle, draw=MyPinkEdge, fill=MyPinkFill, minimum size=0.6cm] {$V_{3}$};
\draw (3, 10) node(V4) [circle, draw=MyPurpleEdge, fill=MyPurpleFill, minimum size=0.6cm] {$V_{4}$};
\draw (18, 10) node(V5) [circle, draw=MyGrayEdge, fill=MyGrayFill, minimum size=0.6cm] {$V_{5}$};

\draw[-arcsq, thick, red] (X) -- (Y); 
\draw[-arcsq] (X) -- (V2);
\draw[-arcsq] (V2) -- (Y);
\draw[arcsq-arcsq] (V3) -- (V1);
\draw[-arcsq] (V3) -- (X);

\draw[-arcsq] (Y) -- (V5);
\draw[-arcsq] (V4) -- (V3);
\draw[arcsq-arcsq] (V4) -- (V5);
\draw[arcsq-arcsq] (V2) -- (V6);
\draw (7.5,-18) node {(c)};
\end{tikzpicture}
\end{minipage}
\hfill
\begin{minipage}[t]{0.24\textwidth}
\centering
\begin{tikzpicture}[scale=0.09]
\tikzstyle{every node}+=[inner sep=0pt]

\draw (20, -10) node(V2) [circle, draw=MyPinkEdge, fill=MyPinkFill, minimum size=0.6cm] {$V_{2}$};
\draw (35, -10) node(V3) [circle, draw=MyPinkEdge, fill=MyPinkFill, minimum size=0.6cm] {$V_{3}$};
\draw (50,-10) node(X)  [circle, draw=MyGoldEdge, fill=MyGoldFill,  minimum size=0.6cm] {$X$};
\draw (20, 2)  node(V1) [circle, draw=MyGreenEdge, fill=MyGreenFill,  minimum size=0.6cm] {$V_{1}$};
\draw (50, 2)  node(Y)  [circle, draw=MyGoldEdge, fill=MyGoldFill,  minimum size=0.6cm] {$Y$};

\draw[-arcsq] (X) -- (Y); 
\draw[-arcsq] (V2) -- (Y);
\draw[-arcsq] (V3) -- (Y);
\draw[arcsq-arcsq] (V3) -- (V2);
\draw[arcsq-arcsq] (V3) -- (X);
\draw[-arcsq] (V1) -- (V2);
\draw (35,-18) node {(d)};
\end{tikzpicture}
\end{minipage}

\caption{Example mixed ancestral graph (MAG) illustrating the processing of $X$ and the resulting outcome $Y$. Red-shaded nodes denote valid adjustment sets.
(a)  Adapted from \citet{cheng2022local}: CEELS fails to select any adjustment set, despite the presence of the COSO variable $V_1$.
(b) The LSAS method, when the pretreatment assumption is not assumed, fails to identify a valid adjustment set within $\mathit{MB}(Y)$.
(c) The Local Partition Discovery (LDP) method \cite{maasch2024local} may fail to identify valid adjustment sets in certain cases, indicating a lack of completeness.
(d) The LDP method fails to identify an adjustment set because its sufficient condition is violated, even though the pretreatment assumption holds. Moreover, this example contains no COSO variable, yet a valid adjustment set can still be found locally; in contrast, CEELS fails to select any adjustment set.}
\label{fig: Figure-two-examples}
\vspace{-4mm}
\end{figure*}

Several data-driven approaches have been proposed to estimate causal effects when the underlying causal graph is unknown. A representative method is IDA~\citep{maathuis2009estimating}, which first learns a CPDAG using the PC algorithm~\citep{spirtes1991PC}, enumerates DAGs in the corresponding Markov equivalence class, and estimates a multiset of possible causal effects. Subsequent extensions~\cite{perkovicinterpreting2017,fang2020ida,fang2025representation} incorporate background knowledge, such as partial edge directions or ancestral relations, to improve estimation precision. Related approaches, such as CovSel~\citep{haggstrom2015covsel}, perform covariate selection based on graphical criteria~\citep{de2011covariate}.
However, the validity of these methods typically relies on the assumption of causal sufficiency, which requires that all common causes of observed variables are measured. This assumption is often violated in practice, where latent confounders are prevalent.

To relax causal sufficiency, LV-IDA~\cite{malinsky2017estimating} extends IDA to settings with latent variables using the generalized back-door criterion~\cite{maathuis2015generalized}. Subsequent work has developed more efficient procedures for handling latent confounding~\citep{hyttinen2015calculus,wang2023estimating,cheng2023toward}. 
Nevertheless, these approaches still rely on learning a global causal graph. Such global learning can be computationally expensive and unnecessary when the goal is to estimate the causal effect between a specific treatment–outcome pair.

\citet{entner2013data} proposed EHS for covariate selection without learning the full causal structure.
However, EHS performs an exhaustive search over all variables, which becomes computationally infeasible in high-dimensional settings. Subsequent work, such as CEELS~\citep{cheng2022local}, improves efficiency through local search, but may miss valid adjustment sets that would be identified by a global search, as illustrated in Figure~\ref{fig: Figure-two-examples}(a). LSAS~\citep{lilocal} was later introduced as a complete method under this framework.
Despite these advances, existing local covariate selection methods rely on the pretreatment assumption—that neither the treatment nor the outcome causally influences the covariates. This assumption is often unrealistic and can lead to incorrect conclusions; for example, LSAS may fail to identify valid adjustment sets when the pretreatment condition is violated, as shown in Figure~\ref{fig: Figure-two-examples}(b).

More recently, \citet{maasch2024local,maasch2025local} proposed the Local Discovery by Partitioning (LDP) method, which relaxes the pretreatment assumption and identifies valid adjustment sets under certain sufficient conditions. However, LDP is not complete: as shown in~\cref{fig: Figure-two-examples}(d), it may fail to return a solution when these conditions are violated, even in cases where the pretreatment assumption actually holds.

\begin{table}[!ht]
\centering
\small
\caption{Comparison of local covariate selection methods. 
Here, \emph{soundness} means that every selected adjustment set is valid for identifying the target causal effect (no false inclusions); 
and \emph{completeness} means that the method returns a valid adjustment set whenever one is identifiable from data (no essential covariates missing).}
\label{tab:lcd-method-comparison}
\vspace{0.3em}
\begin{tabular}{lccc}
\toprule
\textbf{Method} & 
\multirow{2}{*}{\textbf{Soundness}} &
\multirow{2}{*}{\textbf{Completeness}} &
\textbf{Without} \\
\textbf{(local)}
& & & \textbf{Pretreatment} \\
\midrule
CEELS & \cmark & \xmark & \xmark \\
LSAS  & \cmark & \cmark & \xmark \\
LDP   & \cmark & \xmark & \cmark  \\
\rowcolor{red!12}
\textbf{LCS (Ours)} 
  & \textcolor{red}{\cmark}
  & \textcolor{red}{\cmark}
  & \textcolor{red}{\cmark} \\
\bottomrule
\end{tabular}
\end{table}

These limitations highlight the need for a sound and complete local method for adjustment set selection under standard causal assumptions. As summarized in~\cref{tab:lcd-method-comparison}, existing local methods either rely on the pretreatment and causal sufficiency assumptions or are incomplete. Our method fills this gap, and we make the following contributions:

\begin{itemize}
[leftmargin=15pt,itemsep=0pt,topsep=0pt,parsep=0pt]

\item We establish a theoretical characterization of a local boundary that contains all valid adjustment sets without relying on the pretreatment or causal sufficiency assumptions, and develop local identification rules to search within this boundary.

\item Building on these local identification rules and local structure learning techniques, we propose a data-driven algorithm, \textbf{LCS}, for identifying adjustment sets for the target causal effect, and prove that it is sound and complete.

\item We demonstrate the effectiveness and computational efficiency of LCS on multiple synthetic datasets and two real-world datasets.

\end{itemize}

\section{Preliminaries}
\label{sec:preliminaries}
\subsection{Graph Terminology and Notations}

Throughout this paper, we denote sets in bold uppercase letters (e.g., $\mathbf{V}$), graphs in calligraphic font (e.g., $\mathcal{G}$), and nodes or variables in uppercase letters (e.g., $V$). 

\textbf{Directed Graph.}
A graph $\mathcal{G}=(\mathbf{V}, \mathbf{E})$ consists of a set of nodes $\mathbf{V} = \{V_1, \ldots, V_n\}$ and a set of edges $\mathbf{E}$.
The two ends of an edge are called \emph{marks}.
A graph is \emph{\textbf{directed mixed}} if the edges in the graph are \emph{directed} ($\rightarrow$), or \emph{bi-directed} ($\leftrightarrow$).
A node $V_i$ is a \emph{\textbf{parent}}, \emph{\textbf{child}}, or \emph{\textbf{spouse}} of a node $V_j$ if there is $V_i \rightarrow V_j$, $V_i \leftarrow V_j$, or $V_i \leftrightarrow V_j$.
A \emph{\textbf{path}} $\pi$ in $\mathcal{G}$ is a sequence of distinct nodes $\left \langle V_0, \ldots, V_s \right \rangle $ such that for $0 \le i \le  s-1$, $V_i$ and $V_{i+1}$ are adjacent in $\mathcal{G}$.
A \textbf{\emph{directed path}} from $V_i$ to $V_j$ is a path composed of directed edges pointing towards $V_j$, and $V_i$ is called an \emph{\textbf{ancestor}} of $V_j$ and $V_j$ is a \emph{\textbf{descendant}} of $V_i$.
An \emph{\textbf{almost directed cycle}} happens when $V_i$ is both a spouse and an ancestor of $V_j$.
A \emph{\textbf{directed cycle}} happens when $V_i$ is both a child and an ancestor of $V_j$.

\textbf{MAG and PAG.} A directed mixed graph is \emph{\textbf{ancestral}} if it doesn't contain a directed or almost directed cycle. 
A \emph{\textbf{maximal ancestral graph} (MAG, denoted by $\MAG$)} is an ancestral graph, where for any two non-adjacent nodes, there is a set of nodes that m-separates them. 
A MAG is a \textbf{\emph{directed acyclic graph}} (DAG) if it has only directed edges.
The set of MAGs that encode the same m-separation relations forms a \emph{\textbf{Markov equivalence class} (MEC)}, denoted by $[\mathcal{M}]$, which is uniquely characterized by a \emph{\textbf{partial ancestral graph} (PAG)}, denoted by $\mathcal{P}$. In a PAG, a tail `$\--$' or arrowhead `$>$' occurs if the corresponding mark is tail or arrowhead in all the Markov equivalent MAGs, and a circle `$\circ$' occurs otherwise. For convenience, we use an asterisk (*) to denote any possible mark of a PAG ($\circ,>,\--$) or a MAG ($>,\--$). 

\textbf{Possibly relationship and path.} For two vertices $V_i$ and $V_j$ in $\mathcal{P}$, $V_i$ is a \emph{\textbf{possibly parent/possibly child/neighbor}} of $V_j$ if there is $V_i\circ\!\!\rightarrow V_j$/$V_i\leftarrow\!\!\circ V_j$/$V_i\circ\!\!\--\!\!\circ V_j$ in $\mathcal{P}$. 
A \emph{\textbf{possibly directed path}} from $V_i$ to $V_j$ is a path where every edge without an arrowhead at the mark near $V_i$, and $V_i$ is called a \emph{\textbf{possible ancestor}} of $V_j$, and $V_j$ is a \emph{\textbf{possible descendant}} of $V_i$. The set of possible descendants of $V_i$ in $\g$ is denoted by $\PossDe(V_i, \g)$. 
A path $\pi$ from $V_i$ to $V_j$ is a \emph{\textbf{collider path}} if all the passing nodes are colliders on $\pi$, e.g., $V_i \rightarrow V_{i+1} \leftrightarrow \ldots \leftrightarrow V_{j-1} \leftarrow V_j$.

\textbf{Markov Blanket.}
Assuming faithfulness, in a MAG, the Markov blanket of a vertex $X$, noted as $\MB(X, \mathcal{M})$, consists of the set of parents, children, children's parents of $X$, as well as the district of $X$ and of the children of $X$, and the parents of each vertex of these districts, where the district of a vertex $V$ is the set of all vertices reachable from $V$ using only bidirected edges. 

\textbf{Notations.}
We use $\mathit{Adj(V_i)}$, $ \mathit{Pa(V_i)}$, $\mathit{De(V_i)}$ and  $\mathit{PossDe}(V_i)$ to denote the set of \emph{adjacent, parents, descendants} and \emph{possible descendants} of node $V_i$, respectively. 
We define $\mathit{Forb(X, Y)}=\{W^{'}\in \mathbf{V} \mid W^{'} \in \mathit{De(W)}$, $W$ lies on a causal path from $X$ to  $Y$ in $\mathcal{G} \}$. We use the notation $\mathbf{X} \CI \mathbf{Y} | \mathbf{Z}$ for ``$\mathbf{X}$ is statistically independent of $\mathbf{Y}$ given $\mathbf{Z}$”, and $\mathbf{X} \nCI \mathbf{Y} | \mathbf{Z}$ for the negation of the same sentence \citep{dawid1979conditional}. We use \wrt to denote ``with respect to".
$\mathit{MB^{+}(X)} =\{\mathit{MB(X)} \cup X\}$. 

\begin{definition}[\textbf{Visible Edges}]\cite{zhang2008causal}
\label{definition：visiable_edges}
    Given a MAG $\mathcal{M}$ / PAG $\mathcal{P}$, a directed edge $X \rightarrow Y$ in $\mathcal{M}$ / $\mathcal{P}$ is \emph{visible} if there is a node $S$ not adjacent to $Y$, such that 
\begin{itemize}[leftmargin=15pt,itemsep=0pt,topsep=0pt,parsep=0pt]
    \item there is an edge between $S$ and $X$ that is into $X$, or
    \item there is a collider path between $S$ and $X$ that is into $X$ and every non-endpoint node on the path is a parent of $Y$.
\end{itemize}
    Otherwise, $X \rightarrow Y$ is said to be \emph{invisible}. 
\end{definition}

\begin{example}
In Figure~\ref{fig: Figure-two-examples}(c), node $V_3$ can serve as $S$ for the first condition: $V_3$ is not adjacent to $Y$, and the edge between $V_3$ and $X$ is oriented into $X$. So, $X \rightarrow Y$ is \emph{visible}. 
In Figure~\ref{fig: Figure-two-examples}(d), node $V_1$ can serve as $S$ for the second condition: there exists a collider path from $V_1$ to $X$ oriented into $X$, and every non-endpoint node on this path is a parent of $Y$. So, $X \rightarrow Y$ is \emph{visible}.
\end{example}

\begin{definition}[\textbf{Generalized adjustment criterion-GAC}]\cite{perkovi2018complete}
    Let $X$ and $Y$ be two distinct nodes in a MAG or PAG $\g$ over $\vars[O]$. 
    Then a set of nodes $\vars[Z] \subseteq \mathbf{O} \setminus \{X, Y\}$ satisfies the generalized adjustment criterion relative to $(X, Y)$
    in $\mathcal{G}$ if 
    \begin{itemize}
        [leftmargin=15pt,itemsep=0pt,topsep=0pt,parsep=0pt]
        \item $\mathcal{G}$ is adjustment amenable relative to $(X, Y)$, which means that the first edge of every possible directed path from $X$to $Y$ is visible,
        \item $\mathbf{Z} \cap \mathrm{Forb}(X, Y) = \emptyset$, means that Z cannot contain the descendants of any node on any possible directed path from $X$ to $Y$, and
        \item  all definite status non-causal paths from $X$ to $Y$ are blocked by $\mathbf{Z}$. 
    \end{itemize}
       If these conditions hold, then the total causal effect of $X$ on $Y$ is identifiable and is given by
    \begin{equation}
    f(y \mid \mathrm{do}(x)) =
        \begin{cases}
        f(y \mid x) & \text{if~} \mathbf{Z} = \emptyset, \\
        \int_{\mathbf{z}} f(y \mid x, \mathbf{z}) f(\mathbf{z}) \, d\mathbf{z} & \text{otherwise}.
        \end{cases}
    \label{eq:adjustment}
    \end{equation}
\label{def:Generalized adjustment criterion}
\end{definition}
\vspace{-5mm}
The GAC defined here corresponds to the \textbf{valid adjustment set} discussed later in this article; therefore, we will use “valid adjustment set” as the general term throughout.
\subsection{Problem Definition}
\label{sec:problem-definition}
We consider a Structural Causal Model (SCM), as described in \citet{pearl2009causality}, which includes a set of variables $\mathbf{V} = \mathbf{O} \cup \mathbf{U}$, associated with a joint probability distribution $P(\mathbf{V})$. Here, $\mathbf{O}$ represents the set of observed variables, and $\mathbf{U}$ denotes the set of unmeasured latent variables. 
Each variable $V_i \in \mathbf{V}$ is generated by a function $f_i$ that depends on its parents in the Directed Acyclic Graph (DAG, denoted as $\DAG$) and an error term $\varepsilon_i$, \ie, $V_i = f_i(\Pa(V_i, \DAG), \varepsilon_i)$, where $\Pa(V_i, \DAG)$ denotes the parents of $V_i$ in $\DAG$. The error terms $\varepsilon_i$ are assumed to be independent of each other.

\textbf{Goal.}
Given only the observed variables $\mathbf{O}$, where the pretreatment and causal sufficiency assumptions may be violated, we study the problem of estimating the average causal effect of a treatment variable $X \in \mathbf{O}$ on an outcome variable $Y \in \mathbf{O}$. Specifically, we aim to determine whether $X$ has a causal effect on $Y$, and if so, to identify a valid adjustment set $\mathbf{Z}$ for estimating this effect using local learning.

\section{Local Theory for Covariate Selection}  
In this section, we investigate the theoretical foundations for estimating the causal effect of $X$ on $Y$ using only local information derived from observational data, even in the presence of latent variables. Since observational data typically determine only a Markov equivalence class of causal graphs rather than a unique causal structure, different graphs in the equivalence class may imply different causal relations between $X$ and $Y$ \cite{entner2013data}. As a result, causal effect estimation falls into three possible cases, each of which is addressed in the following subsections:

 \begin{itemize}[leftmargin=0pt,label={},itemsep=5pt,topsep=0pt,parsep=0pt]
  \item \textbf{Case 1.} $X$ has a causal effect on $Y$, and the effect is identifiable via a valid adjustment set. (\cref{sec:has-causal-effect})
  \item \textbf{Case 2.} $X$ has no causal effect on $Y$. (\cref{sec:has-no-causal-effect})
  \item \textbf{Case 3.} $X$ may or may not have a causal effect on $Y$, and the effect is not identifiable from observational data alone. (\cref{sec:Non-Identifiable-Effect})
\end{itemize}

\subsection{Identifiable Causal Effect (Case 1)}
\label{sec:has-causal-effect}
For a target variable pair \XY, we first characterize a local boundary for valid adjustment sets using $\MB(X)$.

\begin{theorem}
\label{theo:local-adjust-set—exist}
    Let $\vars[O]$ be the set of observed variables, and let $(X, Y)$ be a pair of target variables in $\vars[O]$. Then, there exists a subset of $\vars[O]$ that is a valid adjustment set for estimating the total causal effect of $X$ on $Y$ if and only if there exists a subset of $\MB(X)$ that is a valid adjustment set for $X$ and $Y$.
\end{theorem}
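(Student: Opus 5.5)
The ``if'' direction is immediate: $\MB(X)\subseteq\vars[O]$, so a valid adjustment set inside $\MB(X)$ is in particular a subset of $\vars[O]$. The work is in the ``only if'' direction. Here I would show that whenever some $\vars[Z]\subseteq\vars[O]\setminus\{X,Y\}$ satisfies the GAC (\cref{def:Generalized adjustment criterion}) relative to $(X,Y)$ in the PAG $\PAG$, then an explicit subset of $\MB(X)$ does too. The natural candidate is
\[
\vars[Z]^{*}=\bigl(\MB(X)\setminus \mathrm{Forb}(X,Y)\bigr)\setminus\{Y\}.
\]
Amenability depends only on the graph and $(X,Y)$, not on the adjustment set, so it carries over automatically. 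The forbidden-set condition also holds by construction. The only thing left to check is blocking: every definite-status non-causal path from $X$ to $Y$ must be blocked by $\vars[Z]^{*}$. I would argue this in each MAG $\MAG\in[\MAG]$, using the standard MAG form of the generalized back-door/adjustment criterion, and then transfer the result back to the PAG in the usual way.

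For blocking, I would use the known reduction of adjustment to a separation statement. If a valid set exists, then the canonical set $\mathrm{Adjust}(X,Y)=\An(\{X,Y\})\setminus(\mathrm{Forb}(X,Y)\cup\{X,Y\})$ is valid. Equivalently, $X$ and $Y$ are m-separated by this set in the proper back-door graph $\g_{\underline{XY}}^{pbd}$, obtained by deleting the first visible edge of each proper causal path out of $X$. So it suffices to exhibit a separator of $X$ and $Y$ in $\g^{pbd}$ that lies inside $\MB(X)$ and avoids Forb.

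The key step takes a non-causal path $\pi=\langle X,V_1,\dots,Y\rangle$ that is m-connecting given $\vars[Z]^{*}$ and derives a contradiction. Two cases arise from the first edge of $\pi$.
\begin{itemize}[leftmargin=15pt,itemsep=0pt,topsep=0pt,parsep=0pt]
\item If the first edge is into $X$, the initial segment of $\pi$ is a collider path starting at $X$ that stays inside the district of $X$ (and its parents) until it meets the first non-collider. Every node up to and including that first non-collider lies in $\MB(X)$, because the Markov blanket contains the district of $X$ and the parents of district members. The first non-collider $W$ is therefore in $\MB(X)$. If $W\notin\mathrm{Forb}$, then $W\in\vars[Z]^{*}$ and $\pi$ is blocked, a contradiction. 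If $W\in\mathrm{Forb}$, ancestrality and acyclicity give a directed or almost-directed cycle through $X$, or else contradict the existence of the original valid $\vars[Z]$.
\item If the first edge is out of $X$, then $X\to V_1$ is visible by amenability, and $\pi$ is non-causal. Its segment starting at $V_1$ either re-enters through children of $X$ and their districts, which again lie in $\MB(X)$, or it shows that $V_1$ is in Forb. In the latter case $\pi$ must contain a collider in $\mathrm{Forb}$, and colliders in Forb are never in $\vars[Z]^{*}$, nor can they have descendants there without those descendants also being in Forb, so $\pi$ is blocked.
\end{itemize}

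The main obstacle is the colliders along $\pi$ that are opened by members of $\vars[Z]^{*}$ rather than by the original $\vars[Z]$. Conditioning on all of $\MB(X)\setminus\mathrm{Forb}$ may open collider paths that $\vars[Z]$ left closed. To handle this I would use the defining property of the Markov blanket as a separator: for every $V\notin\MB^{+}(X)$, $X\CI V\mid\MB(X)$. Concretely, consider the first node of $\pi$ outside $\MB^{+}(X)$. The subpath from $X$ to that node would then be m-connecting given a set containing $\MB(X)\cap\vars[Z]^{*}$, and I would try to contradict m-separation given $\MB(X)$. This is the delicate point, since Forb nodes of $\MB(X)$ are removed from the conditioning set. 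If this step fails, the fallback is to shrink the candidate: take $\vars[Z]^{*}=\MB(X)\cap\mathrm{Adjust}(X,Y)$, and show that for each definite-status back-door path the first blocking node of $\mathrm{Adjust}(X,Y)$ can be replaced by a node of $\MB(X)$ reachable along the collider segment at $X$. Finally, I would verify that everything is invariant across the MAGs in $[\MAG]$, via the visibility and possibly-directed-path conditions of the GAC, to conclude the statement for $\PAG$.
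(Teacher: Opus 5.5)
\textbf{There is a genuine gap.} The set you actually analyse, $\vars[Z]^{*}=(\MB(X)\setminus\mathrm{Forb}(X,Y))\setminus\{Y\}$, is not a valid adjustment set in general. The obstacle you flag at the end is therefore fatal, not merely delicate.

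Take the MAG $P\to X\to Y$, $X\leftrightarrow D\leftrightarrow Y$ (latents $L_1\to X$, $L_1\to D$, $L_2\to D$, $L_2\to Y$). Its PAG differs only in $P\circ\!\!\rightarrow X$; the marks at $D$ are fixed by the discriminating path $\langle P,X,D,Y\rangle$. Here $X\to Y$ is visible via $P$, $\MB(X)=\{P,D,Y\}$, and $\mathrm{Forb}(X,Y)\cap\{P,D\}=\emptyset$. So $\vars[Z]^{*}=\{P,D\}$, which opens the definite-status non-causal path $X\leftrightarrow D\leftrightarrow Y$, whereas $\{P\}$ and $\emptyset$ are valid. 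With zero effect, $X\to C\leftarrow Y$ already gives the invalid $\vars[Z]^{*}=\{C\}$.

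The case analysis goes wrong in three places:
\begin{itemize}
\item Your first bullet tacitly assumes the initial collider segment from $X$ ends at a non-collider. It can instead run all the way to $Y$, and every collider on it lies in $\MB(X)$ and is activated by $\vars[Z]^{*}$.
\item Your second bullet misreads amenability, which only forces visibility of first edges of possibly directed paths to $Y$. Moreover, the children and districts you ``re-enter'' through are exactly what opens such paths.
\item The proposed repair via $X\CI V\mid\MB(X)$ cannot help, since the offending path never leaves $\MBplus(X)$.
\end{itemize}

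Your fallback $B=\MB(X)\cap\mathrm{Adjust}(X,Y)$ is the right object, but you give only a one-line plan for it; the steps that make it work are absent. In a MAG, using the proper back-door graph and the validity of $\mathrm{Adjust}(X,Y)$ (Perkovi\'c et al.), you need two steps.

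(a) Since $B\subseteq\An(\{X,Y\})$, every node of a path $\pi$ that is m-connecting given $B$ lies in $\An(\{X,Y\})$: follow tails from each non-collider to an endpoint or to a collider in $\An(B)$. Hence $\pi$ cannot begin with $X\to V_1$, because $V_1\in\De(X)\cap\An(Y)$ would make this the first edge of a proper causal path, which has been deleted.

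(b) Let $V_k$ be the first non-collider on $\pi$, or $V_k=Y$ if there is none. As $\langle X,V_1,\dots,V_k\rangle$ is a collider path, $V_k\in\MB(X)\cap\An(\{X,Y\})$. So if $V_k\notin\mathrm{Forb}(X,Y)\cup\{Y\}$, it lies in $B$ and blocks $\pi$. Now suppose $V_k\in\mathrm{Forb}(X,Y)$.
\begin{itemize}
\item Ancestrality excludes $k=1$.
\item A tail $V_k\to V_{k-1}$ would place the active collider $V_{k-1}$ in $\mathrm{Forb}(X,Y)$. This is impossible, because $\De(\mathrm{Forb})\subseteq\mathrm{Forb}$ and $B\cap\mathrm{Forb}=\emptyset$.
\item The same reasoning forces the rest of $\pi$ to be a directed path $V_k\to\cdots\to Y$ inside $\mathrm{Forb}(X,Y)$.
\end{itemize}
In both remaining cases ($V_k=Y$ or $V_k\in\mathrm{Forb}$), all non-colliders of $\pi$ avoid $\mathrm{Adjust}(X,Y)$, and all colliders lie in $\An(B)\subseteq\An(\mathrm{Adjust}(X,Y))$. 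So $\pi$ is m-connecting given $\mathrm{Adjust}(X,Y)$, a contradiction. It is the inclusion $B\subseteq\mathrm{Adjust}(X,Y)$, i.e.\ restricting to ancestors of $X$ or $Y$, that removes the newly-opened-collider problem that sinks $\vars[Z]^{*}$.

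Finally, ``transfer back to the PAG in the usual way'' is not automatic. $B$ depends on the MAG through $\An$ and $\mathrm{Forb}$, while the GAC in $\PAG$ needs one set valid in every MAG of $[\MAG]$. If you use possible ancestors instead, step (a) no longer goes through as written. You must either redo it with PAG-level tools, or invoke an explicit construction built from collider paths out of $X$ (e.g.\ the D-SEP-based set of Maathuis and Colombo, 2015), which automatically lies in $\MB(X)$. The paper itself exhibits no set: it reduces validity to m-separation in $\mathcal{P}_{\underline{X}}$ and localizes a separator with the Markov-blanket lemma (Lemma~1).
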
  

\cref{theo:local-adjust-set—exist} implies that if there exists an adjustment set relative to \XY within the full observed variable set $\mathbf{O}$, then there also exists an adjustment set contained entirely in $\MB(X)$.
Conversely, if no subset of $\MB(X)$ constitutes a valid adjustment set, then no subset of $\mathbf{O}$ can serve as a valid adjustment set for \XY.

\begin{remark}
In contrast to the local boundary proposed in \cite{li2024local}, which relies on the \emph{pretreatment assumption} and is defined in terms of $\MB(Y)$, our local boundary is characterized by $\MB(X)$. 
In particular, $\MB(Y)$ is no longer guaranteed to provide a valid adjustment set; see~\cref{example:MB-Y-no-complete} for a counterexample.
\end{remark}

\begin{example}
\label{example:MB-Y-no-complete}
Consider the MAG shown in Figure~\ref{fig: Figure-two-examples}(b) and its corresponding PAG  in~\cref{fig:PAG-MAG-b}, where $\MB(Y) = \{V_2, V_8\}$ and $\MB(X) = \{V_1,V_4,V_5,V_6,V_7\}$. 
Both $V_2$ and $V_8$ lie on a causal path from $X$ to $Y$, and therefore cannot be included in any valid adjustment set. In contrast, the set $\{V_5\}\subset \MB(X)$ constitutes a valid adjustment set for estimating the total causal effect of $X$ on $Y$. 
This example demonstrates that when the pretreatment assumption does not hold, $\MB(Y)$ is not complete to characterize the local boundary of valid adjustment sets, whereas $\MB(X)$ remains complete. 

\begin{figure}[hpt!]
\centering
\begin{minipage}[t]{0.24\textwidth}
\centering
\begin{tikzpicture}[scale=0.09]
\tikzstyle{every node}+=[inner sep=0pt]
\draw (-2, -10) node(X) [circle, draw=MyGoldEdge, fill=MyGoldFill, minimum size=0.6cm] {$X$};
\draw (15, 0) node(V1) [circle, draw=MyBlueEdge, fill=MyBlueFill, minimum size=0.6cm] {$V_{1}$};
\draw (36, 0) node(V2) [circle, draw=MyBlueEdge, fill=MyBlueFill, minimum size=0.6cm] {$V_{2}$};
\draw (36, -10) node(Y) [circle, draw=MyGoldEdge, fill=MyGoldFill, minimum size=0.6cm] {$Y$};
\draw (12, -10) node(V7) [circle, draw=MyBlueEdge, fill=MyBlueFill, minimum size=0.6cm] {$V_{7}$};
\draw (24, -10) node(V8) [circle, draw=MyBlueEdge, fill=MyBlueFill, minimum size=0.6cm] {$V_{8}$};
\draw (-2, 10) node(V4) [circle, draw=MyGreenEdge, fill=MyGreenFill, minimum size=0.6cm] {$V_{4}$};
\draw (9, 10) node(V5) [circle, draw=MyPinkEdge, fill=MyPinkFill, minimum size=0.6cm] {$V_{5}$};
\draw (27, 10) node(V6) [circle, draw=MyPurpleEdge, fill=MyPurpleFill, minimum size=0.6cm] {$V_{6}$};
\draw[-arcsq] (X) -- (V7);
\draw[-arcsq] (V7) -- (V8);
\draw[-arcsq] (V8) -- (Y);
\draw[-arcsq] (X) -- (V1);
\draw[-arcsq] (V1) -- (V2);
\draw[-arcsq] (V2) -- (Y);
\draw[circle-arcsq] (V4) -- (X);
\draw[circle-arcsq] (V5) -- (X);
\draw[-arcsq] (V5) -- (V2);
\draw[circle-arcsq] (V6) -- (V1);
\end{tikzpicture}
\end{minipage}

\caption{The corresponding PAG of the MAG in~\cref{fig: Figure-two-examples}(b)}
\label{fig:PAG-MAG-b}
\vspace{-2mm}
\end{figure}
\end{example}

We now address the problem of locally identifying valid adjustment sets within the local boundary. The identification criteria are formally presented in~\cref{theo: find adjust-R1,theo: find adjust-R2}.

\begin{theorem}[\textbf{\textcolor{blue}{$\mathcal{R}1$} for Locally Searching Adjustment Sets}]
\label{theo: find adjust-R1}
Let $\PAG$ be the PAG over $\vars[O]$, \XY be a pair of ordered target variables. A subset $\mathbf{Z} \subseteq \MB(X)\setminus \PossDe(X,\PAG)$ is a valid adjustment set \wrt $(X,Y)$ if there exists a variable 
$S \in \MB(X) \setminus (\{Y\} \cup \Ch(X,\PAG))$ 
such that (i) $S \nCI Y \mid \mathbf{Z}$ and (ii) $S \CI Y \mid \mathbf{Z} \cup \{X\}$.
\end{theorem}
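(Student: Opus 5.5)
This is an Entner-style "Rule 1" argument. We assume faithfulness, so the independence statements (i) and (ii) translate into m-connection and m-separation statements in every MAG $\mathcal{M}\in[\mathcal{M}]$ represented by $\PAG$. The plan has three steps:
\begin{enumerate}
\item[(a)] show that $\mathbf{Z}$ contains no forbidden node;
\item[(b)] show that $\mathbf{Z}$ blocks every non-causal path from $X$ to $Y$;
\item[(c)] show that amenability holds whenever $X$ is a possible ancestor of $Y$.
\end{enumerate}
By the usual equivalence of GAC in a PAG with GAC in every member MAG \cite{perkovi2018complete}, it suffices to argue in an arbitrary MAG $\mathcal{M}$ of the class.

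Step (a) is immediate. Since $\mathbf{Z}\cap\PossDe(X,\PAG)=\emptyset$, no element of $\mathbf{Z}$ is a descendant of $X$ in any $\mathcal{M}$. Every node of $\mathrm{Forb}(X,Y)$ is a descendant of $X$, so $\mathbf{Z}\cap\mathrm{Forb}(X,Y)=\emptyset$.

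For step (b), suppose some non-causal path $p$ from $X$ to $Y$ is m-connecting given $\mathbf{Z}$ in $\mathcal{M}$. Condition (i) gives a path $q$ from $S$ to $Y$ that is m-connecting given $\mathbf{Z}$. Condition (ii) says that adding $X$ blocks every such path. The standard Entner argument then runs as follows.
\begin{itemize}
\item Because adding $X$ kills $q$, $X$ must be a non-collider on $q$. (The alternative is that $X$ opens nothing new; conditioning on a node can only block a path by lying on it as a non-collider, since ancestral status only adds connections.)
\item So $q$ splits into an $S$–$X$ segment $q_1$ and an $X$–$Y$ segment $q_2$, both m-connecting given $\mathbf{Z}$, with $X$ a non-collider.
\item If $q_2$ starts with an edge out of $X$ and is directed, it is causal. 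Otherwise it is non-causal.
\item We must show every non-causal m-connecting path is impossible. Given such a $p$, concatenate $q_1$ with $p$ to form a walk from $S$ to $Y$, and argue it stays open given $\mathbf{Z}\cup\{X\}$, contradicting (ii). The key case is when $X$ is a collider on the concatenation, i.e.\ $q_1$ is into $X$ and $p$ is into $X$; then conditioning on $X$ opens it.
\item When $p$ starts with $X\to$, the path $p$ is non-causal, so it contains a collider. Re-concatenating with the unopened walk, and using that $X$ is not a descendant of $\mathbf{Z}$-colliders being opened, should again yield an open walk.
\item Walks reduce to paths by the usual lemma.
\end{itemize}
The restriction $S\notin\Ch(X,\PAG)$ is used to make $q_1$ into $X$, or at least to ensure $S$ is not a descendant of $X$. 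Otherwise $q$ could be a path $S\leftarrow X\to\cdots Y$ through the treatment's own descendants, and conditioning on $X$ would block it trivially.

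Step (c) handles adjustment amenability. If some possibly directed path from $X$ starts with an invisible edge $X\to W$, then $X$ and $W$ may share a latent confounder. In the concatenation argument this behaves like an edge into $X$ from the $Y$ side, producing an $S$–$Y$ connection given $\mathbf{Z}\cup\{X\}$ through the collider at $X$, which contradicts (ii). Alternatively, visibility can be obtained directly: the node $S$ with an edge, or a collider path, into $X$ and non-adjacent to $W$ witnesses visibility as in \cref{definition：visiable_edges}. Non-adjacency to $W$ would itself follow from (ii) via m-separation.

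The main obstacle is the concatenation step in (b) when $p$ leaves $X$ by a directed edge but later contains colliders. Making that case rigorous requires controlling the collider nodes' ancestors relative to $\mathbf{Z}\cup\{X\}$ versus $\mathbf{Z}$. I would first try the standard device of choosing $p$ and $q$ minimal and invoking the MAG lemma that an m-connecting walk implies an m-connecting path.
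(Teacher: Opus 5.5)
\textbf{Gap: $S\notin\Ch(X,\PAG)$ does not make $q_1$ into $X$.} Your argument breaks where you use $S\notin\Ch(X,\PAG)$ ``to make $q_1$ into $X$, or at least to ensure $S$ is not a descendant of $X$.'' That hypothesis removes only \emph{definite} children. $S$ can still be a possible child or possible descendant of $X$. Then $q_1$ may leave $X$ by a tail, and conditioning on $X$ blocks $q$ for reasons unrelated to confounding.

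This cannot be patched, because the statement as written is false. Take the DAG $X\to S$, $X\to Y$ with no latents.
\begin{itemize}
\item Its PAG is $S\circ\!\!-\!\!\circ X\circ\!\!-\!\!\circ Y$, so $\MB(X)=\{S,Y\}$ and $\Ch(X,\PAG)=\emptyset$.
\item Since $\PossDe(X,\PAG)\supseteq\{S,Y\}$, we are forced to take $\mathbf{Z}=\emptyset$, while $S$ is admissible.
\item Both (i) $S\nCI Y$ and (ii) $S\CI Y\mid X$ hold.
\item Yet $\emptyset$ is not valid. The possibly directed path $X\circ\!\!-\!\!\circ Y$ does not start with a visible edge. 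In the member MAG $Y\to X\to S$ one has $f(y\mid\mathrm{do}(x))=f(y)\neq f(y\mid x)$.
\end{itemize}
In that MAG the back-door path $X\leftarrow Y$ is open. Concatenating it with $q_1=S\leftarrow X$ leaves $X$ a non-collider, so your step (b) gives no contradiction with (ii). Step (c) breaks the same way.

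Excluding possible children would not rescue the statement. Take $X\to A\to S\to C\leftarrow X$ and $X\to Y$. Then $S$ is a non-adjacent member of $\MB(X)$, (i) and (ii) hold with $\mathbf{Z}=\emptyset$, and $Y\to X\to A\to S\to C\leftarrow X$ is Markov equivalent. The paper's own proof makes the same unjustified move in its Step 2 (``due to the selection scope of $S$, $X$ cannot be a collider'').

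\textbf{What the argument needs.} The right hypothesis is $S\notin\PossDe(X,\PAG)$, the true analogue of the pretreatment condition in Entner et al.'s rule. With it your outline goes through, and the case you flagged as the main obstacle is easy.
\begin{itemize}
\item \emph{$q_1$ and $q_2$.} In any member MAG, suppose $q_1$ left $X$ by a tail. Following it from $X$ toward $S$ would either make $S$ a descendant of $X$ or reach a collider in $\De(X)\cap\An(\mathbf{Z})$. Both are impossible, since $S\notin\PossDe(X,\PAG)$ and $\mathbf{Z}\cap\PossDe(X,\PAG)=\emptyset$. So $q_1$ is into $X$ and $q_2$ starts with $X\to$. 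The same reasoning makes $q_2$ directed, giving $X\in\An(Y)$.
\item \emph{Non-causal paths starting with $X\to$.} Any such path contains a collider in $\De(X)$, so $\mathbf{Z}$ blocks it outright; no concatenation is needed.
\item \emph{Back-door paths.} These are excluded by your concatenation at the collider $X$ together with the walk-to-path lemma.
\item \emph{Amenability.} The route ``$S$ witnesses visibility'' is not available in general, since $S$ need be neither adjacent to $X$ nor joined to it by a collider path. Instead use the fact (Zhang, 2008) that an invisible $X\to W$ is realized by a latent $X\leftarrow L\to W$ in some DAG represented by the same MAG. Rerun the into-$X$ argument in that DAG and concatenate with $X\leftarrow L\to W\to\cdots\to Y$ to contradict (ii).
\end{itemize}
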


Condition (i) indicates active paths from $S$ to $Y$ given $\mathbf{Z}$, while condition (ii) ensures no active paths when adding $\mathbf{Z} \cup \{X\}$. Together, these imply that all active paths from $S$ to $Y$ via $\mathbf{Z}$ must pass through $X$, and adding $X$ blocks them. Hence, $\mathbf{Z}$ is valid according to GAC.

Next, we show that $\mathcal{R}1$ of \cref{theo: find adjust-R1} is not complete for fully identifying valid adjustment sets. This limitation is illustrated by~\cref{example:R1-not-complete}.

\begin{example}
\label{example:R1-not-complete}
Consider the MAG shown in Figure~\ref{fig: Figure-two-examples}(c) and its corresponding PAG in~\cref{fig:PAG-MAG-c}. Here, $\Ch(X,\PAG) = \{V_2, Y\}$, 
$\MB(X) = \{V_2, V_3, V_6, Y\}$, and $\PossDe(X,\PAG) = \{V_2,V_5,Y\}$. However, considering the requirement that $\{S\} \cap \mathbf{Z} = \emptyset$ for conditional independence testing, for any $S \in \{V_3,V_6\}$ and any subset $\mathbf{Z} \subseteq \{V_3,V_6\}$, no combination of $S$ and $\mathbf{Z}$ satisfies the conditions of~\cref{theo: find adjust-R1}.
Nevertheless, from the given PAG, the set $\{V_3\}$ is a valid adjustment set relative to \XY.

\begin{figure}[hpt!]
\centering

\begin{minipage}[t]{0.24\textwidth}
\centering
\begin{tikzpicture}[scale=0.09]
\tikzstyle{every node}+=[inner sep=0pt]
\draw (5.5, 1) node(X) [circle, draw=MyGoldEdge, fill=MyGoldFill, minimum size=0.6cm] {$X$};
\draw (25, 1) node(Y) [circle, draw=MyGoldEdge, fill=MyGoldFill, minimum size=0.6cm] {$Y$};
\draw (-10, 10) node(V1) [circle, draw=MyGreenEdge, fill=MyGreenFill, minimum size=0.6cm] {$V_{1}$};
\draw (16.5, -9) node(V2) [circle, draw=MyBlueEdge, fill=MyBlueFill, minimum size=0.6cm] {$V_{2}$};
\draw (-10, -9) node(V3) [circle, draw=MyPinkEdge, fill=MyPinkFill, minimum size=0.6cm] {$V_{3}$};
\draw (33.5, -9) node(V6) [circle, draw=MyPurpleEdge, fill=MyPurpleFill, minimum size=0.6cm] {$V_{6}$};
\draw (3, 10) node(V4) [circle, draw=MyPurpleEdge, fill=MyPurpleFill, minimum size=0.6cm] {$V_{4}$};
\draw (17, 10) node(V5) [circle, draw=MyGrayEdge, fill=MyGrayFill, minimum size=0.6cm] {$V_{5}$};
\draw[-arcsq, thick, red] (X) -- (Y); 
\draw[-arcsq] (X) -- (V2);
\draw[-arcsq] (V2) -- (Y);
\draw[arcsq-circle] (V3) -- (V1);
\draw[-arcsq] (V3) -- (X);
\draw[-arcsq] (Y) -- (V5);
\draw[circle-arcsq] (V4) -- (V3);
\draw[circle-arcsq] (V6) -- (V2);
\draw[-arcsq] (V4) -- (V5);
\end{tikzpicture}
\end{minipage}

\caption{The corresponding PAG of the MAG in~\cref{fig: Figure-two-examples}(c)}
\label{fig:PAG-MAG-c}
\vspace{-2mm}
\end{figure}
\end{example}

A closer look at these exceptional cases reveals a common structural property: for every edge adjacent to $X$, the mark at the $X$-endpoint is always determined (i.e., not a circle). This insight directly motivates Theorem~\ref{theo: find adjust-R2}. Before presenting it, we first introduce the following definition.

\begin{definition}
\label{def:NCPa}
Let $\PAG_{\MBplus(X)}$ be the induced subgraph of $\MBplus(X)$. The definitely non-collider possible parents of $X$ are the nodes that are adjacent to $X$ via an edge of the form $V_0 \circ\!\!\rightarrow X$, where $V_0$ does not act as a collider on any path within this subgraph. This set is denoted as $\mathrm{NCPa}(X, \PAG_{\MBplus(X)})$.
\end{definition}

\begin{theorem}[\textbf{\textcolor{blue}{$\mathcal{R}2$} for Locally Searching Adjustment Sets}]
\label{theo: find adjust-R2}
Let $\PAG$ be the PAG over $\vars[O]$, $(X,Y)$ be a pair of ordered target variables. Suppose that in the local adjacency structure around $X$, the mark at the $X$-endpoint is always determined. If the causal effect from $X$ to $Y$ is identifiable, then the set $\mathbf{Z} = \Pa(X) \cup \mathrm{NCPa}(X, \PAG_{\MBplus(X)})$ is a valid adjustment set \wrt $(X,Y)$ for identifying the total effect of $X$ on $Y$, and it satisfies $\mathbf{Z} \subseteq \MB(X)\setminus \PossDe(X,\PAG)$ .
\end{theorem}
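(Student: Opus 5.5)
We verify the three clauses of the GAC (\cref{def:Generalized adjustment criterion}) for $\mathbf{Z}=\Pa(X)\cup\mathrm{NCPa}(X,\PAG_{\MBplus(X)})$, working in the PAG $\PAG$. Adjustment amenability is part of the hypothesis that the effect is identifiable, so only two clauses need real work: the forbidden-set condition and the blocking of every definite status non-causal path. First, locality. Every element of $\mathbf{Z}$ is adjacent to $X$, hence lies in $\MB(X)$. By the standing assumption, each adjacent $V$ has a determined mark at $X$, either $V\,{*}\!\!\rightarrow X$ or $V\,{*}\!\!-\!\!\!- X$. Every element of $\mathbf{Z}$ has an arrowhead at $X$, being a parent or of the form $V\circ\!\!\rightarrow X$. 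Since PAGs are ancestral, such a $V$ cannot be a possible descendant of $X$: a possibly directed path from $X$ back to $V$, combined with the arrowhead into $X$, would give a (almost) directed cycle in some MAG of $[\MAG]$. I would invoke the standard PAG fact that a node $V$ with $V\,{*}\!\!\rightarrow X$ is not in $\PossDe(X,\PAG)$. This yields $\mathbf{Z}\subseteq\MB(X)\setminus\PossDe(X,\PAG)$, and since $\mathrm{Forb}(X,Y)\subseteq\PossDe(X,\PAG)$, it also yields $\mathbf{Z}\cap\mathrm{Forb}(X,Y)=\emptyset$.

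The core step is blocking. Take any definite status non-causal path $\pi=\langle X,V_1,\dots,Y\rangle$, and case on the first edge, using the determinacy of marks at $X$.
\begin{itemize}
\item \textbf{Edge out of $X$ with a tail at $X$,} i.e.\ $X\rightarrow V_1$ (or $X-\!\!\!-V_1$). Because $\pi$ is non-causal but starts directed, it must contain a collider. Let $C$ be the first collider along $\pi$; then $C\in\De(X)\subseteq\PossDe(X)$. No descendant of $C$ lies in $\mathbf{Z}$, since $\mathbf{Z}$ avoids $\PossDe(X)$. Hence $\pi$ is blocked, possibly by the empty conditioning at $C$.
\item \textbf{Edge $V_1\leftrightarrow X$.} Here $V_1\notin\mathbf{Z}$ in general. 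I would argue that such a path is blocked given $\mathbf{Z}$, or else that it violates amenability or visibility. If $V_1$ is a collider on $\pi$, it is unconditioned unless some descendant of it lies in $\mathbf{Z}$. This subcase needs care and may require the Markov blanket structure, i.e.\ the district of $X$ and the parents of that district.
\item \textbf{Edge $V_1\rightarrow X$ or $V_1\circ\!\!\rightarrow X$ with $V_1\in\mathbf{Z}$.} Then $V_1$ is a non-collider on $\pi$; for NCPa members this is exactly why the definition requires $V_1$ to be a definite non-collider. Since $V_1\in\mathbf{Z}$, the path is blocked at $V_1$.
\item \textbf{Edge $V_1\circ\!\!\rightarrow X$ with $V_1\notin\mathrm{NCPa}$.} Then $V_1$ is a collider on some path within $\PAG_{\MBplus(X)}$. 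I would show that on $\pi$, $V_1$ is then a collider whose descendants avoid $\mathbf{Z}$, so $\pi$ is blocked there.
\end{itemize}

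I expect the main obstacle to be the bidirected and non-NCPa cases, which amount to showing that $\mathbf{Z}$ opens no collider path. Conditioning on parents of $X$ could activate collider paths $X\leftrightarrow V_1\leftrightarrow\cdots\leftarrow Z$. My first attempt is the standard argument that the set $\Pa(X)\cup$ (district-parents), namely the adjustment set ``$\Pastar$'', satisfies the GAC whenever any set does: if $Z\in\mathbf{Z}$ is a descendant of a collider $C$ on $\pi$, then I would reroute to a shorter definite status non-causal path through $Z$, and that path is blocked at $Z$ itself. An induction on path length in the style of \citet{perkovi2018complete} then closes the argument. I would use the completeness of \cref{theo:local-adjust-set—exist} to restrict attention to paths whose nodes near $X$ lie in $\MBplus(X)$.
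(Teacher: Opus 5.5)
\textbf{There is a genuine gap, and it sits in the case you flagged yourself: the bidirected edge $V_1 \leftrightarrow X$.} Your plan there (show the path is blocked given $\mathbf{Z}$, or that it violates amenability or visibility) cannot be carried out, because the statement is false as written.

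Take a DAG with a latent $L$: $L\to X$, $L\to V$, $S_1\to X$, $S_2\to V$, $X\to Y$, $V\to Y$. Its MAG is $S_1\to X\leftrightarrow V\leftarrow S_2$ together with $X\to Y$ and $V\to Y$. The PAG has unshielded colliders at $X$ and at $V$, and the first FCI rule orients $X\to Y$ and $V\to Y$. This gives
$S_1\circ\!\!\rightarrow X\leftrightarrow V\leftarrow\!\!\circ S_2$, $X\to Y$, $V\to Y$.
Check the hypotheses:
\begin{itemize}
\item Every mark at $X$ is determined.
\item $X\to Y$ is visible via $S_1$, so the PAG is amenable.
\item $\{V\}$ is a valid adjustment set, so the effect is identifiable.
\end{itemize}
However, $\MBplus(X)=\{X,S_1,S_2,V,Y\}$, $\Pa(X)=\emptyset$, and $\mathrm{NCPa}(X,\PAG_{\MBplus(X)})=\{S_1\}$, since $S_1$'s only neighbour in the subgraph is $X$. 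So $\mathbf{Z}=\{S_1\}$, and the definite-status non-causal path $X\leftrightarrow V\rightarrow Y$ stays open. In general, spouses of $X$ are never in $\mathbf{Z}$.

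$\mathcal{R}1$ would fire first here, with $S=S_1$ and $\mathbf{Z}=\{V\}$. But the statement you are proving does not assume $\mathcal{R}1$ fails, so the blocking step cannot be completed for this $\mathbf{Z}$. Any correct version must enlarge $\mathbf{Z}$ to cover the district of $X$. Your fallback via the augmented parent set or district parents would establish validity of a different, larger set, not the one in the statement.

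Your fourth case has an independent error. If $V_1\circ\!\!\rightarrow X$, the mark at $V_1$ on that edge is a circle. So $V_1$ can never be a definite collider on a definite-status path through $X$. It is a definite non-collider outside $\mathbf{Z}$, and $\pi$ is not blocked there. For the same reason, the NCPa condition does no work in your third case: every $V_1\circ\!\!\rightarrow X$ is automatically a non-collider on such a path.

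The paper's own proof does not resolve this either. It asserts that every neighbour of $X$ lies in $\Pa(X)$, $\mathrm{NCPa}$, or $\Ch(X)$, which silently drops spouses. It then appeals to the DAG local Markov property, which fails once bidirected edges are present.

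Your locality argument, the forbidden-set argument, and the case of a first edge out of $X$ are fine. The missing piece is not a technique you overlooked: the spouse case is exactly where the claim breaks.
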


Theorem~\ref{theo: find adjust-R2} addresses cases where a causal effect from $X$ to $Y$ exists, but Theorem~\ref{theo: find adjust-R1} fails to identify a valid adjustment set. In these cases, Theorem~\ref{theo: find adjust-R2} provides a valid adjustment set.

\begin{example}
Now consider Figure~\ref{fig:PAG-MAG-c}. When $\mathcal{R}1$ is not applicable, we find that the situation conforming to $\mathcal{R}2$ is that the marks around $X$ are determined, but $S$ and $\mathbf{Z}$ coincide. In this case, we can use Rule 2 to determine a valid adjustment set $\mathbf{Z} = \{V_3\}$. 
\end{example}

\subsection{No Causal Effect (Case 2)}
\label{sec:has-no-causal-effect}

Next, we consider the scenario where the causal effect is identified as zero. By analyzing local structural features in observational data, we can derive rules that allow us to unambiguously conclude that $X$ has no causal effect on $Y$.

\begin{theorem}[\textbf{\textcolor{blue}{$\mathcal{R}3$} for Locally Identifying No Causal effect}]
\label{theo: find adjust-R3}
    Let $\PAG$ be the PAG over $\vars[O]$, $(X,Y)$ be a pair of ordered target variables. Then, $X$ has no causal effect on $Y$ if there exists a subset  $\mathbf{Z} \subseteq \MB(X)$ with $\mathbf{Z} \cap \PossDe(X,\PAG) = \emptyset$ and a variable $S \in \MB(X) \setminus (\{Y\} \cup \Ch(X,\PAG))$ such that at least one of the following conditions holds: (i) $X \CI Y \mid \mathbf{Z}$, or (ii) $S \nCI X \mid \mathbf{Z}$ and $S \CI Y \mid \mathbf{Z}$.
\end{theorem}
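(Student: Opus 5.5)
We prove the contrapositive for each condition: assuming $X$ has a causal effect on $Y$ in the true MAG $\MAG$ (equivalently, $X\in\An(Y,\MAG)$, so there is a directed path $X\to\cdots\to Y$ that is faithfully reflected in the distribution), we show that neither (i) nor (ii) can hold for any admissible $\mathbf{Z}$ and $S$. Throughout we work with m-separation in $\MAG$ and use faithfulness to move between conditional (in)dependences and m-connection. The key fact we exploit is that $\mathbf{Z}\cap\PossDe(X,\PAG)=\emptyset$. Since every true descendant of $X$ in any MAG of the class is a possible descendant in $\PAG$, no node of $\mathbf{Z}$ lies on, or descends from, a directed path from $X$ to $Y$.

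\textbf{Condition (i).} Take a directed path $\pi: X\to W_1\to\cdots\to Y$ in $\MAG$. Every $W_i$ is a descendant of $X$, so $W_i\notin\mathbf{Z}$. The path has no colliders, so it is m-connecting given $\mathbf{Z}$. Faithfulness then gives $X\nCI Y\mid\mathbf{Z}$, which contradicts (i). Hence (i) implies that there is no causal effect.

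\textbf{Condition (ii).} Suppose $S\nCI X\mid\mathbf{Z}$, and take an m-connecting path $\rho$ from $S$ to $X$ given $\mathbf{Z}$. We concatenate $\rho$ with the directed path $\pi$ from $X$ to $Y$ and aim to show that the resulting walk yields an m-connecting path from $S$ to $Y$ given $\mathbf{Z}$. That would contradict $S\CI Y\mid\mathbf{Z}$.

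The junction at $X$ is the crux. $X\notin\mathbf{Z}$, and the edge of $\pi$ leaves $X$ by a tail. So $X$ is a non-collider on the concatenation, and it is not conditioned on, which means it does not block. The remaining issue is that $\rho$ and $\pi$ may share nodes, so the concatenation may only be a walk rather than a path. We handle this with the standard lemma that an m-connecting walk given $\mathbf{Z}$ contains an m-connecting path given $\mathbf{Z}$. That lemma requires each collider on the walk to be an ancestor of $\mathbf{Z}$. Colliders on $\rho$ are already ancestors of $\mathbf{Z}$, and $\pi$ contributes no colliders.

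Alternatively, we can shortcut: let $W$ be the first node of $\rho$ (from $S$) that lies on $\pi$. We then splice $\rho[S,W]$ with $\pi[W,Y]$. The issue to check is whether $W$ is a collider on $\rho$. If it is, it must be an ancestor of $\mathbf{Z}$. But $W$ is a descendant of $X$, which would make a node of $\mathbf{Z}$ a descendant of $X$, contradicting $\mathbf{Z}\cap\PossDe(X,\PAG)=\emptyset$. So $W$ is a non-collider on $\rho$, and since the spliced path leaves $W$ along $\pi$ by a tail, $W$ is a non-collider not in $\mathbf{Z}$ on the spliced path. Either way we obtain an m-connecting path from $S$ to $Y$ given $\mathbf{Z}$, so $S\nCI Y\mid\mathbf{Z}$, a contradiction.

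\textbf{Main obstacle.} I expect the main obstacle to be the collider bookkeeping on $\rho$ in the case (ii) argument. Colliders on $\rho$ are ancestors of $\mathbf{Z}$, and we must ensure none of them is forced to be a descendant of $X$, and that the splicing does not create a new collider. The restriction $S\notin\{Y\}\cup\Ch(X,\PAG)$ is also relevant here. It rules out the degenerate case where $\rho$ is the single edge $X\to S$ with $S$ on the causal path to $Y$, since then $S$ could be a mediator and $S\CI Y\mid\mathbf{Z}$ would not contradict anything. Before invoking faithfulness I would verify that $S$ is never itself on $\pi$. If $S\in\De(X)$ lies on $\pi$ and is not a definite child, the subpath $\pi[S,Y]$ already m-connects $S$ and $Y$ given $\mathbf{Z}$, which again contradicts $S\CI Y\mid\mathbf{Z}$.
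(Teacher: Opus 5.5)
Your proof is correct and takes essentially the same route as the paper's: argue by contrapositive, note that the directed path $X\to\cdots\to Y$ stays m-connecting given $\mathbf{Z}$ because $\mathbf{Z}\cap\PossDe(X,\PAG)=\emptyset$ excludes every descendant of $X$, and for (ii) splice an $S$--$X$ connecting path onto it. Your first-intersection splicing is more careful than the paper, which ignores shared nodes and asserts without justification that the $S$--$X$ path ends in an arrowhead at $X$; your side remark that $S\notin\Ch(X,\PAG)$ is needed is mistaken, though, since if $S$ is a mediator then $\pi[S,Y]$ (or $S\leftarrow X\to\cdots\to Y$ when $S\notin\pi$) already m-connects $S$ and $Y$ given $\mathbf{Z}$, exactly as your own splicing argument shows.
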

Theorem~\ref{theo: find adjust-R3} provides a local criterion for identifying the absence of a causal effect of $X$ on $Y$. 
Condition (i) implies that conditioning on local variables (excluding descendants of $X$) blocks all active paths between $X$ and $Y$, meaning the total causal effect of $X$ on $Y$ is zero.
Condition~(ii) captures a complementary structural pattern. In a PAG $\PAG$, circular marks indicate ambiguous edge orientations, which may give rise to spurious associations between $X$ and $Y$. In this case, there exists a variable $S$ such that $S \nCI X \mid \mathbf{Z}$ while $S \CI Y \mid \mathbf{Z}$. This implies that, conditional on $\mathbf{Z}$, information can flow from $S$ to $X$ but does not propagate further to $Y$. Consequently, the association between $X$ and $Y$ cannot be explained by a directed causal path from $X$ to $Y$, and thus $X$ has no causal effect on $Y$. We illustrate this scenario with a simple example below.

\begin{example}
\begin{figure}[hpt!]
\centering
\begin{minipage}[t]{0.24\textwidth}
\centering
\begin{tikzpicture}[scale=0.09]
\tikzstyle{every node}+=[inner sep=0pt]
\draw (-10, 10) node(V1) [circle, draw=MyGreenEdge, fill=MyGreenFill, minimum size=0.6cm] {$V_{1}$};
\draw (25, 10) node(V5) [circle, draw=MyGrayEdge, fill=MyGrayFill, minimum size=0.6cm] {$V_{5}$};
\draw (-10, -5) node(V2) [circle, draw=MyGoldEdge, fill=MyGoldFill, minimum size=0.6cm] {$V_2$};
\draw (8, -5) node(V3) [circle, draw=MyGoldEdge, fill=MyGoldFill, minimum size=0.6cm] {$V_3$};
\draw (25, -5) node(V4) [circle, draw=MyGoldEdge, fill=MyGoldFill, minimum size=0.6cm] {$V_4$};

\draw[-arcsq] (V2) -- (V5);
\draw[-arcsq] (V3) -- (V5);
\draw[arcsq-circle] (V2) -- (V1);
\draw[circle-arcsq] (V4) -- (V5);
\draw[circle-arcsq] (V4) -- (V3);
\draw[arcsq-arcsq] (V2) -- (V3);

\end{tikzpicture}
\end{minipage}

\caption{The corresponding PAG of the MAG in~\cref{fig: Figure-two-examples}(d)}
\label{fig:PAG-MAG-d}
\vspace{-2mm}
\end{figure}
Consider the MAG shown in Figure~\ref{fig: Figure-two-examples}(d) and its corresponding PAG in~\cref{fig:PAG-MAG-d}.
In Figure~\ref{fig:PAG-MAG-d}, consider two possible assignments of variables. \textbf{First}, consider the pair $(X, Y) = (V_2, V_4)$. According to the first case of Theorem~\ref{theo: find adjust-R3}, we observe that $X \perp \! \! \! \perp Y \mid \emptyset$. This marginal independence directly implies that the causal effect of $X$ on $Y$ is zero. \textbf{Second}, consider the pair $(X, Y) = (V_2, V_3)$. This pair satisfies the second case of Theorem~\ref{theo: find adjust-R3}. Specifically, there exists a variable $S = V_1$ that fulfills the following conditions: $S \nCI X \mid \emptyset
\quad \text{and} \quad
S \CI Y \mid \emptyset$. 
Consequently, the theorem again allows us to conclude that the causal effect of $X$ on $Y$ is zero.
\end{example}

\subsection{Non-Identifiable (Case 3)}
\label{sec:Non-Identifiable-Effect}
We begin by explaining why case~3 is \emph{non-identifiable}. Under standard assumptions, observational data determine only a Markov equivalence class of causal structures, typically represented by a PAG, which encodes all conditional independencies implied by the underlying distribution \cite{spirtes2000causation,zhang2008causal,entner2013data}. Within a given equivalence class, different causal structures can agree on all observable (in)dependence relations while differing in whether the directed edge $X \to Y$ is present. As a result, certain causal relationships cannot be uniquely identified from observational data alone.

\begin{theorem}
\label{theo: complete}
Under causal Markov and Faithfulness assumptions, the causal effect of $X$ on $Y$ must fall into one of the three identifiable cases characterized by Theorem \ref{theo: find adjust-R1}, Theorem \ref{theo: find adjust-R2}, and Theorem \ref{theo: find adjust-R3}. If none of these conditions hold, then the effect is not identifiable from observable conditional independence and dependence relations.
\end{theorem}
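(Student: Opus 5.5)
The plan is to prove the contrapositive. Suppose none of the conditions of \cref{theo: find adjust-R1}, \cref{theo: find adjust-R2} and \cref{theo: find adjust-R3} hold. We will show that the PAG $\PAG$ (equivalently, the conditional independence and dependence relations it encodes under Markov and faithfulness) is compatible with two MAGs $\MAG_1,\MAG_2\in[\MAG]$. In one of them the effect of $X$ on $Y$ is nonzero. In the other, either the effect is zero or no valid adjustment set exists. Then no function of the observational (in)dependences can determine the effect.

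The reduction to a global statement uses the known characterization of identifiability by adjustment in a PAG \cite{perkovi2018complete}. A valid adjustment set exists in $\PAG$ iff $\PAG$ is adjustment amenable relative to $(X,Y)$ and the canonical set $\mathrm{Adjust}(X,Y,\PAG)$ satisfies the GAC. By \cref{theo:local-adjust-set—exist} we may restrict attention to $\MB(X)$, and by the forbidden-set condition to $\MB(X)\setminus\PossDe(X,\PAG)$.

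The argument is organized as a case split on the marks at $X$.

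\textbf{(a)} Every edge incident to $X$ has a determined mark at $X$. If the effect is identifiable, \cref{theo: find adjust-R2} already produces a set, so it suffices to rule out the case where some valid set exists but $\mathcal{R}2$ does not fire. That case is impossible by the statement of $\mathcal{R}2$. If no valid set exists, the GAC characterization shows that either amenability fails or no blocking set exists. In either case we exhibit two MAGs with different effects.

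\textbf{(b)} Some edge at $X$ carries a circle mark at $X$, for instance $V\circ\!\!-\!\!\circ X$ or $V\circ\!\!\rightarrow X$ with an ambiguous edge toward $Y$. Here we argue as follows. If some valid set $\mathbf{Z}\subseteq\MB(X)\setminus\PossDe(X,\PAG)$ existed and the effect were nonzero, the visibility of the first edges of possible directed paths (amenability) would supply a witness $S$ as in \cref{definition：visiable_edges}: either $S$ has an edge into $X$, or $S$ is joined to $X$ by a collider path into $X$ through parents of $Y$, and in both cases $S$ is not adjacent to $Y$. We then check, via m-separation and faithfulness, that $S\nCI Y\mid\mathbf{Z}$ and $S\CI Y\mid\mathbf{Z}\cup\{X\}$, so $\mathcal{R}1$ would fire. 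Likewise, if the effect is zero in every MAG of $[\MAG]$, we show that either $X$ and $Y$ are separated by some $\mathbf{Z}$ outside $\PossDe(X)$ (condition (i) of $\mathcal{R}3$), or the visible-edge witness $S$ gives condition (ii). Hence failure of all rules leaves a non-amenable configuration or a mixed one. In that situation we construct $\MAG_1$ by orienting the circle marks so that an invisible edge $X\to\cdot$ starts a directed path to $Y$. We construct $\MAG_2$ by orienting them so that no directed path from $X$ to $Y$ exists, using Zhang's result that any circle-component orientation without new unshielded colliders yields a member of $[\MAG]$.

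The main obstacle is this completeness direction in (b). We must show that whenever a valid adjustment set exists, the specific local witness $S\in\MB(X)\setminus(\{Y\}\cup\Ch(X,\PAG))$ really produces the dependence/independence pattern of $\mathcal{R}1$. This needs a careful path analysis: starting from the visibility witness, we use the collider path into $X$ to certify $S\nCI Y\mid\mathbf{Z}$, and we argue that every active path from $S$ to $Y$ given $\mathbf{Z}$ must pass through $X$ as a non-collider. I would first attempt this by adapting the proof of completeness of LSAS \cite{lilocal} and EHS \cite{entner2013data}, replacing their pretreatment arguments with the forbidden-set restriction $\mathbf{Z}\cap\PossDe(X,\PAG)=\emptyset$.
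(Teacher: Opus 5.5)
Your contrapositive fails at the step in (a) where you say that, when no valid set exists, ``in either case we exhibit two MAGs with different effects''. The same problem affects your general promise of an $\MAG_1$ in which the effect is nonzero.

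Take the DAG $A\rightarrow Y\leftarrow B$, $Y\rightarrow X$, with all variables observed. Its PAG is $A\circ\!\!\rightarrow Y\leftarrow\!\!\circ B$, $Y\rightarrow X$; the tail at $Y$ is forced by the first FCI orientation rule, since $A$ and $X$ are non-adjacent. In this PAG:
\begin{itemize}
\item every mark at $X$ is determined, and no valid adjustment set exists, because $X\leftarrow Y$ is an unblockable non-causal path;
\item $\MB(X)=\{Y\}$, so $\MB(X)\setminus(\{Y\}\cup\Ch(X,\PAG))=\emptyset$; hence neither $\mathcal{R}1$ nor $\mathcal{R}3$ can fire, since both quantify over such an $S$, and $\mathcal{R}3$(i) fails anyway because $X$ and $Y$ are adjacent;
\item $\Pa(X)\cup\mathrm{NCPa}(X,\PAG_{\MBplus(X)})=\{Y\}$ is not an admissible set, so $\mathcal{R}2$ (on which your case (a) relies) delivers nothing valid.
\end{itemize}
Yet $Y$ is an ancestor of $X$ in every MAG of $[\MAG]$, so the effect of $X$ on $Y$ is zero in all of them. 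It is therefore identified from the PAG, by the same standard under which the paper counts the adjacent pair $X\leftrightarrow Y$ as an identified zero effect via $\mathcal{R}3$(ii).

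Adding $S\rightarrow X$ and $S\rightarrow Y$ gives a variant in which a candidate exists ($S\circ\!\!\rightarrow X$ in the PAG). But $S$ is adjacent to $Y$, so $\mathcal{R}3$(ii) still fails.

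So no $\MAG_1$ with a nonzero effect exists, and the statement as written cannot be proved. It needs an extra zero-effect clause (e.g.\ $Y$ a definite ancestor of $X$) or a strengthened $\mathcal{R}3$. The paper's own proof has the same hole. Its Case 2 asserts that some $\mathbf{Z}\subseteq\MB(X)\setminus\PossDe(X,\PAG)$ always blocks the active non-causal paths, and that if no suitable $S$ exists then amenability is undecidable. Both claims are false here.

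Two further points would need repair even apart from this.

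First, in (b) you use the visibility witness as if it were non-adjacent to $Y$, with a collider path through parents of $Y$. The definition of a visible edge $X\rightarrow C$ only gives non-adjacency to its head $C$, and a collider path through parents of $C$. When $C\neq Y$ the witness may be adjacent to $Y$. Then $S\CI Y\mid\mathbf{Z}\cup\{X\}$ is impossible, and ``$\mathcal{R}1$ would fire'' does not follow. This is exactly the situation the paper introduces $\mathcal{R}2$ to patch, and $\mathcal{R}2$ is unavailable in your case (b). So the step you defer as ``the main obstacle'' is the whole content of the completeness claim, and it is not supplied.

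Second, your $\MAG_2$ construction presupposes a member of $[\MAG]$ in which $X$ is not an ancestor of $Y$. If amenability fails because a definite directed path from $X$ to $Y$ starts with an invisible edge, no such member exists. Moreover, ``no valid adjustment set in $\MAG_2$'' is not by itself a witness of a different interventional distribution. You would have to pass to a DAG with a latent confounder that realizes the invisible edge.
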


Specifically, this theorem characterizes the identifiability boundary: if $\mathcal{R}1$ (Theorem~\ref{theo: find adjust-R1}) or $\mathcal{R}2$ (Theorem~\ref{theo: find adjust-R2}) applies, the causal relationship is determined as an \emph{Identifiable Non-Zero Effect}, and if $\mathcal{R}3$ (Theorem~\ref{theo: find adjust-R3}) applies, it is identified as an \emph{Identifiable Zero Effect}. Conversely, if none of the rules $\mathcal{R}1$–$\mathcal{R}3$ apply, then, it is impossible to assertain whether $X$ has a causal effect on $Y$ relying solely on the conditional independencies and dependencies among the observed variables; in this case, the effect is \emph{Non-Identifiable} theoretically.
\section{Practical Algorithm}
In this section, we leverage the above local identification rules to propose the \textbf{L}ocal \textbf{C}ovariate \textbf{S}election (LCS) algorithm, which determines whether there exists a causal effect of a variable $X$ on another variable $Y$ and, if so, estimates this effect without bias. Given an ordered variable pair $(X, Y)$, the algorithm consists of the following two key steps:

\begin{enumerate}[label=\textbf{Step \arabic*:}, leftmargin=*,   leftmargin=*, topsep=4pt, itemsep=2pt,  parsep=0pt]
  \item Learn the local structure around $X$ using any local learning algorithm.
  \item Apply the proposed rules to determine a valid adjustment set and estimate the causal effect.
\end{enumerate}

The algorithm uses $\Theta$ to store the estimated causal effect of $X$ on $Y$.  
If $\Theta$ is null, it corresponds to Case~3 (Non-identifiable effect), indicating that the available observational data is insufficient to determine whether $X$ has a causal effect on $Y$.  
If $\Theta = 0$, it corresponds to Case~2 (No causal effect), indicating that $X$ has no causal effect on $Y$.  
Otherwise, a non-zero $\Theta$ corresponds to Case~1 (Non-zero causal effect) and represents the estimated causal effect of $X$ on $Y$.
The complete procedure is summarized in Algorithm~\ref{alg:LCS}. The local structure learning algorithm used is from~\cite{lilocal}, with details provided in Algorithm~\ref{Appendix-alg:PMB} in Appendix~\ref{Appendix: Structure learning algorithm}.

\begin{algorithm}[t]
\caption{Local Covariate Selection (LCS)}
\label{alg:LCS}
\small
\begin{algorithmic}[1]

\INPUT Target variable pair $(X, Y)$, observed data $\mathbf{O}$

\STATE $G_X \gets$ Selected local structure learning algorithm
\STATE Obtain $\MB(X), \Pa(X,\PAG), \mathrm{NCPa}(X, \PAG_{\MBplus(X)}),$
\STATE $\phantom{Obtain  } \Ch(X,\PAG), \PossDe(X,\PAG) \gets G_X$
\STATE $\Theta \gets \emptyset$ \hfill // Initialize causal effect estimate
\IF{$S$ and $\mathbf{Z}$ satisfy \textcolor{blue}{$\mathcal{R}1$} (Theorem~\ref{theo: find adjust-R1})}
    \STATE $\Theta \gets \theta$ \hfill // Case~1: identifiable
    \STATE \textbf{return} $\Theta$
\ENDIF

\IF{the local structure around $X$ satisfies \textcolor{blue}{$\mathcal{R}2$} (Theorem~\ref{theo: find adjust-R2})}
    \STATE $\mathbf{Z} \gets \Pa(X) \cup \mathrm{NCPa}(X, \PAG_{\MBplus(X)})$
    \STATE $\Theta \gets \theta$ \hfill // Case~1: identifiable
    \STATE \textbf{return} $\Theta$
\ENDIF

\IF{$S$ and $\mathbf{Z}$ satisfy \textcolor{blue}{$\mathcal{R}3$} (Theorem~\ref{theo: find adjust-R3})}
    \STATE \textbf{return} $\Theta \gets 0$ \hfill // Case~2: zero causal effect
\ENDIF

\STATE \textbf{return} $\Theta$ 
\hfill // Empty $\Theta$ indicates non-identifiable Case~3

\end{algorithmic}
\end{algorithm}

\begin{theorem}[The Soundness and Completeness of Algorithm \ref{alg:LCS}]
\label{theo: algorithm-lcs}
Assume oracle conditional independence tests. 
Under causal Markov and Faithfulness assumptions, the LCS algorithm correctly returns the causal effect $\Theta$ whenever any of the rules $\mathcal{R}1$, $\mathcal{R}2$, or $\mathcal{R}3$ applies. If none of these rules applies, however, then based on the testable conditional independencies and dependencies among the observed variables, the LCS algorithm cannot determine whether $X$ has a causal effect on $Y$.
\end{theorem}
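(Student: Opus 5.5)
The proof reduces the algorithm's correctness to the rule-level results already stated, together with the correctness of the local structure learner. The first step shows that, under oracle conditional independence tests, the Markov and faithfulness assumptions, Step~1 of Algorithm~\ref{alg:LCS} returns exactly the quantities the rules need: $\MB(X)$, $\Pa(X,\PAG)$, $\Ch(X,\PAG)$, $\PossDe(X,\PAG)$ and $\mathrm{NCPa}(X,\PAG_{\MBplus(X)})$. For this I would invoke the soundness and completeness of the local learning procedure of \citet{lilocal} (Algorithm~\ref{Appendix-alg:PMB}). Under faithfulness, the Markov blanket is recovered exactly by oracle tests. The local PAG around $X$ agrees with the global PAG $\PAG$ on the marks at $X$, on the adjacencies among $\MBplus(X)$, and on the possible-descendant relations used. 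The last point is the delicate part of this step: $\PossDe(X,\PAG)$ is a global notion. I would only need it intersected with $\MB(X)$, since every candidate $\mathbf{Z}$ lies in $\MB(X)$. I would argue that membership of a node of $\MB(X)$ in $\PossDe(X,\PAG)$ can be certified from the learned local structure, possibly after a bounded expansion of the learner's blankets.

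Soundness then follows by case analysis along the control flow. If line 5 fires, the tests are oracle, so conditions (i) and (ii) of Theorem~\ref{theo: find adjust-R1} hold exactly for the returned $S$ and $\mathbf{Z}$. Hence $\mathbf{Z}$ satisfies the GAC of Definition~\ref{def:Generalized adjustment criterion}, and the adjustment formula \eqref{eq:adjustment} yields the true $\Theta$. If line 9 fires, the marks at $X$ are determined and identifiability holds. Theorem~\ref{theo: find adjust-R2} then gives that $\Pa(X)\cup\mathrm{NCPa}(X,\PAG_{\MBplus(X)})$ is valid, and \eqref{eq:adjustment} again applies. If line 14 fires, Theorem~\ref{theo: find adjust-R3} gives zero causal effect, so returning $\Theta=0$ is correct. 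Because the rules are checked in order and each return is justified independently, no conflict arises. I would still note that $\mathcal{R}1$/$\mathcal{R}2$ and $\mathcal{R}3$ cannot both apply with contradictory verdicts, since each is sound on the true graph.

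For the completeness part, suppose none of $\mathcal{R}1$–$\mathcal{R}3$ applies. By Theorem~\ref{theo: complete}, the effect then lies in Case~3 and is not determined by the observable independence model. Theorem~\ref{theo:local-adjust-set—exist} guarantees that restricting the search to $\MB(X)$ loses no valid adjustment set. Therefore, if some adjustment set existed anywhere in $\mathbf{O}$, one would exist in $\MB(X)\setminus\PossDe(X,\PAG)$, and Theorem~\ref{theo: complete} asserts that it would then be detected by $\mathcal{R}1$ or $\mathcal{R}2$. Since the algorithm enumerates all pairs $(S,\mathbf{Z})$ permitted by the rules, failure of every test means no rule applies, and the empty $\Theta$ faithfully reports non-identifiability.

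The main obstacle is the first step: verifying that the locally learned quantities coincide with their global-PAG counterparts, especially $\PossDe(X,\PAG)\cap\MB(X)$ and the definitely non-collider status in Definition~\ref{def:NCPa}. This matters because the rules are stated with respect to the global $\PAG$. I would first try to show that every orientation rule needed for marks at $X$ and at its neighbours fires within $\MBplus(X)$ and its blankets, following the argument for the \citet{lilocal} procedure. If that fails, I would weaken the claim to state correctness relative to whatever the learner guarantees.
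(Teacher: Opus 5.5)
Your reduction is the same as the paper's. Its proof also just asserts that, under an oracle, the local learner recovers the structure around $X$. It then reads soundness off Theorems~\ref{theo: find adjust-R1}--\ref{theo: find adjust-R3} and completeness off Theorems~1 and~\ref{theo: complete}. You are right to flag Step~1 as load-bearing, but the repair you sketch for it cannot work. So the gap remains, and the paper has the same gap, hidden in that assertion.

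The learner's guarantee (Proposition~\ref{proposition: stop-rules}) concerns only $\PAG_{\MBplus(X)}$. But $\PossDe(X,\PAG)\cap\MB(X)$ is fixed neither by that graph nor by any bounded enlargement of it. Take the DAG without latents $S_1\to X\leftarrow S_2$, $X\to A_1\to\cdots\to A_k\to Y\to V\to C$, $X\to C$. Then:
\begin{itemize}
\item $\MB(X)=\{S_1,S_2,A_1,C,V\}$, and the PAG orients every edge except the marks at $S_1,S_2$;
\item $V\in\PossDe(X,\PAG)$ only through $A_2,\dots,A_k,Y$, which lie outside $\MBplus(X)$ and, for large $k$, outside any fixed-radius neighbourhood of it;
\item replacing $Y\to V$ by a v-structure $W\to V\leftarrow W'$ leaves $\PAG_{\MBplus(X)}$ unchanged but removes $V$ from $\PossDe(X,\PAG)$.
\end{itemize}
This affects soundness directly. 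With $S=S_1$ and $\mathbf{Z}=\{V\}$ one has $S_1\nCI Y\mid V$ and $S_1\CI Y\mid\{V,X\}$. So $\mathcal{R}1$ accepts $\{V\}$ as soon as $V$ is treated as a non-possible-descendant, although $V\in\De(Y)\subseteq\mathrm{Forb}(X,Y)$ and adjusting for it is biased.

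Step~1 therefore has to be handled in one of two ways. It can be an explicit hypothesis that the learner returns the \emph{global} possible-descendant status of each node of $\MB(X)$; this is essentially your fallback, and what the paper's proof tacitly assumes. Or it can be replaced by a test-based certificate rather than a structural one. For example, in a DAG with $\Pa(X)$ known, a node $V\notin\Pa(X)$ is a descendant of $X$ iff $X\nCI V\mid\Pa(X)$. With latents and circle marks you would have to prove an analogue.

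The same locality issue recurs at line~9. Theorem~\ref{theo: find adjust-R2} takes identifiability (adjustment amenability, a condition on every possibly directed path from $X$ to $Y$) as a hypothesis rather than testing it. Your step ``if line~9 fires \dots\ identifiability holds'' therefore presupposes a check that neither Algorithm~\ref{alg:LCS} nor your argument (nor the paper's) supplies.

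Apart from these two points, your case analysis along the control flow and your appeal to Theorem~\ref{theo: complete} for the inconclusive case coincide with the paper's proof.
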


Theorem~\ref{theo: algorithm-lcs} establishes that the LCS algorithm is both \emph{sound} and \emph{complete}. 
Soundness means that, given an independence oracle, any inference derived from rules $\mathcal{R}1$, $\mathcal{R}2$, or $\mathcal{R}3$ is guaranteed to be correct whenever the rule is applicable. 
Completeness, on the other hand, characterizes the identifiability boundary: if none of these rules applies, then it is impossible to determine whether $X$ has a causal effect on $Y$ based solely on the observed conditional independencies and dependencies.

This result is consistent with Theorem~5, which formally shows that when $\mathcal{R}1$--$\mathcal{R}3$ fail to apply, the causal effect is not identifiable from the testable conditional independence and dependence relations among observed variables. 
Therefore, when LCS cannot determine whether $X$ has a causal effect on $Y$, this should not be interpreted as a limitation of the algorithm itself, but rather as a fundamental limitation of the available observational information: the causal effect cannot be inferred from this class of observable conditional independence/dependence information alone.

\textbf{Complexity of the Algorithm.}  
Let $n=|\vars[O]|$ denote the number of observed variables. 
The computational cost of the proposed algorithm mainly comes from two components.
First, the algorithm learns a local causal structure over $\MBplus(X)$. Let $|\MBplus|$ denote the maximum size of the expanded Markov blanket encountered during the procedure. The worst-case complexity of this step is bounded by
$
\mathcal{O}\!\left(|\MBplus|^2 2^{|\MBplus|}\right)
$. Notably, typically $|\MBplus|\ll n$, then this step can be substantially more efficient than applying a global structure learner to all observed variables.
Second, the algorithm applies $\mathcal{R}1$--$\mathcal{R}3$ to verify the existence of a valid adjustment set. 
This step performs conditional independence tests within the local boundary of $X$. Its complexity is bounded by $\mathcal{O}\!\left(|\mathbf{S}| \, 2^{|Z|}\right)$, where $|\mathbf{S}|$ is the size of the search space for auxiliary variables, and $|Z|$ is the size of the search space for constructing candidate adjustment sets.
Therefore, the overall complexity of the algorithm is bounded by 
$\mathcal{O}\!\left(|\MBplus|^2 2^{|\MBplus|} + |\mathbf{S}| \, 2^{|Z|}\right)$.  

The worst-case exponential complexity is a direct consequence of the completeness requirement: to ensure no valid adjustment set within the local boundary is overlooked, an exhaustive search over candidate subsets is generally necessary. This challenge is inherent to all complete methods. For instance, the global method EHS exhibits exponential complexity relative to the \textit{entire} set of observed variables. In contrast, local methods like LSAS and LCS significantly reduce this search space. While LSAS is exponential in $|\MB(Y)|$ and relies on the restrictive pretreatment assumption, LCS restricts its exponential search to the local boundary $\MB(X)$ without requiring such an assumption. Consequently, LCS achieves a superior balance between completeness and computational feasibility by localizing the search while maintaining broader applicability.

\section{Experimental Results}
\label{sec:experiments}
\begin{figure*}[b]
  \centering
  \includegraphics[width=\textwidth]{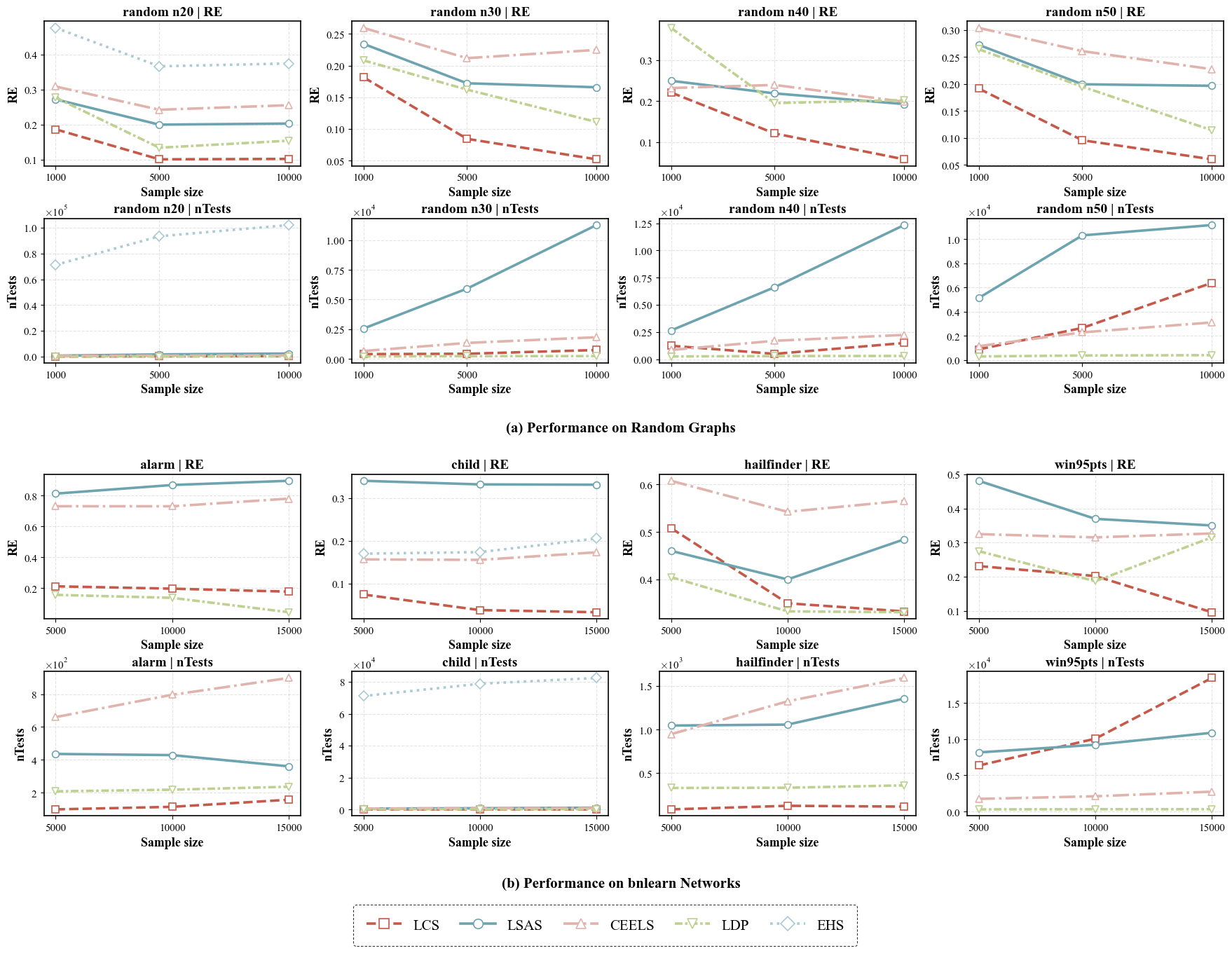}
\caption{
Relative error (RE) and the number of conditional independence tests (nTests) across (a) Random Graphs and (b) bnlearn Benchmark Networks.
Results are shown as functions of sample size.
Lower RE corresponds to higher estimation accuracy, while fewer nTests indicate lower computational cost.
}
  \label{fig:simulation-bnlearn-graph}
\end{figure*}

To verify the accuracy and efficiency of our proposed method, we conducted extensive experiments on both synthetic data and on two real-world datasets. 
We here used the existing implementation of the TC discovery algorithm \citep{pellet2008using} to find the MB of a target variable.
Our source code is included in the supplementary materials.

\textbf{Baselines.}
We compared our method with the following representative approaches. 
The EHS algorithm \citep{entner2013data} performs a global search under the pretreatment assumption without requiring explicit graph learning. 
The CEELS method \citep{cheng2022local} conducts a local search under the same pretreatment assumption. 
The LSAS method \citep{lilocal} also operates under the pretreatment assumption, but relaxes the causal sufficiency assumption. 
In contrast, the LDP method \citep{maasch2024local} further relaxes the pretreatment assumption in the local search setting.

\textbf{Evaluation Metrics.}
We evaluated the performance of algorithms using the following typical metrics(\citet{cheng2022local}, \citet{cheng2023toward}, \citet{li2024local}): \textbf{Relative Error (RE)} and \textbf{nTest}. 
    RE is the relative error (in percentage) between the estimated total causal effect ($\hat{CE}$) and the true total causal effect ($CE$), i.e.,
    $ \mathit{RE} = \left| \left( \hat{CE} - CE \right) / CE \right| \times 100\%$. 
    nTest is the number of (conditional) independence tests implemented by an algorithm.

\subsection{Synthetic Data}

\textbf{Data Setup.}
Following \citet{malinsky2016estimating, entner2013data}, we generate data from linear Gaussian causal models.
Edge coefficients are independently sampled from a Uniform distribution on $[0.5, 1.5]$, and noise terms are drawn from a standard Gaussian distribution.
Causal effects are estimated via linear regression, where the effect of $X$ on $Y$ corresponds to the partial regression coefficient of $X$ \citep{malinsky2017estimating}.
In all experiments, results are reported as the average of 100 independent datasets. For each dataset, latent variables are randomly selected from those nodes that have at least two children, and a target ordered pair $(X,Y)$ is also chosen at random. We performed experiments on two types of graphs, i.e., \textbf{Random Graphs} and \textbf{Benchmark Networks}.

Random graphs are generated using the Erdős-Rényi model $G(n,d)$ \citep{erd6s1960evolution}, with $n \in \{20, 30, 40, 50\}$ and average degree $d=3$. The number of latent variables is set to $10\% \times n$. 

Further, we choose four benchmark Bayesian networks: \texttt{alarm}, \texttt{child}, \texttt{hailfinder}, and \texttt{win95pts}. 
These networks contain 37, 20, 56, and 76 nodes, respectively\footnote{Additional information is available at \url{https://www.bnlearn.com/bnrepository/}.}.
The number of latent variables is set to 2, 2, 7, and 7, respectively.

\subsubsection{Performance on Random Graphs}
\label{sec:random_graphs}

Figure~\ref{fig:simulation-bnlearn-graph} (a) reports the performance on Random Graphs with varying numbers of nodes.
Overall, our proposed \textbf{LCS} method consistently achieves lower relative error while requiring substantially fewer conditional independence tests across different sample sizes and graph scales, demonstrating a favorable trade-off between estimation accuracy and computational cost.  For the EHS method, since the RE and nTests are substantially larger than those of other methods in several settings, we did not display them due to limited space. 

\begin{remark}
   Because LCS includes an initial structure learning stage, the number of conditional independence tests naturally increases as the number of nodes grows, which is an expected consequence of the algorithmic design. 
\end{remark}

\subsubsection{Performance on Benchmark Networks}
\label{sec:Benchmark Graph}

Figure~\ref{fig:simulation-bnlearn-graph} (b) reports the estimation accuracy and computational cost across different Benchmark Networks. 
Regarding \textbf{Relative Error (RE)}, \textbf{LCS} consistently achieves the lowest or near-lowest error across all networks and sample sizes. This advantage is particularly pronounced on larger and more complex networks, such as \texttt{win95pts}, where LCS maintains stable and low RE as the sample size increases. In contrast, LSAS and CEELS exhibit substantially higher errors in these settings, primarily due to the violation of pretreatment assumptions. While LDP benefits from increased sample sizes, it remains notably inferior to LCS and is excluded from the plot as its performance on the \texttt{child} dataset exceeds the current scale.

In terms of \textbf{nTests (Computational Cost)}, LCS requires a higher number of tests on the \texttt{win95pts} network compared to simpler benchmarks. This is expected, as the substantial node count in \texttt{win95pts} necessitates extensive structure learning, naturally leading to an increased number of conditional independence tests. When applicable, EHS shows even higher computational costs alongside unstable performance. Moreover, due to the extremely large number of nTests required by EHS, its results are partially truncated or not fully visible in several panels.

\subsection{Performance on a Real-World Dataset}
\label{sec:real_world_datasets}

\subsubsection{Cattaneo2 dataset}
\begin{figure}[h]
  \centering
  \includegraphics[width=\columnwidth,height=0.45\textheight,keepaspectratio]{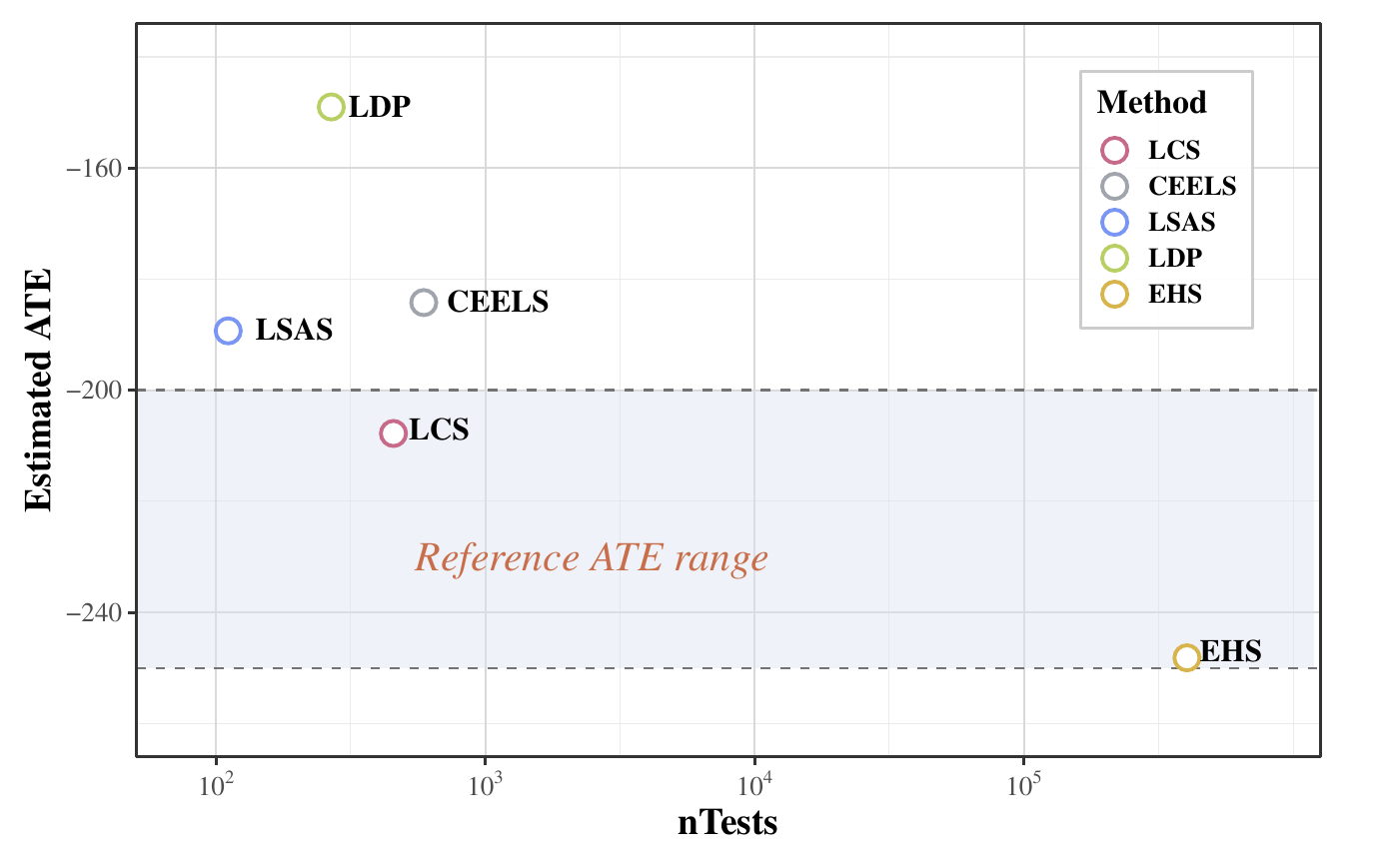}
  \caption{Estimation results based on Cattaneo2 dataset}
  \label{fig:real_data}
\end{figure}

We evaluate our proposed method on a real-world dataset, the Cattaneo2 dataset, which contains birth outcomes for 4,642 singleton deliveries in Pennsylvania, USA \cite{cattaneo2010efficient,almond2005costs}.
Our goal is to estimate the causal effect of maternal smoking during pregnancy ($X$) on infant birth weight ($Y$).
Starting from the 23 covariates commonly used in prior studies, we intentionally extend the feature space by adding one descendant variable of the treatment, resulting in a dataset with 24 observed variables.
This design deliberately violates the pretreatment assumption, creating a more challenging and realistic setting for causal effect estimation.
The covariates capture key maternal and family characteristics, including demographic, educational, and health-related factors.
The dataset is publicly available from the National Center for Health Statistics (NCHS) (\url{https://ftp.cdc.gov/pub/Health_Statistics/NCHS/Datasets/DVS/natality/}).

\textbf{Reference effect.}
Since neither the true causal graph nor the ground-truth causal effect is known for this dataset, we follow the empirical consensus established in \citet{almond2005costs}. Using regression-adjusted and propensity score–based estimators, they report a substantial negative effect of maternal smoking on birth weight, typically ranging between 200 and 250 grams. We therefore treat this interval as a reasonable reference range for assessing the plausibility of estimated effects.

\textbf{Result.}
Figure~\ref{fig:real_data} presents a joint comparison of estimation accuracy and computational complexity on real-world data.
When potential descendants of the treatment are included among the observed variables, substantial differences arise across methods.
Notably, \textbf{LCS} most closely approximates the reference effect while requiring a relatively small number of conditional independence tests.

In contrast, CEELS, LSAS, LDP, and EHS either deviate more from the reference effect or incur substantially higher computational costs.
Overall, these results demonstrate the robustness of \textbf{LCS} under violations of the pretreatment assumption.
By explicitly leveraging local causal structure and avoiding inappropriate adjustment for post-treatment variables, \textbf{LCS} achieves a favorable balance between estimation accuracy and computational efficiency.

\subsubsection{Jobs dataset}

We conduct experiments on the widely used Jobs dataset, which is constructed based on the original LaLonde experimental study \citep{lalonde1986evaluating} and augmented with observational control samples from the Panel Study of Income Dynamics (PSID), following the benchmark setup in \citet{imai2014covariate}. Specifically, the dataset contains 2,936 samples with 10 variables and is commonly used to study the causal effect of a job training program on earnings for treated individuals.

\textbf{Reference effect.} Following \citet{imai2014covariate} and \citet{cheng2020causal}, the ground-truth Average Treatment Effect on the Treated (ATT) is taken as \$886, which is derived from the randomized experimental component of the LaLonde study. This value serves as the reference for evaluating estimation accuracy. We report the relative bias defined as
\[
\text{Bias}(\%) = \frac{|\widehat{\text{ATT}} - \text{ATT}^*|}{|\text{ATT}^*|} \times 100,
\]
where $\text{ATT}^* = \$886$ is the reference effect.

\begin{table}[t]
\centering
\caption{Results on the Jobs dataset.}
\label{tab:att_results}
\begin{tabular}{lccc}
\toprule
Method & ATT & Bias (\%) & nTests \\
\midrule
EHS   & 930.606 & 5.04  & 1732 \\
LSAS  & 989.346 & 11.67 & 176 \\
CEELS & 683.406 & 22.87 & 183 \\
LDP   & -13597.588 & 1635.98 & 61 \\
\midrule 
\textbf{LCS (Ours)} & \textbf{854.528} & \textbf{3.55} & 303 \\
\bottomrule
\end{tabular}
\end{table}

\textbf{Result.}
As shown in Table~\ref{tab:att_results}, the proposed LCS method achieves the lowest estimation bias among all compared methods, indicating that it produces treatment effect estimates closest to the benchmark ATT. Compared with the global method EHS, LCS not only improves estimation accuracy but also requires significantly fewer conditional independence tests, demonstrating better computational efficiency. 

In comparison with local methods such as LSAS, CEELS, and LDP, LCS consistently yields more accurate estimates. Notably, LDP exhibits extremely large bias due to unstable adjustment set selection, highlighting the importance of reliable local structure learning. Overall, these results demonstrate that LCS achieves a favorable balance between accuracy and efficiency, making it well-suited for causal effect estimation in real-world observational settings with potential hidden confounders.

\section{Conclusion}
In this paper, we propose a novel method for locally selecting adjustment sets relative to a target causal relationship. Our method is both sound and complete, without relying on the assumptions of causal sufficiency or pretreatment. Specifically, we characterize the local existence boundary of valid adjustment sets and develop a set of theoretical tools to identify such sets within this boundary. Building upon these theoretical results, we design a fully local algorithm capable of selecting a valid adjustment set for estimating the unbiased total effect of a treatment on an outcome.
We evaluate the proposed method through extensive experiments on both synthetic and real-world datasets, demonstrating its outperformance in terms of both accuracy and computational efficiency compared to existing approaches.
In future work, incorporating expert knowledge into the adjustment set selection process could be explored to further enhance the performance of our method~\cite{perkovicinterpreting2017,fang2020ida,wang2023sound}.

\clearpage
\section*{Acknowledgements}

We appreciate the comments from anonymous reviewers, which greatly helped to improve the paper.
This research was supported by the National Natural Science Foundation of China (62306019, 62472415).
YZ would like to acknowledge the support of Beijing Natural Science Foundation (4264116), Research Foundation for Youth Scholars of Beijing Technology and Business University (RFYS2025), and General Project of Science and Technology Program of Beijing Municipal Education Commission (KM202410011016).
KZ would like to acknowledge the support from NSF Award No. 2229881, AI Institute for Societal Decision Making (AI-SDM), the National Institutes of Health (NIH) under Contract R01HL159805, and grants from Quris AI, Florin Court Capital, and MBZUAI-WIS Joint Program, and the Al Deira Causal Education project.
FX was supported by the China Scholarship Council (CSC), the Beijing Key Laboratory of Applied Statistics and Digital Regulation, and the BTBU Digital Business Platform Project by BMEC.

\section*{Impact Statement}

This paper presents work whose goal is to advance the field of Causal Inference. There are many potential societal consequences of our work, none which we feel must be specifically highlighted here.

\bibliography{example_paper}

\begin{thebibliography}{45}
\providecommand{\natexlab}[1]{#1}
\providecommand{\url}[1]{\texttt{#1}}
\expandafter\ifx\csname urlstyle\endcsname\relax
  \providecommand{\doi}[1]{doi: #1}\else
  \providecommand{\doi}{doi: \begingroup \urlstyle{rm}\Url}\fi

\bibitem[Almond et~al.(2005)Almond, Chay, and Lee]{almond2005costs}
Almond, D., Chay, K.~Y., and Lee, D.~S.
\newblock The costs of low birth weight.
\newblock \emph{The Quarterly Journal of Economics}, 120\penalty0 (3):\penalty0 1031--1083, 2005.

\bibitem[Angrist et~al.(1996)Angrist, Imbens, and Rubin]{angrist1996identification}
Angrist, J.~D., Imbens, G.~W., and Rubin, D.~B.
\newblock Identification of causal effects using instrumental variables.
\newblock \emph{Journal of the American Statistical Association}, 91\penalty0 (434):\penalty0 444--455, 1996.

\bibitem[Cattaneo(2010)]{cattaneo2010efficient}
Cattaneo, M.~D.
\newblock Efficient semiparametric estimation of multi-valued treatment effects under ignorability.
\newblock \emph{Journal of Econometrics}, 155\penalty0 (2):\penalty0 138--154, 2010.

\bibitem[Cheng et~al.(2020)Cheng, Li, Liu, Liu, Yu, and Le]{cheng2020causal}
Cheng, D., Li, J., Liu, L., Liu, J., Yu, K., and Le, T.~D.
\newblock Causal query in observational data with hidden variables.
\newblock In \emph{ECAI 2020}, pp.\  2551--2558. IOS Press, 2020.

\bibitem[Cheng et~al.(2022)Cheng, Li, Liu, Zhang, Liu, and Le]{cheng2022local}
Cheng, D., Li, J., Liu, L., Zhang, J., Liu, J., and Le, T.~D.
\newblock Local search for efficient causal effect estimation.
\newblock \emph{IEEE Transactions on Knowledge and Data Engineering}, 2022.

\bibitem[Cheng et~al.(2023)Cheng, Li, Liu, Yu, Le, and Liu]{cheng2023toward}
Cheng, D., Li, J., Liu, L., Yu, K., Le, T.~D., and Liu, J.
\newblock Toward unique and unbiased causal effect estimation from data with hidden variables.
\newblock \emph{IEEE transactions on neural networks and learning systems}, 34\penalty0 (9):\penalty0 6108--6120, 2023.

\bibitem[Dawid(1979)]{dawid1979conditional}
Dawid, A.~P.
\newblock Conditional independence in statistical theory.
\newblock \emph{Journal of the Royal Statistical Society: Series B (Methodological)}, 41\penalty0 (1):\penalty0 1--15, 1979.

\bibitem[De~Luna et~al.(2011)De~Luna, Waernbaum, and Richardson]{de2011covariate}
De~Luna, X., Waernbaum, I., and Richardson, T.~S.
\newblock Covariate selection for the nonparametric estimation of an average treatment effect.
\newblock \emph{Biometrika}, 98\penalty0 (4):\penalty0 861--875, 2011.

\bibitem[Entner et~al.(2013)Entner, Hoyer, and Spirtes]{entner2013data}
Entner, D., Hoyer, P., and Spirtes, P.
\newblock Data-driven covariate selection for nonparametric estimation of causal effects.
\newblock In \emph{Artificial intelligence and statistics}, pp.\  256--264. PMLR, 2013.

\bibitem[Erd\H{o}s \& R{\'e}nyi(1960)Erd\H{o}s and R{\'e}nyi]{erd6s1960evolution}
Erd\H{o}s, P. and R{\'e}nyi, A.
\newblock On the evolution of random graphs.
\newblock \emph{Publications of the Mathematical Institute of the Hungarian Academy of Sciences}, 5:\penalty0 17--61, 1960.

\bibitem[Fang \& He(2020)Fang and He]{fang2020ida}
Fang, Z. and He, Y.
\newblock Ida with background knowledge.
\newblock In \emph{Conference on Uncertainty in Artificial Intelligence}, pp.\  270--279. PMLR, 2020.

\bibitem[Fang et~al.(2025)Fang, Zhao, Liu, and He]{fang2025representation}
Fang, Z., Zhao, R., Liu, Y., and He, Y.
\newblock On the representation of pairwise causal background knowledge and its applications in causal inference.
\newblock \emph{Journal of Machine Learning Research}, 26\penalty0 (229):\penalty0 1--73, 2025.

\bibitem[H{\"a}ggstr{\"o}m et~al.(2015)H{\"a}ggstr{\"o}m, Persson, Waernbaum, and de~Luna]{haggstrom2015covsel}
H{\"a}ggstr{\"o}m, J., Persson, E., Waernbaum, I., and de~Luna, X.
\newblock Covsel: An r package for covariate selection when estimating average causal effects.
\newblock \emph{Journal of Statistical Software}, 68:\penalty0 1--20, 2015.

\bibitem[Hernán \& Robins(2020)Hernán and Robins]{hernan2020causal}
Hernán, M.~A. and Robins, J.~M.
\newblock \emph{Causal Inference: What If}.
\newblock Chapman \& Hall/CRC, 2020.

\bibitem[Hyttinen et~al.(2015)Hyttinen, Eberhardt, and J{\"a}rvisalo]{hyttinen2015calculus}
Hyttinen, A., Eberhardt, F., and J{\"a}rvisalo, M.
\newblock Do-calculus when the true graph is unknown.
\newblock In \emph{Proceedings of the Thirty-First Conference on Uncertainty in Artificial Intelligence}, pp.\  395--404, 2015.

\bibitem[Imai \& Ratkovic(2014)Imai and Ratkovic]{imai2014covariate}
Imai, K. and Ratkovic, M.
\newblock Covariate balancing propensity score.
\newblock \emph{Journal of the Royal Statistical Society Series B: Statistical Methodology}, 76\penalty0 (1):\penalty0 243--263, 2014.

\bibitem[Imbens \& Rubin(2015)Imbens and Rubin]{imbens2015causal}
Imbens, G.~W. and Rubin, D.~B.
\newblock Causal inference for statistics, social, and biomedical sciences: An introduction.
\newblock \emph{Cambridge University Press}, 2015.

\bibitem[LaLonde(1986)]{lalonde1986evaluating}
LaLonde, R.~J.
\newblock Evaluating the econometric evaluations of training programs with experimental data.
\newblock \emph{The American economic review}, pp.\  604--620, 1986.

\bibitem[Li et~al.(2025{\natexlab{a}})Li, Guo, Xie, Zeng, Zhang, and Geng]{li2024local}
Li, Z., Guo, X., Xie, F., Zeng, Y., Zhang, H., and Geng, Z.
\newblock Local learning for covariate selection in nonparametric causal effect estimation with latent variables.
\newblock In \emph{Advances in Neural Information Processing Systems}, 2025{\natexlab{a}}.

\bibitem[Li et~al.(2025{\natexlab{b}})Li, Liu, Xie, Zhang, Liu, and Geng]{lilocal}
Li, Z., Liu, Z., Xie, F., Zhang, H., Liu, C., and Geng, Z.
\newblock Local identifying causal relations in the presence of latent variables.
\newblock In \emph{Forty-second International Conference on Machine Learning}, 2025{\natexlab{b}}.

\bibitem[Maasch et~al.(2024)Maasch, Pan, Gupta, Kuleshov, Gan, and Wang]{maasch2024local}
Maasch, J., Pan, W., Gupta, S., Kuleshov, V., Gan, K., and Wang, F.
\newblock Local discovery by partitioning: Polynomial-time causal discovery around exposure-outcome pairs.
\newblock In \emph{Uncertainty in Artificial Intelligence}, pp.\  2350--2382. PMLR, 2024.

\bibitem[Maasch et~al.(2025)Maasch, Gan, Chen, Orfanoudaki, Akpinar, and Wang]{maasch2025local}
Maasch, J., Gan, K., Chen, V., Orfanoudaki, A., Akpinar, N.-J., and Wang, F.
\newblock Local causal discovery for structural evidence of direct discrimination.
\newblock In \emph{Proceedings of the AAAI Conference on Artificial Intelligence}, volume~39, pp.\  19349--19357, 2025.

\bibitem[Maathuis \& Colombo(2015)Maathuis and Colombo]{maathuis2015generalized}
Maathuis, M.~H. and Colombo, D.
\newblock A generalized back-door criterion.
\newblock \emph{The Annals of Statistics}, 43\penalty0 (3):\penalty0 1060--1088, 2015.

\bibitem[Maathuis et~al.(2009)Maathuis, Kalisch, and B{\"u}hlmann]{maathuis2009estimating}
Maathuis, M.~H., Kalisch, M., and B{\"u}hlmann, P.
\newblock Estimating high-dimensional intervention effects from observational data.
\newblock \emph{The Annals of Statistics}, 37\penalty0 (6A):\penalty0 3133--3164, 2009.

\bibitem[Malinsky \& Spirtes(2016)Malinsky and Spirtes]{malinsky2016estimating}
Malinsky, D. and Spirtes, P.
\newblock Estimating causal effects with ancestral graph markov models.
\newblock In \emph{Conference on Probabilistic Graphical Models}, pp.\  299--309. PMLR, 2016.

\bibitem[Malinsky \& Spirtes(2017)Malinsky and Spirtes]{malinsky2017estimating}
Malinsky, D. and Spirtes, P.
\newblock Estimating bounds on causal effects in high-dimensional and possibly confounded systems.
\newblock \emph{International Journal of Approximate Reasoning}, 88:\penalty0 371--384, 2017.

\bibitem[Pearl(1988)]{pearl1988Probaili}
Pearl, J.
\newblock \emph{Probabilistic reasoning in intelligent systems: networks of plausible inference}.
\newblock Morgan Kaufmann Publishers Inc., San Francisco, CA, USA, 1988.
\newblock ISBN 0934613737.

\bibitem[Pearl(1993)]{pearl1993comment}
Pearl, J.
\newblock Comment: graphical models, causality and intervention.
\newblock \emph{Statistical Science}, 8\penalty0 (3):\penalty0 266--269, 1993.

\bibitem[Pearl(2000)]{pearl2000causality}
Pearl, J.
\newblock \emph{Causality: Models, Reasoning, and Inference}.
\newblock Cambridge University Press, 2000.

\bibitem[Pearl(2009)]{pearl2009causality}
Pearl, J.
\newblock \emph{Causality}.
\newblock Cambridge university press, 2009.

\bibitem[Pellet \& Elisseeff(2008{\natexlab{a}})Pellet and Elisseeff]{pellet2008finding}
Pellet, J.-P. and Elisseeff, A.
\newblock Finding latent causes in causal networks: an efficient approach based on markov blankets.
\newblock \emph{Advances in Neural Information Processing Systems}, 21, 2008{\natexlab{a}}.

\bibitem[Pellet \& Elisseeff(2008{\natexlab{b}})Pellet and Elisseeff]{pellet2008using}
Pellet, J.-P. and Elisseeff, A.
\newblock Using markov blankets for causal structure learning.
\newblock \emph{Journal of Machine Learning Research}, 9\penalty0 (7), 2008{\natexlab{b}}.

\bibitem[Perkovic et~al.(2017)Perkovic, Kalisch, and Maathuis]{perkovicinterpreting2017}
Perkovic, E., Kalisch, M., and Maathuis, M.~H.
\newblock Interpreting and using cpdags with background knowledge.
\newblock In \emph{Proceedings of the 24th Conference on Uncertainty in Artificial Intelligence, UAI 2017}. AUAI Press, 2017.

\bibitem[Perkovi{\'c} et~al.(2018)Perkovi{\'c}, Textor, Kalisch, Maathuis, et~al.]{perkovi2018complete}
Perkovi{\'c}, E., Textor, J., Kalisch, M., Maathuis, M.~H., et~al.
\newblock Complete graphical characterization and construction of adjustment sets in markov equivalence classes of ancestral graphs.
\newblock \emph{Journal of Machine Learning Research}, 18\penalty0 (220):\penalty0 1--62, 2018.

\bibitem[Richardson(2003)]{richardson2003markov}
Richardson, T.
\newblock Markov properties for acyclic directed mixed graphs.
\newblock \emph{Scandinavian Journal of Statistics}, 30\penalty0 (1):\penalty0 145--157, 2003.

\bibitem[Richardson \& Spirtes(2002)Richardson and Spirtes]{richardson2002ancestral}
Richardson, T. and Spirtes, P.
\newblock Ancestral graph markov models.
\newblock \emph{The Annals of Statistics}, 30\penalty0 (4):\penalty0 962--1030, 2002.

\bibitem[Rubin(1974)]{rubin1974estimating}
Rubin, D.~B.
\newblock Estimating causal effects of treatments in randomized and nonrandomized studies.
\newblock \emph{Journal of educational Psychology}, 66\penalty0 (5):\penalty0 688, 1974.

\bibitem[Spirtes \& Glymour(1991)Spirtes and Glymour]{spirtes1991PC}
Spirtes, P. and Glymour, C.
\newblock An algorithm for fast recovery of sparse causal graphs.
\newblock \emph{Social science computer review}, 9\penalty0 (1):\penalty0 62--72, 1991.

\bibitem[Spirtes et~al.(2000)Spirtes, Glymour, and Scheines]{spirtes2000causation}
Spirtes, P., Glymour, C., and Scheines, R.
\newblock \emph{Causation, Prediction, and Search}.
\newblock MIT press, 2000.

\bibitem[Wang et~al.(2023{\natexlab{a}})Wang, Qin, and Zhou]{wang2023estimating}
Wang, T.-Z., Qin, T., and Zhou, Z.-H.
\newblock Estimating possible causal effects with latent variables via adjustment.
\newblock In \emph{International Conference on Machine Learning}, pp.\  36308--36335. PMLR, 2023{\natexlab{a}}.

\bibitem[Wang et~al.(2023{\natexlab{b}})Wang, Qin, and Zhou]{wang2023sound}
Wang, T.-Z., Qin, T., and Zhou, Z.-H.
\newblock Sound and complete causal identification with latent variables given local background knowledge.
\newblock \emph{Artificial Intelligence}, 322:\penalty0 103964, 2023{\natexlab{b}}.
\newblock ISSN 0004-3702.

\bibitem[Xie et~al.(2024)Xie, Li, Wu, Zeng, Chunchen, and Geng]{xie2024local}
Xie, F., Li, Z., Wu, P., Zeng, Y., Chunchen, L., and Geng, Z.
\newblock Local causal structure learning in the presence of latent variables.
\newblock In \emph{Forty-first International Conference on Machine Learning}, 2024.

\bibitem[Zhang(2006)]{zhang2006causal}
Zhang, J.
\newblock \emph{Causal inference and reasoning in causally insufficient systems}.
\newblock PhD thesis, CMU, 2006.

\bibitem[Zhang(2008{\natexlab{a}})]{zhang2008causal}
Zhang, J.
\newblock Causal reasoning with ancestral graphs.
\newblock \emph{Journal of Machine Learning Research}, 9\penalty0 (7), 2008{\natexlab{a}}.

\bibitem[Zhang(2008{\natexlab{b}})]{zhang2008completeness}
Zhang, J.
\newblock On the completeness of orientation rules for causal discovery in the presence of latent confounders and selection bias.
\newblock \emph{Artificial Intelligence}, 172\penalty0 (16-17):\penalty0 1873--1896, 2008{\natexlab{b}}.

\end{thebibliography}
\bibliographystyle{icml2026}

\newpage

\crefalias{section}{appendix}
\crefalias{subsection}{appendix}
\crefalias{subsubsection}{appendix}
\onecolumn

\section*{Appendix Contents}

\newcommand{\appitem}[3]{
  \noindent
  \hyperref[#1]{
  \textbf{#2}\quad#3}
  \dotfill
  \pageref{#1}
}

\newcommand{\appsubitem}[3]{
  \noindent
  \hyperref[#1]{
  \hspace*{14pt}     
  \textbf{#2}\quad#3}
  \dotfill
  \pageref{#1}
}

\medskip

\appitem{Appendix-Preliminaries}{A}{More Details on  Preliminaries}

\appsubitem{Symbol list}{A.1}{Notation list}

\appsubitem{Detailed Definitions about Graph}{A.2}{Detailed definitions about the graph}

\medskip
\appitem{Appendix:proofs}{B}{Proofs}

\appsubitem{Appendix:proof-theorem-1}{B.1}{Proof of~\cref{theo:local-adjust-set—exist}}

\appsubitem{Appendix:proof-theorem-2}{B.2}{Proof of~\cref{theo: find adjust-R1}}

\appsubitem{Appendix:proof-theorem-3}{B.3}{Proof of~\cref{theo: find adjust-R2}}

\appsubitem{Appendix:proof-theorem-4}{B.4}{Proof of~\cref{theo: find adjust-R3}}

\appsubitem{Appendix:proof-theorem-5}{B.5}{Proof of~\cref{theo: complete}}

\appsubitem{Appendix:proof-theorem-6}{B.6}{Proof of~\cref{theo: algorithm-lcs}}

\medskip
\appitem{Appendix:Algorithm Description}{C}{Algorithm Description}

\appsubitem{Appendix: bnlearn-example}{C.1}{Examples of algorithms and rules based on the bnlearn network Mildew}

\appsubitem{Appendix: Structure learning algorithm}{C.2}{More details on the structure learning algorithm}

\appendix
\onecolumn
\newpage
\section{More Details on  Preliminaries}
\label{Appendix-Preliminaries}
\subsection{Notation list}
\label{Symbol list}
\begin{center}
  \begin{table}[htp]
    \centering
    \small
    \caption{The list of main notations used in this paper}
    \resizebox{0.98\textwidth}{!}
    {\begin{tabular}{p{6cm}p{10cm}}
    \toprule
    \textbf{Notation} & \textbf{Description}\\ 
    \midrule
    $G$                 & A directed acyclic graph (DAG)\\
    $\mathcal{M}$       & A Maximal Ancestral Graph (MAG)\\
    $\mathcal{P}$       & A Partial Ancestral Graph (PAG)\\
    $[\mathcal{M}]$       & A class of Markov equivalent MAGs\\
    $[\mathcal{P}]$       & The Markov equivalence class represented by PAG\\
    $\mathbf{V}$      & The set of all variables \\
    $\mathbf{O}$    & The set of observed variables  \\
    $\mathbf{L}$       & The set of latent variables   \\
    $\MB(X, \MAG)$ & The {{Markov blanket}} of a vertex $X$ in a MAG $\MAG$\\
    $\MB(X, \PAG)$, $\MB(X)$  & The {{Markov blanket}} of a vertex $X$ in a PAG $\PAG$, and we use $\MB(X)$ to denote $\MB(X,\PAG)$
    when there is no loss in clarity\\
    $\MBplus(X)$  & The set of $\{\MB(X) \cup X\}$ \\    
    $\Pa(X, \PAG)$, $\Ch(X, \PAG)$  & The set of all parents ($\rightarrow X$) and children ($\leftarrow X$) of $X$ in a PAG $\PAG$, respectively \\
    $\Adj(X,\g)$  & The set of all variables that have direct edges connected to $X$\\
$\mathrm{NCPa}(X, \PAG_{\MBplus(X)})$
& The set of possible parents of $X$ that cannot act as colliders in $\PAG_{\MBplus(X)}$, as defined in Definition~\ref{def:NCPa}.
\\
    $\Pastar(X, \PAG)$ & The augmented parent set of $X$ in a PAG $\PAG$\\ 
    $\An(X, \MAG)$, $\De(X, \MAG)$  & The set of all ancestors and descendants of $X$ in a MAG $\MAG$, respectively \\
    
$\mathrm{Dis}(X, \MAG)$ & The district of $X$\\    
$\mathrm{Forb}(X, Y)$ &  The set of all possible descendants of $W$, and $W$ lies on a possibly directed path from $X$ to $Y$ in $\g$\\
    $\mathbf{X} \CI \mathbf{Y} | \mathbf{Z}$ & $\mathbf{X}$ is statistically independent of $\mathbf{Y}$ given $\mathbf{Z}$.\\
    $\mathbf{X} \nCI \mathbf{Y} | \mathbf{Z}$ & $\mathbf{X}$ is not statistically independent of $\mathbf{Y}$ given $\mathbf{Z}$\\    
    $R_{\underline{X}} = R_{\underline{X}}(\mathcal{R}, \mathcal{G}, X)$ 
    &
    The graph obtained from $\mathcal{R}$ by removing all directed edges out of $X$ that are visible in $\mathcal{G}$\\
$\mathcal{G}_{\underline{X}}$     
    & The graph obtained from $\mathcal{G}$ by removing all visible directed edges out of $X$ in $\mathcal{G}$\\  
    
    \bottomrule
    \end{tabular}}\label{table-list-symbols}
    \end{table}  
\end{center}

\subsection{Detailed definitions about the graph} 
\label{Detailed Definitions about Graph}

\begin{definition}[\textbf{m-connecting} \cite{spirtes2000causation, zhang2008completeness}]
    In a directed mixed graph, a path $\pi$ between vertices $X$ and $Y$ is \textbf{m-connecting (acitve)} relative to a (possibly empty) set of vertices $\mathbf{Z}$ ($X, Y \notin \mathbf{Z}$) if
    (1) every non-collider on $\pi$ is not a member of $\mathbf{Z}$; (2) every collider on $\pi$ has a descendant in $\mathbf{Z}$.
\end{definition}
$\mathbf{X}$ and $\mathbf{Y}$ are said to be \textbf{m-separated} by $\mathbf{Z}$ if there is no active path between any vertex in $\mathbf{X}$ and any vertex in $\mathbf{Y}$ relative to $\mathbf{Z}$.

\begin{definition}[\textbf{Inducing Path}]
    An \emph{\textbf{inducing path}} between $V_i$ and $V_j$ is a path on which every non-endpoint vertex is a collider on the path and every collider is an ancestor of either $V_i$ or $V_j$.
\end{definition}

\begin{definition}[\textbf{Maximal Ancestral Graph} \cite{richardson2002ancestral,zhang2008completeness}]
    A directed mixed graph is \emph{\textbf{ancestral}} if it doesn’t contain a directed or almost directed cycle.
    An ancestral graph is called \emph{\textbf{maximal}} if for any two non-adjacent vertices, there is no inducing path between them, \ie, there exists a set of vertices that m-separates them. A directed mixed graph is called a \emph{maximal ancestral graph} (\emph{\textbf{MAG}}) if it is ancestral and maximal, denoted by $\mathcal{M}$. 
\end{definition}

A MAG is a \emph{Directed Acyclic Graph} (DAG) if it has only directed edges.
Two MAGs are \emph{\textbf{Markov equivalent}} if they share the same m-separations. 

\begin{definition}[\textbf{Partial Ancestral Graph }\cite{spirtes2000causation,zhang2006causal}]
    Let $[\mathcal{M}]$ be the Markov equivalence class of an underlying MAG $\mathcal{M}$.
    A partial ancestral graph (\textbf{PAG}, denoted by $\mathcal{P}$) represents the equivalence class $[\mathcal{M}]$, where a tail `$\--$' or arrowhead `$>$' occurs if the corresponding mark is tail or arrowhead in every $\mathcal{M} \in [\mathcal{M}]$, and a circle `$\circ$' occurs otherwise.
\end{definition}

\begin{definition}[\textbf{Markov Equivalence}]
\label{def:Markov-Equivalence}
    Two MAGs $\mathcal{M}_1$ and $\mathcal{M}_2$ are Markov equivalence if they share the same m-separations.
\end{definition}
Basically a Partial Ancestral Graph represents an equivalence class of MAGs.

\begin{definition}[\textbf{Causal Markov Condition} \cite{spirtes2000causation}]
    Given a set of variables $\mathbf{V}$ whose causal structure is represented by a DAG $G$, every variable in $\mathbf{V}$ is probabilistically independent of its non-descendants in $G$ given its parents in $G$.    
\end{definition}

\begin{definition}[\textbf{Causal Faithfulness Condition} \cite{spirtes2000causation}]
    Given a set of
    variables $\mathbf{V}$ whose causal structure is represented by a DAG $G$, the joint probability of $\mathbf{V}$,
    $P(\mathbf{V})$, is faithful to $G$ in the sense that $P(\mathbf{V})$ implies no conditional independence relations
    not already entailed by the causal Markov condition.
\end{definition}
Under the above two conditions, conditional independence relations among the observed variables correspond exactly to m-separation in the MAG or PAG $\mathcal{G}$, i.e., $(\mathbf{X} \CI \mathbf{Y} | \mathbf{Z})_{P} \Leftrightarrow (\mathbf{X} \CI \mathbf{Y} | \mathbf{Z})_{\mathcal{G}}$.

\begin{definition}[\textbf{Markov Blanket}]
\label{def: Markov Blanket}
     The Markov blanket of a variable $X$ is the smallest set of variables $\mathit{MB}(X)$ such that for all $V \in \mathbf{V} \setminus (\mathit{MB}(X) \cup \{X\}$, $X$ is conditionally independent of $V$ given $\mathit{MB}(X)$:
    \[
    X \CI V \mid \mathit{MB}(X).
    \]
\end{definition}

\begin{definition}[\textbf{Markov Blanket for DAGs} \cite{pearl1988Probaili, pearl2000causality}]
\label{Appendix:MB-DAG}
    Assuming faithfulness, in a DAG, the Markov blanket of a vertex $X$ is unique, denoted as $\mathit{MB}(X, G)$, includes the set of parents, children, and the parents of the children (spouses) of $X$. 
\end{definition}

\begin{definition}[\textbf{District}]
    In a MAG $\mathcal{M}$, the pulsed district of $X$, denoted as $\mathit{Dis^+}(X, \mathcal{M})$, is the set of vertices connected to $X$ by a path consisting entirely of bidirected edges ($\leftrightarrow$), including $X$ itself. Formally:
    \[
    \mathit{Dis^+}(X, \mathcal{M}) = \{X\} \cup \{V \mid X \leftrightarrow \cdots \leftrightarrow V\text{ in } \mathcal{M}\}.
    \]
    The district of $X$ is defined as:
    \[
    \mathit{Dis}(X, \mathcal{M}) = \mathit{Dis^+}(X, \mathcal{M}) \setminus \{X\}.
    \]
\end{definition}

\begin{definition}[\textbf{Markov Blanket for MAGs} \cite{richardson2003markov, pellet2008finding}] 
\label{Appendix:MB-MAG}
Assuming faithfulness, in a MAG, the Markov blanket of a vertex $X$, noted as $\mathit{MB}(X, \mathcal{M})$, consists of the set of parents, children, children's parents of $\ X$, as well as the district of $\ X$ and of the children of $\ X$, and the parents of each vertex of these districts, where the district of a vertex $V$ is the set of all vertices reachable from $V$ using only bidirected edges. 
\end{definition}

\begin{definition}[\textbf{Amenability}]
\label{def:Amenability}
    Let $X$ and $Y$ be two distinct nodes in a MAG or PAG. The MAG or PAG is said to be adjustment amenable relative to $(X, Y)$ if all possibly directed paths from $X$ to $Y$ start with a visible directed edge out of $X$.
\end{definition}

\begin{definition}[\textbf{Forbidden set}] 
\label{def:Forbidden-set}
    Let $X$ and $Y$ be two distinct nodes in a MAG or PAG $\g$ over $\vars[O]$.
    Then the forbidden set relative to $(X, Y)$ is defined as $\mathrm{Forb}(X, Y) = \{W' \in \mathbf{O} \mid W' \in \PossDe(W)$, $W$ lies on a possibly directed path from $X$ to $Y$ in $\g\}$.
\end{definition}

\begin{definition}[\textbf{Arrow-Collider Path}\cite{lilocal}]\label{def: arrow-collider}
    In a PAG or a MAG, a path $\pi= \left \langle V_0,\dots,V_n  \right \rangle $ is called an arrow-collider path from $V_0$ to $V_n$
    if every non-endpoint vertex is a collider on $\pi$, and the edge between $V_0$ and $V_1$ points into $V_0$, \ie, $V_0 \leftrightarrow V_1 \dots \leftarrow \!\!* V_n$.  If $n=1$, $\pi$ simplifies to $V_0 \leftarrow \!\!* V_1$.
\end{definition}

\begin{definition}[\textbf{Augmented Parent Set}\cite{lilocal}]
\label{appendix:pa-star}
    Let $\mathcal{G}$ be a PAG or a MAG.
    The augmented parent set of a vertex $X$, denoted as $\mathit{Pa^{*}}(X, \mathcal{G})$, is defined as follows: for any vertex $V \in \mathbf{O}$, $V \in \mathit{Pa^{*}}(X, \mathcal{G})$ if and only if there exists an arrow-collider path $\pi$ from $X$ to $V$ 
    such that: 
    \begin{itemize}
    [leftmargin=14pt,itemsep=0pt,topsep=0pt,parsep=0pt]
        \item [(1)] in a MAG, $X$ is a non-ancestor of every vertex on $\pi$, including $V$.
        \item [(2)] in a PAG, $X$ is an invariant non-ancestor of all vertices on $\pi$, including $V$.
    \end{itemize}

\end{definition}

\section{Proofs}
\label{Appendix:proofs}

\subsection{Proof of~\cref{theo:local-adjust-set—exist}}
\label{Appendix:proof-theorem-1}

Before presenting the proof, we quote the following Lemma since it is used to prove Theorem \ref{theo:local-adjust-set—exist}.

\begin{lemma}\label{lemma-theorem1}[Theorem 1 of  \citet{xie2024local}]
Let $Y$ be any node in $\mathbf{O}$, and $X$ be a node in $\mathit{MB(Y)}$. Then $Y$ and $X$ are m-separated by a subset of $\mathbf{O} \setminus \left \{ Y, X \right \}$ if and only if they are m-separated by a subset of $\mathit{MB(Y)}\setminus \left \{ X \right \}$.
\end{lemma}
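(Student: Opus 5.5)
Since $\MB(Y)\setminus\{X\}\subseteq\mathbf{O}\setminus\{Y,X\}$, the ``if'' direction of \cref{lemma-theorem1} is immediate. The plan for the ``only if'' direction is to exhibit one explicit separator inside $\MB(Y)\setminus\{X\}$. To build it, I would use the augmented (moral) graph characterization of m-separation for MAGs due to \citet{richardson2002ancestral}. We work in the MAG $\MAG$; under Markov and faithfulness this suffices, because every independence statement is then a statement about m-separation in $\MAG$.

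\textbf{Step 1 (setup).} Suppose $X$ and $Y$ are m-separated by some subset of $\mathbf{O}\setminus\{X,Y\}$. Then $X$ and $Y$ are non-adjacent. By maximality, there is then no inducing path between $X$ and $Y$. Let $\mathbf{A}=\An(\{X,Y\},\MAG)$. Form the augmented graph $(\MAG_{\mathbf{A}})^{a}$ on $\mathbf{A}$, in which two vertices are joined iff they are collider-connected in $\MAG_{\mathbf{A}}$, that is, joined by a path on which every non-endpoint vertex is a collider.

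We need that $X$ and $Y$ are not adjacent in $(\MAG_{\mathbf{A}})^{a}$. Suppose they were. Then the collider path between them would lie entirely in $\mathbf{A}$, so each of its colliders is an ancestor of $X$ or $Y$. That path would therefore be an inducing path, which is a contradiction.

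\textbf{Step 2 (candidate separator).} Define $\mathbf{Z}^{*}$ as the set of neighbours of $Y$ in $(\MAG_{\mathbf{A}})^{a}$. By Step 1, $X\notin\mathbf{Z}^{*}$. In any undirected graph, the neighbours of a vertex separate it from every non-adjacent vertex. Hence $\mathbf{Z}^{*}$ separates $X$ from $Y$ in $(\MAG_{\mathbf{A}})^{a}$.

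Next we apply the Richardson--Spirtes theorem: $X$ and $Y$ are m-separated by $\mathbf{Z}$ iff they are separated by $\mathbf{Z}$ in $(\MAG_{\An(\{X,Y\}\cup\mathbf{Z})})^{a}$. Since $\mathbf{Z}^{*}\subseteq\mathbf{A}$, we have $\An(\{X,Y\}\cup\mathbf{Z}^{*})=\mathbf{A}$. Therefore $\mathbf{Z}^{*}$ m-separates $X$ and $Y$ in $\MAG$.

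\textbf{Step 3 (locality, the main point).} It remains to show $\mathbf{Z}^{*}\subseteq\MB(Y)$. Every $V\in\mathbf{Z}^{*}$ is joined to $Y$ by a collider path $\langle Y,V_1,\dots,V_k=V\rangle$. I would sort such paths by the first edge at $Y$.

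\emph{Case $Y\leftarrow V_1$ or $Y\leftrightarrow V_1$.} If $k=1$, then $V$ is a parent or a spouse of $Y$. If $k>1$, then every intermediate $V_i$ is a collider, so the segment $V_1,\dots,V_{k-1}$ consists of bidirected edges. Thus all the $V_i$ lie in the district of $Y$, and $V$ is either in that district or a parent of a vertex in it.

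\emph{Case $Y\rightarrow V_1$.} Then $V_1$ is a child of $Y$. The later vertices lie in the district of that child, and the endpoint $V$ is in that district or is a parent of one of its vertices (the case $k=2$ with $V\to V_1$ is the co-parent case).

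In every case $V$ belongs to $\MB(Y)$ by \cref{Appendix:MB-MAG}. Hence $\mathbf{Z}^{*}\subseteq\MB(Y)\setminus\{X\}$, which completes the proof. I expect this case check of collider paths to be the only delicate step, along with confirming that restricting to $\mathbf{A}$ does not remove edges of the paths used. Here the argument only needs the inclusion $\mathbf{Z}^{*}\subseteq\MB(Y)$, and it does not matter that some collider paths might leave $\mathbf{A}$. Notably, the hypothesis $X\in\MB(Y)$ is never used, which is consistent with it being merely the setting in which the lemma is applied.
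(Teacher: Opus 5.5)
The paper gives no proof of this lemma. It imports it as Theorem~1 of \citet{xie2024local} and uses it as a black box, in the proof of the locality theorem and in the structure-learning subroutine. So your proposal is not competing with an argument in the paper; it supplies one, and it is correct.

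Your route is the Richardson--Spirtes augmented-graph criterion with an explicit witness. The witness is $\mathbf{Z}^{*}$, the neighbours of $Y$ in $(\mathcal{M}_{\An(\{X,Y\})})^{a}$, i.e.\ the D-SEP-type set of vertices collider-connected to $Y$ inside $\An(\{X,Y\})$. You combine this with the fact that every vertex collider-connected to $Y$ in a MAG lies in $\MB(Y)$. Each step checks out:
\begin{itemize}
\item An edge between $X$ and $Y$ in the augmented graph would come from a collider path with all colliders in $\An(\{X,Y\})$. That is an inducing path between non-adjacent vertices, contradicting maximality.
\item The neighbours of a vertex separate it from every non-adjacent vertex in an undirected graph.
\item Since $\mathbf{Z}^{*}\subseteq\An(\{X,Y\})$, the relevant ancestral set is unchanged, so separation transfers back to m-separation.
\end{itemize}

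One thing to make explicit in Step~3: for $k>1$ your first case can only be $Y\leftrightarrow V_1$, because $V_1\rightarrow Y$ would make $V_1$ a non-collider. Your conclusion that the $V_i$ lie in the district of $Y$ uses this silently.

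Compared with the citation, your approach buys transparency and a little generality. It shows the hypothesis $X\in\MB(Y)$ is never needed. It also shows that every $\mathbf{Z}$ with $\mathbf{Z}^{*}\subseteq\mathbf{Z}\subseteq\An(\{X,Y\})\setminus\{X,Y\}$ is a separator, for instance $(\MB(Y)\cap\An(\{X,Y\}))\setminus\{X\}$.

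It does not give a separator computable from data, because $\mathbf{Z}^{*}$ depends on ancestral relations in the unknown MAG. The lemma is only an existence statement, which is why the local algorithms still search over subsets of the Markov blanket. The citation, by contrast, buys brevity but leaves this step unverifiable within the paper.
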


\begin{proof}
According to Definition~2, a set $\mathbf{Z}$ is a valid adjustment set with respect to $(X,Y)$ in $\mathcal{P}$ if it satisfies all the conditions specified therein. 
When the treatment variable $X$ is a singleton, the generalized adjustment criterion becomes equivalent to the \emph{generalized back-door criterion} proposed by \citet{maathuis2015generalized}.
Under our problem setting, where we do \emph{not} impose the pretreatment assumption (i.e., variables in $\mathbf{O}$ are not required to be non-descendants of $X$ and $Y$), the above conditions can be simplified as follows:

A set $\mathbf{Z} \subseteq \mathbf{O}$ is a valid adjustment set for $(X,Y)$ if:
\begin{enumerate}
    \item the graph $\mathcal{P}$ is \emph{adjustment amenable} relative to $(X,Y)$, meaning that every possible causal path from $X$ to $Y$ begins with a visible directed edge out of $X$;
    \item $\mathbf{Z} \cap \mathrm{Forb}(X,Y) = \emptyset$; and
    \item $\mathbf{Z}$ $m$-separates $X$ and $Y$ in \PX.
\end{enumerate}

Here, \PX \ denotes the graph obtained from $\mathcal{P}$ by removing all visible directed edges out of $X$ \citep{maathuis2015generalized}.  
Because all such outgoing edges are removed, $X$ has no children in \PX, and thus no element of $\mathrm{Forb}(X,Y)$ can belong to the Markov blanket of $X$ in \PX.  
Consequently, for any candidate separator contained in $\MB(X)$, the condition $\mathbf{Z} \cap \mathrm{Forb}(X,Y)=\emptyset$ is automatically satisfied in \PX.
Therefore, in \PX, the criterion further simplifies to: For any $\mathbf{Z}' \subseteq \mathbf{O}$,  
$\mathbf{Z}'$ is a valid adjustment set if and only if it $m$-separates $X$ and $Y$ in \PX.
This simplified form will be used in the proof of our locality theorem. Recall that the theorem states: A subset of $\vars[O]$ is a valid adjustment set for estimating the average causal effect of $X$ on $Y$ if and only if there exists a subset of $\mathit{MB}(X)$ that is a valid adjustment set for $X$ and $Y$.

Using the characterization above, this is equivalent to the following purely graph-theoretic statement in \PX: There exists a subset $\mathbf{Z}\subseteq\vars[O]$ that is a valid adjustment set for $(X,Y)$ if and only if there exists a subset $\mathbf{Z}\subseteq \MB(X)$ that is a valid adjustment set for $(X,Y)$.
Since, in \PX, a set is a valid adjustment set if and only if it $m$-separates $X$ and $Y$, the theorem can be restated as:
\begin{center}
$\exists \mathbf{Z} \subseteq \vars[O]$ s.t. $Z$ m-separates $X$ and $Y$ in \PX \;\; $\Leftrightarrow$
\;\;
$\exists \mathbf{Z} \subseteq \MB(X)$ s.t. $Z$ $m$-separates $X$ and $Y$ in \PX.
\end{center}
Note that $\MB'(X) \subseteq \MB(X)$, and $Y$ may not belong to $\MB'(Y)$ in \PX. We now analyze two cases:
\paragraph{Case 1:} Suppose that $\PAG$ is adjustment amenable relative to $(X, Y)$, and that $Y \in \MB(X)$ in \PX. If $Y \in \Adj(X)$ in \PX, this contradicts the assumption of adjustment amenability relative to $(X, Y)$. Hence, we must have $Y \notin \Adj(X)$. In this case, according to Lemma~1, there must exist a set $\mathbf{Z}$ that $m$-separates $X$ and $Y$ in \PX.

\paragraph{Case 2:} Suppose that $\mathcal{P}$ is adjustment amenable relative to $(X, Y)$, and that $Y \notin \MB(X)$ in \PX. It then follows that $Y \notin Adj(X)$. Thus, $X$ and $Y$ are $m$-separated by $\MB(X)$, i.e., $(X \perp\!\!\!\perp Y \mid \MB(X))_{\mathcal{P}_{\underline{X}}}$. If, instead, $(X \not\!\perp\!\!\!\perp Y \mid \MB(X))_{\mathcal{P}_{\underline{X}}}$ were to hold, this would contradict the assumption that $\mathcal{P}$ is adjustment amenable relative to $(X, Y)$. Consequently, no subset of $\vars[O] \setminus \{X\}$ is a valid adjustment set w.r.t.\ $(X, Y)$ in $\mathcal{P}$.
\end{proof}

\subsection{Proof of~\cref{theo: find adjust-R1}}
\label{Appendix:proof-theorem-2}
\begin{proof}
\medskip
Suppose there exists a variable
$S \in \MB(X)\setminus (\{Y\} \cup \Ch(X,\PAG))$ and an adjustment set $\mathbf{Z} \subseteq \MB(X)$ with $\mathbf{Z} \cap \PossDe(X,\PAG)=\emptyset$ \ such that (i)~$S \nCI Y \mid \mathbf{Z}$ \quad\text{and}\quad (ii)~$S \CI Y \mid \mathbf{Z} \cup \{X\}$. We show that under these conditions $\mathbf{Z}$ blocks all
non-causal paths between $X$ and $Y$, making it a valid adjustment set.

\medskip
\noindent\emph{Step 1: Why every active path from $S$ to $Y$ must pass through $X$.}
From $(S \nCI Y \mid \mathbf{Z})$, there exists at least one active path from $S$ to $Y$
given $\mathbf{Z}$. From $(S \CI Y \mid \mathbf{Z}\cup\{X\})$, all such paths are blocked once
$X$ is added to the conditioning set. Therefore, every originally active path
from $S$ to $Y$ must pass through $X$. By construction of $S$ and $\mathbf{Z}$, we also have
$\mathbf{Z} \subseteq \MB(X)$ and $  
\mathbf{Z} \cap \PossDe(X,\PAG)=\emptyset $. Thus, none of the descendants of $X$ are conditioned on.
If $X$ were a collider along an $S$ to $Y$ path, conditioning on $\mathbf{Z}$ would not activate that path, but conditioning on $\mathbf{Z}\cup\{X\}$ would activate it. Therefore, $X$ can't be a collider.

\medskip
\noindent\emph{Step 2: Existence of an active $S *\to X$ path.}
From $(S \nCI Y \mid \mathbf{Z})$ and $(S \CI Y \mid \mathbf{Z}\cup\{X\})$, it follows that,
given $\mathbf{Z}$, there must exist at least one active path from $S$ to $X$. 
Due to the selection scope of $S$, $X$ cannot be a collider. Therefore, the only path between $S$ and $X$ is $S * \to X \to \dots$.

\medskip
\noindent\emph{Step 3: No active non-causal path between $X$ and $Y$ can remain.}
Assume, toward a contradiction, that there exists an active non-causal path
between $X$ and $Y$ given $\mathbf{Z}$. By concatenating this path with the active
$S *\to X$ segment and applying Lemma 3.3.1 of \citet{spirtes2000causation}
(activated when two paths share more than one node), the collider at $X$ yields
an active $S \to Y$ path under $\mathbf{Z} \cup \{X\}$. This implies $S \nCI Y \mid \mathbf{Z} \cup \{X\}$, which contradicts condition (ii). Hence, no such activated non-causal path can
exist. Under Condition~(ii), $\mathbf{Z}$ blocks every non-causal path from $X$ to $Y$ while
retaining all directed causal paths. Therefore, $\mathbf{Z}$ satisfies the generalized
adjustment criterion and is a valid adjustment set for $(X,Y)$.
\end{proof}

\subsection{Proof of~\cref{theo: find adjust-R2}}
\label{Appendix:proof-theorem-3}
\begin{proof}
\medskip
In this case, every neighbor of $X$ is either a definite parent ($\Pa(X)$), a definite non-collider possible parent ($\mathrm{NCPa}(X, \PAG_{\MBplus(X)})$), or a definite child ($\Ch(X)$). By the local Markov property, $X$ is conditionally independent of all its non-descendants given its parent set $\Pa(X)$. The graph $\PAG_{\MBplus(X)}$ represents a Markov equivalence class that enumerates all possible MAGs consistent with the learned local structure around $X$. Let $\mathbf{Z} = \Pa(X) \cup \mathrm{NCPa}(X, \PAG_{\MBplus(X)})$, which is the union of the parent sets of $X$ across all MAGs in this equivalence class. In other words, in any MAG belonging to this equivalence class, the parent nodes of $X$ are contained in $\mathbf{Z}$.

If $X$ and $Y$ are not conditionally independent given $\Pa(X)$, then any active path from $X$ to $Y$ must be a possibly directed causal path beginning with an edge of the form $\Pa(X) \to X$ followed by an edge $X \to *$. Moreover, every non-causal path between $X$ and $Y$ must pass through a parent of $X$, since all colliders and edge orientations incident to $X$ are fixed. Hence, the set $\Pa(X) \cup \mathrm{NCPa}(X, \PAG_{\MBplus(X)})$
 satisfies the generalized adjustment criterion \citep{maathuis2015generalized} and constitutes a valid adjustment set for identifying the total causal effect of $X$ on $Y$.
\end{proof}

\subsection{Proof of~\cref{theo: find adjust-R3}}
\label{Appendix:proof-theorem-4}
\begin{proof}
\medskip
For case (i), suppose that the causal effect from $X$ to $Y$ is non-zero. By the construction of $\mathbf{Z}$, the set $\mathbf{Z}$ does not contain any descendants of $X$. Moreover, by the Markov blanket property, $\mathbf{Z}$ blocks all non-causal paths between $X$ and $Y$. Thus, any active path from $X$ to $Y$ given $\mathbf{Z}$ must be a directed causal path from $X$ to $Y$. In particular, if the effect is non-zero, then $Y$ is a descendant of $X$ and, given $\mathbf{Z}$, we must have $X \nCI Y \mid \mathbf{Z}$, which contradicts the condition $X \CI Y \mid \mathbf{Z}$.

For case (ii), the condition $S \nCI X \mid \mathbf{Z}$ implies that, given $\mathbf{Z}$, there exists an active path $\pi$ from $S$ to $X$ whose last edge points into $X$, and, by construction, neither $S$ nor the nodes in $\mathbf{Z}$ are descendants of $X$. Now suppose that there exists a directed path from $X$ to $Y$. Then, by appending this directed path to $\pi$, we obtain an active path from $S$ to $Y$ of the form
$S *\!\!\rightarrow \dots \rightarrow X \rightarrow \dots \rightarrow Y$, which remains active given the set $\mathbf{Z}$, because $X \notin \mathbf{Z}$. This would imply $S \nCI Y \mid \mathbf{Z}$, contradicting the condition $S \CI Y \mid \mathbf{Z}$. Therefore, it is impossible for a directed path to exist from $X$ to $Y$ in this case.
\end{proof}

\subsection{Proof of~\cref{theo: complete}}
\label{Appendix:proof-theorem-5}
\begin{proof}
\medskip
We begin by recalling the \emph{generalized adjustment criterion} (GAC).  
A set of nodes $\mathbf{Z} \subseteq V \setminus \{X, Y\}$ satisfies the GAC relative to $(X, Y)$ in a graph $\mathcal{G}$ if:
\begin{enumerate}
    \item[(1)] $\mathcal{G}$ is adjustment amenable relative to $(X, Y)$;
    \item[(2)] $\mathbf{Z} \cap \mathit{Forb(X, Y)} = \emptyset$;
    \item[(3)] all definite-status non-causal paths from $X$ to $Y$ are blocked by $\mathbf{Z}$.
\end{enumerate}

In $\mathcal{R}1$, $\mathcal{R}1$and $\mathcal{R}3$, we restrict attention to auxiliary variables
$S \in \MB(X) \setminus \bigl( \{Y\} \cup \Ch(X, \PAG) \bigr)$,
and adjustment sets $Z \subseteq \MB(X)$ with $\mathbf{Z} \cap \PossDe(X,\PAG) = \emptyset$.
Under these restrictions, Condition~(2) of the GAC is automatically satisfied.  
Therefore, it suffices to consider only Conditions~(1) (amenability) and~(3) (blocking of non-causal paths). In particular, we analyze the following two cases, depending on whether a causal effect exists between $X$ and $Y$.

\paragraph{Case 1: There exists a causal effect from $X$ to $Y$.}
Since a causal effect from $X$ to $Y$ can only exist if the first edge on any possibly directed path from $X$ to $Y$ is visible, we distinguish two sub-cases depending on the role played by an auxiliary variable $S$ in determining the visibility of this edge.

\subparagraph{(i) $S$ is not on any non-causal path from $X$ to $Y$.}
In this situation, any association between $S$ and $Y$ must be mediated through $X$.
Adjustment amenability of $\mathcal{G}$ relative to $(X,Y)$ requires that the first edge out of $X$ on every causal path from $X$ to $Y$ be \emph{visible}. Two visibility patterns are possible.

\emph{Visible case 1:} $S *\!\to X \to C_1 \to Y$.  
Since $S \in \MB(X) \setminus (\{Y\} \cup \Ch(X,\PAG))$ and $S$ is not on any non-causal path from $X$ to $Y$, $\mathcal{R}1$ must hold:
$\text{(i)}\; S \nCI Y \mid \mathbf{Z}
\quad\text{and}\quad
\text{(ii)}\; S \CI Y \mid \mathbf{Z} \cup \{X\}$.

\emph{Visible case 2:} There exists a path from $V$ to $C_1$ ($V* \leftrightarrow \leftrightarrow ...\leftrightarrow X *-* C_1$), the path from $V$ into $X$ is a collider path such that every non-endpoint node on the path is a parent of $C_1$.  
Under this visibility pattern, an analogous argument applies, and $\mathcal{R}1$ again holds.

In summary, as long as the edge pointed to by $X$ is visible, that is, as long as the first criterion of GAC is satisfied, our rule will definitely be able to identify it.

\subparagraph{(ii) Every possible $S$ lies on a non-causal path from $X$ to $Y$.}
In this case, the non-causal path containing $S$ must be considered for blocking. We again distinguish two visibility patterns.

\emph{Visible case 1:} $S *\!\to X \to Y$.  
If $S$ is a collider on the non-causal path, then the path is inactive by default.  
If $S$ is a non-collider, conditioning on $S$ does not activate the path.  
Thus, $S$ can simultaneously witness visibility and serve as a valid adjustment variable.

One might ask whether $S$ could be a collider on one non-causal path from $X$ to $Y$ and a non-collider on another, in which case rules would fail.  
However, such a configuration cannot arise: during the graph learning process, this structure would be identified as non-amenable, and hence non-adjustable.

\emph{Visible case 2:}  
This visibility pattern is even less likely in this context. In the corresponding true causal graph, the structure is non-amenable and therefore not adjustable.  
Consequently, $\mathcal{R}2$ applies.  
Together, $\mathcal{R}1$ and $\mathcal{R}2$ cover all amenable cases with a non-zero causal effect.

\paragraph{Case 2: The causal effect from $X$ to $Y$ is zero.}
If there is no path between $X$ and $Y$, or if all paths between them are inactive, then $X \CI Y$ holds unconditionally.

If there exists an active non-causal path between $X$ and $Y$, conditioning on a set
$\mathbf{Z} \subseteq \MB(X)$ with $\mathbf{Z} \cap \PossDe(X,\PAG) = \emptyset$ renders $X$ independent of all non-descendants.  
In this case, the first condition of $\mathcal{R}3$ applies.

For the case $X \leftrightarrow Y$, where no separating set exists, note that since
$S \in \MB(X) \setminus (\{Y\} \cup \Ch(X,\PAG))$, any path or edge incident to $X$ from $S$ must have an arrowhead at $X$.  
By $\mathcal{R}3$, $X$ is therefore identified as a collider on the path from $S$ to $Y$, implying that $Y$ cannot be a descendant of $X$.

If no such $S$ exists, amenability itself becomes undecidable, corresponding to a non-adjustable case. Therefore, it is impossible to identify from observational data. That means that as long as we can identify from the observed data that the graph is amenable with respect to \XY, then our rules can definitely identify it.

From the above analysis, if $\mathcal{R}1$, $\mathcal{R}2$ and $\mathcal{R}3$ are all inapplicable, then conditional independence and dependence relations among observed variables are insufficient to determine whether a causal effect from $X$ to $Y$ exists.
\end{proof}

\subsection{Proof of~\cref{theo: algorithm-lcs}}
\label{Appendix:proof-theorem-6}
\begin{proof}
\medskip
Assuming access to an oracle conditional independence test, the proposed structure learning algorithm can correctly recover the local causal structure around $X$.
By Theorem~\ref{theo:local-adjust-set—exist}, if a valid adjustment set exists, then we can find at least one valid adjustment set within $\MB(X)$. Furthermore, Theorems~\ref{theo: find adjust-R1} and~\ref{theo: find adjust-R2} characterize how such a valid adjustment set can be identified purely from features of the learned local structure.
Together, Theorems~\ref{theo: find adjust-R1} and~\ref{theo: find adjust-R2} provide necessary and sufficient conditions for identifying a causal effect of $X$ on $Y$ that is inferable from the observed data, using only testable independence and dependence relationships among observed variables. Consequently, whenever a causal effect of $X$ on $Y$ is identifiable from observational data, the LCS algorithm is guaranteed to correctly identify and estimate this effect.

On the other hand, Theorems~\ref{theo: find adjust-R3} and~\ref{theo: complete} establish that $\mathcal{R}3$ 
constitutes a necessary and sufficient condition for concluding the absence of a causal effect of $X$ on $Y$ that is inferable from the observed data.
Accordingly, when no causal effect of $X$ on $Y$ exists that can be detected via observed (in)dependence relations, LCS correctly identifies this null effect.

Finally, if neither $\mathcal{R}1$, $\mathcal{R}2$ nor ~$\mathcal{R}3$ applies, then no conclusion of a causal effect of $X$ on $Y$ can be drawn solely from observational independence and dependence information. In this case, LCS appropriately returns an inconclusive result.

Above all, these results establish both the soundness and the completeness of the LCS algorithm for causal effect identification from observational data.
\end{proof}

\section{Algorithm Description}
\label{Appendix:Algorithm Description}
\subsection{Examples of algorithms and rules based on the bnlearn network Mildew}
We start from the Mildew network provided in the \texttt{bnlearn} benchmark and use it as a concrete graphical model to demonstrate our method. We randomly select two latent variables, \texttt{leldug\_3} and \texttt{lemp\_2}, and generate the corresponding PAG using functions from the \texttt{fastdag2pag} package.
Based on this PAG, we present four examples to illustrate how the proposed algorithm operates and how the rules are applied during its execution.

\begin{figure}[H]
\centering
  \begin{tikzpicture}[scale=0.07, >=Stealth,line width=0.6pt]

\tikzstyle{every node}=[inner sep=0pt,  font=\normalsize]

\tikzstyle{edge}=[shorten >=3pt,  shorten <=3pt]

\node (lai_4)        at (20,20)  {$lai_4$};
\node (temp_4)       at (60,20)  {$temp_4$};
\node (dm_3)         at (0,0)    {$dm_3$};
\node (foto_4)       at (40,0)   {$\mathbf{foto_4}$};
\node (straaling_4)  at (80,0)   {$straaling_4$};
\node (dm_4)         at (20,-20) {$dm_4$};

\draw[-arcsq, edge]        (lai_4) -- (foto_4);
\draw[circle-arcsq, edge]  (temp_4) -- (foto_4);
\draw[-arcsq, edge]        (dm_3) -- (dm_4);
\draw[-arcsq, edge]        (foto_4) -- (dm_4);
\draw[circle-arcsq, edge]  (straaling_4) -- (foto_4);

\end{tikzpicture}
  \caption{Induced subgraphs based on $\mathit{MB}^+(X)$ in Example~\ref{Appendix-example-1}.}
  \label{fig:example6-pag}
\end{figure}

\begin{example}
\label{Appendix-example-1}

\begin{enumerate}[leftmargin=1.5em]
  \textbf{Input: }
  Target variable pair $(X,Y) = (foto_4, udbytte)$ and observed variable set $\mathbf{O}$.
  \item \textbf{Local structure learning around $X$.}
  \[
  \begin{aligned}
  \mathit{MB}(X) &= \{ lai_4, dm_4, dm_3, temp_4, straaling_4 \}, \\
  \mathit{Pa}^*(X) &= \{ lai_4, temp_4, straaling_4 \}, \\
  \mathit{Pa}(X) &= \{ lai_4 \}, \\
  \mathit{NCPa}(X) &= \{ temp_4, straaling_4 \}, \\
  \mathit{Ch}(X) &= \{ dm_4 \}.
  \end{aligned}
  \]
  \item \textbf{Rule application.}
  Using $\mathcal{R}1$, the variable $S = temp_4$ satisfies the conditional independence tests:
  \[  temp_4 \not\!\CI udbytte \mid \{ lai_4 \}, \quad
  foto_4 \CI udbytte \mid \{ lai_4, temp_4 \}.
  \]  Therefore, the adjustment set is
  \[  \mathbf{Z} = \{ lai_4 \},  \]
  and the procedure terminates with $\Theta \gets \theta$.
\end{enumerate}
\end{example}

\begin{figure}[H]
\centering
  \begin{tikzpicture}[
  scale=0.07,
  >=Stealth,
  line width=0.6pt
]

\tikzstyle{every node}=[
  inner sep=0pt,
  font=\normalsize
]

\tikzstyle{edge}=[
  shorten >=3pt,
  shorten <=3pt
]

\node (dm_1)   at (0,20)   {$dm_1$};
\node (foto_2) at (40,20)  {$foto_2$};

\node (dm_2)   at (20,0)   {$\mathbf{dm_2}$};
\node (dm_3)   at (60,0)   {$dm_3$};

\node (foto_3) at (80,20)  {$foto_3$};

\draw[-arcsq, edge] (dm_1)   -- (dm_2);
\draw[-arcsq, edge] (foto_2) -- (dm_2);
\draw[-arcsq, edge] (dm_2)   -- (dm_3);
\draw[-arcsq, edge] (foto_3) -- (dm_3);

\end{tikzpicture}
  \caption{Induced subgraphs based on $\mathit{MB}^+(X)$ in Example~\ref{Appendix-example-2}.}
  \label{fig:example7-pag}
\end{figure}

\begin{example}
\label{Appendix-example-2}
\begin{enumerate}[leftmargin=1.5em]
  \textbf{Input: }
  Target variable pair $(X,Y) = (dm_2, dm_4)$ and observed variable set $\mathbf{O}$.
  \item \textbf{Local structure learning around $X$.}
  \[
  \begin{aligned}
  \mathit{MB}(X) &= \{ dm_1, foto_2, dm_3, foto_3 \}, \\
  \mathit{Pa}^*(X) &= \mathrm{Pa}(X) = \{ dm_1, foto_2 \}, \\
  \mathit{Ch}(X) &= \{ dm_3 \}.
  \end{aligned}
  \]
  \item \textbf{Rule application.}
  $\mathcal{R}1$ is not satisfied, so we proceed to $\mathcal{R}2$.
  $\mathcal{R}2$ holds since, in the local adjacency structure around $X$,
  the mark at the $X$-endpoint is always determined.
  Hence, the adjustment set is
  \[  \mathbf{Z} = \{ dm_1, foto_2 \}.  \]
  The causal effect is estimated and the procedure terminates with
  $\Theta \gets \theta$.
\end{enumerate}
\end{example}

\begin{figure}[H]
\centering
  \begin{tikzpicture}[
  scale=0.07,
  >=Stealth,
  line width=0.6pt
]

\tikzstyle{every node}=[
  inner sep=0pt,
  font=\normalsize
]

\tikzstyle{edge}=[
  shorten >=3pt,
  shorten <=3pt
]

\node (lai_2)        at (30,40)   {$lai_2$};
\node (meldug_2)     at (70,45)   {$meldug_2$};

\node (mikro_2)      at (10,15)   {$mikro_2$};
\node (middel_2)     at (100,40)  {$middel_2$};
\node (lai_3)        at (40,20)   {$\mathbf{lai_3}$};

\node (straaling_3)  at (-25,15)  {$straaling_3$};
\node (meldug_4)     at (75,5)    {$meldug_4$};

\node (foto_3)       at (0,-10)   {$foto_3$};
\node (lai_4)       at (20,-10)   {$lai_4$};
\node (temp_3)       at (20,-30)  {$temp_3$};
\node (mikro_3)      at (50,-20)  {$mikro_3$};
\node (nedboer_3)    at (90,-30)  {$nedboer_3$};
\node (middel_3)     at (105,-10) {$middel_3$};

\draw[-arcsq, edge]        (lai_2) -- (mikro_2);
\draw[-arcsq, edge]        (lai_2) -- (lai_3);
\draw[-arcsq, edge]        (lai_2) -- (meldug_4);

\draw[-arcsq, edge]        (meldug_2) -- (lai_3);
\draw[-arcsq, edge]        (meldug_2) -- (meldug_4);
\draw[-arcsq, edge]        (meldug_4) -- (lai_4);
\draw[-arcsq, edge]        (mikro_2) -- (lai_3);
\draw[-arcsq, edge]        (mikro_2) -- (meldug_4);

\draw[-arcsq, edge]        (lai_3) -- (foto_3);
\draw[-arcsq, edge]        (lai_3) -- (mikro_3);
\draw[-arcsq, edge]        (lai_3) -- (meldug_4);
\draw[-arcsq, edge]        (lai_3) -- (lai_4);

\draw[-arcsq, edge]        (mikro_3) -- (meldug_4);

\draw[circle-arcsq, edge]  (middel_2) -- (lai_3);
\draw[circle-arcsq, edge]  (middel_2) -- (meldug_4);

\draw[circle-arcsq, edge]  (straaling_3) -- (foto_3);
\draw[circle-arcsq, edge]  (temp_3) -- (foto_3);
\draw[circle-arcsq, edge]  (temp_3) -- (mikro_3);
\draw[circle-arcsq, edge]  (nedboer_3) -- (mikro_3);
\draw[circle-arcsq, edge]  (middel_3) -- (meldug_4);

\end{tikzpicture}
  \caption{Induced subgraphs based on $\mathit{MB}^+(X)$ in Example~\ref{Appendix-example-3}.}
  \label{fig:example8-pag}
\end{figure}

\begin{example}
\label{Appendix-example-3}
\begin{enumerate}[leftmargin=1.5em]
  \textbf{Input: }
  Target variable pair $(X,Y) = (lai_3, straaling_4)$ and observed variable set $\mathbf{O}$.
  \item \textbf{Local structure learning around $X$.}
  \[
  \begin{aligned}
  \mathit{MB}(X) &= \{ mikro_2, lai_2, meldug_2, middel_2, foto_3, meldug_4, mikro_3, straaling_3, temp_3, nedboer_3, middel_3, lai_4 \}, \\
  \mathit{Pa}^*(X) &= \{ mikro_2, lai_2, meldug_2, middel_2 \}, \\
  \mathit{Pa}(X) &= \{ mikro_2, lai_2, meldug_2 \}, \\
  \mathit{NCPa}(X) &= \{ middel_2 \}, \\
  \mathit{Ch}(X) &= \{ foto_3, meldug_4, lai_4, mikro_3 \}.
  \end{aligned}
  \]
  \item \textbf{Rule application.}
  Using $\mathcal{R}3$(i), we find that $lai_3$ and $straaling_4$ are independent.
  Hence, the algorithm terminates with
  \[  \Theta \gets 0.  \]
\end{enumerate}
\end{example}

\begin{figure}[H]
\centering
  \begin{tikzpicture}[
  scale=0.07,
  >=Stealth,
  line width=0.6pt
]

\tikzstyle{every node}=[
  inner sep=0pt,
  font=\normalsize
]

\tikzstyle{edge}=[
  shorten >=3pt,
  shorten <=3pt
]

\node (dm_1)          at (-10,20) {$dm_1$};
\node (straaling_2)   at (20,30)  {$straaling_2$};
\node (lai_2)         at (45,20)  {$lai_2$};
\node (nedboer_2)     at (80,30)  {$nedboer_2$};

\node (dm_2)          at (0,0)    {$dm_2$};
\node (foto_2)        at (30,0)   {$\mathbf{foto_2}$};
\node (milkro_2)      at (60,0)   {$milkro_2$};

\draw[-arcsq, edge]        (dm_1) -- (dm_2);
\draw[-arcsq, edge]        (foto_2) -- (dm_2);

\draw[circle-arcsq, edge]  (straaling_2) -- (foto_2);
\draw[circle-arcsq, edge]  (nedboer_2)   -- (milkro_2);

\draw[-arcsq, edge]        (lai_2) -- (foto_2);
\draw[-arcsq, edge]        (lai_2) -- (milkro_2);

\draw[arcsq-arcsq, edge]   (foto_2) -- (milkro_2);

\end{tikzpicture}
  \caption{Induced subgraphs based on $\mathit{MB}^+(X)$ in Example~\ref{Appendix-example-4}.}
  \label{fig:example9-pag}
\end{figure}

\begin{example}
\label{Appendix-example-4}
\begin{enumerate}[leftmargin=1.5em]
  \textbf{Input: }
  Target variable pair $(X,Y) = (foto_2, mikro_2)$ and observed variable set $\mathbf{O}$.
  \item \textbf{Local structure learning around $X$.}
  \[
  \begin{aligned}
  \mathit{MB}(X) &= \{ lai_2, dm_2, dm_1, mikro_2, straaling_2, nedboer_2\}, \\
  \mathit{Pa}^*(X) &= \{ lai_2, straaling_2, mikro_2, nedboer_2 \}, \\
  \mathit{Pa}(X) &= \{ lai_2 \}, \\
  \mathit{NCPa}(X) &= \{ straaling_2 \}, \\
  \mathit{Ch}(X) &= \{ dm_2 \}.
  \end{aligned}
  \]
  \item \textbf{Rule application.}
  Using $\mathcal{R}3$(i), no conditioning set renders $foto_2$ and $mikro_2$ independent.  Applying $\mathcal{R}3$(ii), we find that  \[  straaling_2 \not\!\CI foto_2  \quad \text{but} \quad  straaling_2 \CI mikro_2.  \]  The algorithm therefore terminates with
  \[  \Theta \gets 0.  \]
\end{enumerate}
\end{example}

\label{Appendix: bnlearn-example}
\begin{figure}[h]
\vspace{-1.0em}
  \centering
  \includegraphics[width=0.8\textwidth]{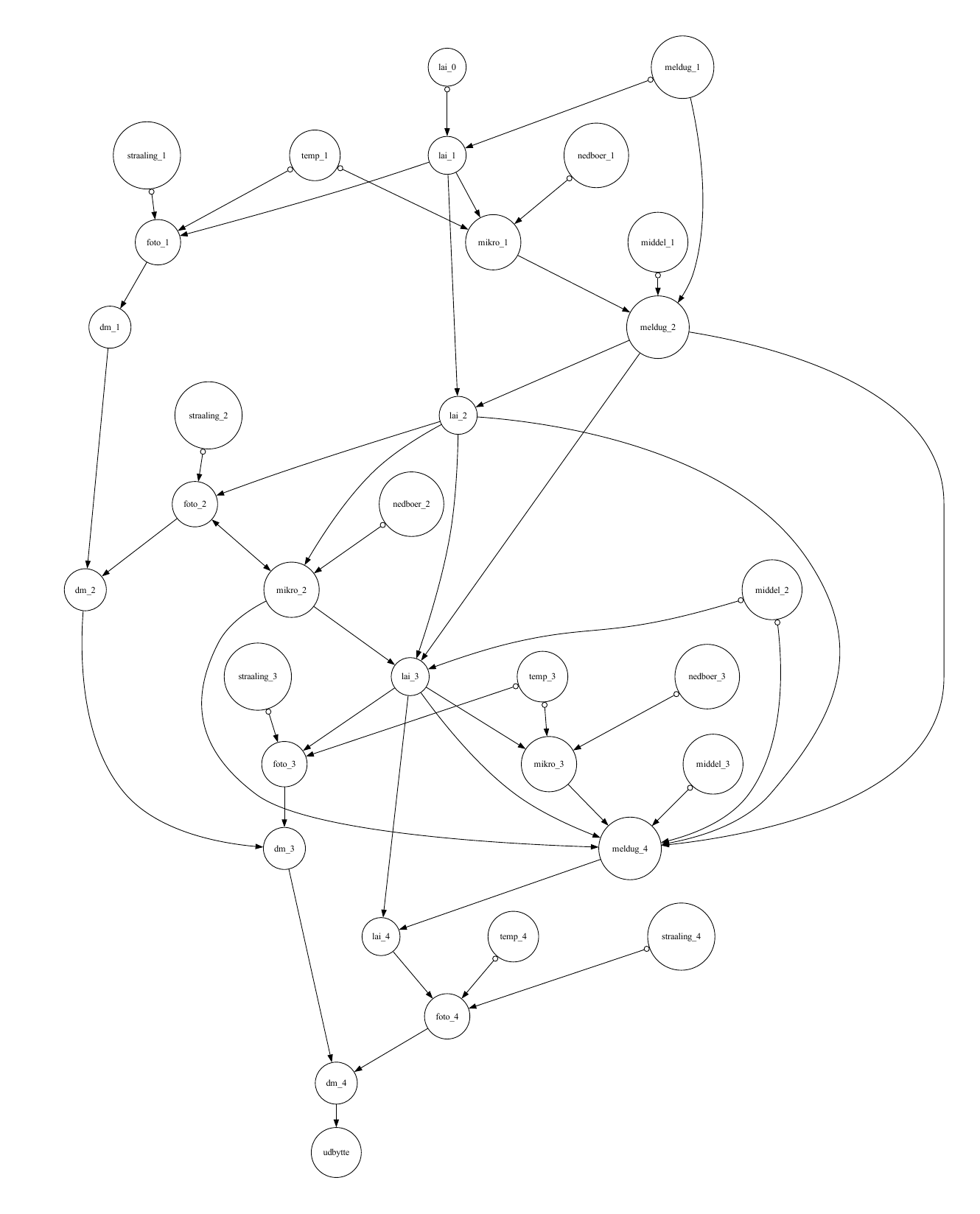}
  \caption{Example PAG (mildew network) with latent variables leldug\_3, lemp\_2.}
\end{figure}
\clearpage

\subsection{More details on the structure learning algorithm~\cite{lilocal}}
\label{Appendix: Structure learning algorithm}

Here in Algorithm~\ref{Appendix-alg:PMB}, we provide the details on the structure learning algorithm~\cite{lilocal} that we used in our method, where Proposition~\ref{proposition: stop-rules}, and Proposition~\ref{proposition: local-colliders} are in the following, while Lemma~\ref{lemma-theorem1} is shown in Appendix B.1.
\begin{algorithm}[ht]
\caption{Learning \PMB\ Algorithm}
\label{Appendix-alg:PMB}
\small
\begin{algorithmic}[1]

\INPUT Target variable $X$, observed variables $\mathbf{O}$
\OUTPUT Learned partial ancestral graph $\mathcal{P}$ around $X$ (and thus \PMB)

\STATE Initialize $\mathbf{Waitlist} \gets \{X\}$, $\mathbf{Donelist} \gets \emptyset$, $\mathcal{P} \gets \emptyset$.

\WHILE{the stopping criteria in Proposition~\ref{proposition: stop-rules}  are not met}
    \STATE Let $V_i$ be the first variable in $\mathbf{Waitlist}$.
    \STATE Learn the Markov blanket $\mathit{MB}(V_i)$ from the data over $\mathbf{O}$.
    \STATE Learn the local structure $\mathcal{L}_{V_i}$ over $\mathit{MB}^+(V_i)$.
    \STATE Extract true edges and edge orientations from $\mathcal{L}_{V_i}$ using 
           Lemma~\ref{lemma-theorem1} and Proposition~\ref{proposition: local-colliders}, 
           and update $\mathcal{P}$.
    \STATE Orient $\mathcal{P}$ using standard orientation rules.
    \STATE Update $\mathbf{Waitlist}$ and $\mathbf{Donelist}$ accordingly.
\ENDWHILE

\STATE \textbf{return} $\mathcal{P}$

\end{algorithmic}
\end{algorithm}

\begin{proposition}[\textbf{Stop Rules}~\cite{lilocal}]\label{proposition: stop-rules}
Let $X$ be the target variable of interest and $\mathbf{Waitlist}$ represent the collection of variables whose $\mathcal{L}$ will be learned. If any of the following rules are satisfied, the learned \PMB is equivalent to the structure identified by global learning methods.
\begin{description}
[leftmargin=14pt,itemsep=0pt,topsep=3pt,parsep=0pt]
    \item $\mathcal{R}1.$ The edges among the variables of $\mathit{MB}^{+}(X)$ are all determined, \ie, no circle present in the marks.
    \item $\mathcal{R}2.$ The $\mathbf{Waitlist}$ is empty.
    \item $\mathcal{R}3.$ All paths from each vertex in $\mathit{MB}^{+}(X)$, which include undirected edges (connected two vertices in $\mathit{MB}^{+}(X)$), are blocked by edges $*\!\!\rightarrow$. 
\end{description}
\end{proposition}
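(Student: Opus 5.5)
The plan is to compare the local output with the global PAG $\PAG$ produced by FCI together with the complete orientation rules of \citet{zhang2008completeness}. Everything reduces to two facts. First, the skeleton and the unshielded colliders among $\MBplus(X)$ are recovered exactly. Second, once those are fixed, the only way a mark inside $\MBplus(X)$ can differ from its global value is through orientation propagation from outside.

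For the first fact I would invoke Lemma~\ref{lemma-theorem1}. For any $V_i$ processed in the $\mathbf{Waitlist}$ and any $V\in\MB(V_i)$, the pair is non-adjacent in the true MAG exactly when some subset of $\MB(V_i)\setminus\{V\}$ separates them. Hence the adjacencies extracted from $\mathcal{L}_{V_i}$ are the true adjacencies incident to $V_i$. For arrowheads I would use Proposition~\ref{proposition: local-colliders} (the local collider result cited alongside the algorithm). It shows that every unshielded collider, and every discriminating-path collider, that is determinable with endpoints in an explored blanket is recovered with the correct separating set. Consequently every mark in the learned $\mathcal{P}$ is sound: each oriented mark agrees with $\PAG$, and each mark set by the standard rules is entailed.

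Next I would treat the three stop rules separately.

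\emph{Under $\mathcal{R}1$.} Every mark among $\MBplus(X)$ is already a tail or an arrowhead. Soundness forces these to coincide with $\PAG$, because the orientation rules only ever replace circles and never change determined marks.

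\emph{Under $\mathcal{R}2$.} The $\mathbf{Waitlist}$ is empty, meaning every vertex that was enqueued has been explored. So the learned graph contains every edge and collider from which a mark in $\MBplus(X)$ could be derived. Applying the complete rule set to it therefore reproduces the marks of $\PAG$ restricted to $\MBplus(X)$, by the completeness theorem of \citet{zhang2008completeness}.

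\emph{Under $\mathcal{R}3$.} This is the main obstacle. I would argue by inspecting the rules $R1$--$R4$ and $R8$--$R10$. Each time one of them changes a circle at a vertex $A$, the triggering information either arrives along an edge with a circle mark at $A$, or lies on a discriminating or uncovered possibly directed path that is made of circle edges near $A$. So an orientation originating outside $\MBplus(X)$ can reach a mark inside it only along a path of circle-containing edges that enters $\MBplus(X)$.

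If every such path is cut by an edge $*\!\!\rightarrow$, propagation stops there. An arrowhead cannot be overwritten, and rules such as $R1$ and $R9$/$R10$ cannot fire through a collider-type arrowhead in the required direction. I would make this precise by induction on the order in which the global rules fire: no mark inside $\MBplus(X)$ ever depends on an unexplored vertex. Hence the local marks equal the global marks.

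The delicate part is the induction for $R4$ (discriminating paths) and $R9$/$R10$ (uncovered paths). For these I would first try to show that any such path reaching $\MBplus(X)$ from outside must traverse an edge with a circle at its inner end. That inner circle then contradicts $\mathcal{R}3$.
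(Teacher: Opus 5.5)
The paper gives no proof of this proposition; it is imported from \cite{lilocal}. So there is no route of the paper's to compare against, and your argument must stand on its own. It has a genuine gap, precisely in the $\mathcal{R}3$ case you flag, because the invariant you plan to induct on is false.

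Take Zhang's $R1$: from $\alpha *\!\!\rightarrow \beta \circ\!\!-\!\!* \gamma$ with $\alpha,\gamma$ non-adjacent it yields $\beta \rightarrow \gamma$. The circle it replaces by a tail sits at $\beta$, but the triggering information enters $\beta$ through the \emph{arrowhead} of $\alpha *\!\!\rightarrow \beta$, not through a circle. $R2$ likewise enters $\gamma$ through $\beta *\!\!\rightarrow \gamma$. Arrowheads are therefore conduits of propagation, not barriers. Take $W \notin \MBplus(X)$ and $V,T \in \MBplus(X)$. A path through $W *\!\!\rightarrow V$ is ``cut by an edge $*\!\!\rightarrow$'' in your sense. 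Yet $R1$ turns $V \circ\!\!-\!\!* T$ into $V \rightarrow T$ using that arrowhead together with the non-adjacency of $W$ and $T$, which must itself be certified.

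The other direction fails too. An edge $V \circ\!\!\rightarrow W$ pointing out of $\MBplus(X)$ still carries information through the circle at $V$. Nothing in your sketch prevents $R9$/$R10$ from resolving it to $V \rightarrow W$ via an uncovered possibly directed path outside $\MBplus(X)$. Such paths may consist entirely of $\circ\!\!\rightarrow$ and $\rightarrow$ edges, i.e.\ of ``$*\!\!\rightarrow$'' edges. After that, $R2$ ($V \rightarrow W *\!\!\rightarrow T$, $V *\!\!-\!\!\circ T$) or $R8$ ($V \rightarrow W \rightarrow T$, $V \circ\!\!\rightarrow T$) changes a mark on an edge among $\MBplus(X)$.

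So before any induction on firing order can work, you must give ``blocked by edges $*\!\!\rightarrow$'' a precise, direction-sensitive meaning that also controls the circle ends of boundary $\circ\!\!\rightarrow$ edges. You then need a locality lemma for each of $R1$--$R4$ and $R8$--$R10$ under that meaning. As written, the $\mathcal{R}3$ case is a plan whose key step fails.

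The other two cases rest either on that same missing lemma or on more than the cited results give.

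\textbf{Stop rule $\mathcal{R}2$.} ``Every enqueued vertex has been explored'' implies nothing about marks unless you state what the (unspecified) update step enqueues. You would then have to show that this set is closed under everything that can influence a mark in $\MBplus(X)$. Moreover, the completeness theorem of \citet{zhang2008completeness} concerns $R1$--$R10$ run on the full correct skeleton with all separating-set information. Transferring it to the explored subgraph needs exactly the locality lemma above. If the enqueue rule is exhaustive, $\mathcal{R}2$ does become easy, since then the whole component has been learned, but you must say so.

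\textbf{Your first step.} Proposition~\ref{proposition: local-colliders} only asserts that colliders centred at the processed vertex, and uncovered collider paths starting from it, \emph{as identified} in its local PAG, are correct. That is a soundness statement; it says nothing about discriminating paths, separating sets, or recovering every collider. Lemma~\ref{lemma-theorem1} certifies adjacency only for pairs with a processed endpoint, whereas $R1$, $R3$, $R4$, $R9$ and $R10$ all have non-adjacency premises. So ``each mark set by the standard rules is entailed'' needs an argument that the rules fire only on certified non-adjacencies. Your $\mathcal{R}1$ case similarly presupposes that the learned skeleton over $\MBplus(X)$ is already correct.
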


\begin{proposition}\label{proposition: local-colliders}
Let $\mathcal{L}_{X}$ be the inferred PAG over $\mathit{MB^{+}(X)}$.

Let $V_i$ $(1\le i\le |\MB(X)|)$ represent the vertices in $\MB(X)$. The following statements hold:

\begin{description}[leftmargin=14pt,itemsep=0pt,topsep=0pt,parsep=0pt]
    \item [$\mathcal{S}1$.] The unshielded collider triples (V-structures) $V_1 *\!\!\rightarrow X \leftarrow\!\! * V_{2}$ identified in $\mathcal{L}_{X}$ are consistent with those in the ground-truth PAG.
    \item [$\mathcal{S}2$.] The uncovered collider paths $X *\!\!\rightarrow V_{1} \leftrightarrow \dots\leftarrow\!\!*V_{i}$ identified in $\mathcal{L}_{X}$ are consistent with those in the ground-truth PAG.
\end{description}
\end{proposition}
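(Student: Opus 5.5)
The statement immediately above is Proposition~\ref{proposition: local-colliders}. Both S1 and S2 follow from locality of m-separation (Lemma~\ref{lemma-theorem1}) together with the soundness of the standard FCI collider-orientation step applied to a correct adjacency skeleton and correct separating sets. The plan has three steps: certify the adjacencies of $\mathcal{L}_X$ among the relevant vertices, certify the separating sets used to orient colliders, and then argue that the orientation decisions coincide with those made by a global learner on the true PAG $\PAG$.

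\textbf{Step 1 (adjacencies).} For $V_i \in \MB(X)$, $X$ and $V_i$ are non-adjacent in $\PAG$ iff they are m-separated by some subset of $\vars[O]\setminus\{X,V_i\}$. By Lemma~\ref{lemma-theorem1}, this holds iff they are m-separated by a subset of $\MB(X)\setminus\{V_i\}$. The local learner searches exactly over such subsets, so every edge incident to $X$ in $\mathcal{L}_X$ is also an edge of $\PAG$, and conversely. For S1 this is the only adjacency information needed, apart from the non-adjacency of $V_1$ and $V_2$ in the unshielded triple.

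\textbf{Step 2 (the collider test for S1).} Suppose $V_1,V_2 \in \Adj(X)$ are non-adjacent in $\mathcal{L}_X$. I would first argue that they are non-adjacent in $\PAG$ as well. Any m-separator of $V_1,V_2$ found inside $\MBplus(X)$ is a genuine separator in the full graph under faithfulness, since conditional independence is a global statement. In the other direction, a true adjacency can never be removed, because no separating set exists for it.

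The FCI rule orients $V_1 *\!\!\rightarrow X \leftarrow\!\!* V_2$ iff $X$ does not belong to the recorded separating set. In a MAG, for an unshielded triple, $X$ is either in every separator of $V_1,V_2$ or in none; the triple is a collider exactly in the second case. Hence any separator recorded locally gives the same verdict as the global one. This proves S1.

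\textbf{Step 3 (uncovered collider paths, S2).} For a path $X *\!\!\rightarrow V_1 \leftrightarrow \dots \leftarrow\!\!* V_i$ inside $\MBplus(X)$, I would induct on its length. The base case is S1 applied at $V_1$. For the inductive step, orientations along the path arise from FCI's discriminating-path rule and the collider rule. Each such decision depends only on adjacencies among path vertices and on membership of the middle vertex in a separating set of two non-adjacent endpoints. Because the path is uncovered, consecutive non-endpoint triples are unshielded, so the argument of Step 2 applies verbatim at each triple.

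\textbf{Main obstacle.} The difficulty in Step 3 is that adjacencies between two vertices of $\MB(X)$ that are both distinct from $X$ may be spuriously present in $\mathcal{L}_X$: their true separator may lie outside $\MB(X)$, so the local learner never finds it. A spurious edge would make the path look covered, but it cannot create a false collider. So for S2, consistency should be read as: every collider path reported locally is a true one. I would prove exactly this direction, using the facts that local m-separation statements are always valid globally and that arrowheads oriented from a valid separating set are sound.
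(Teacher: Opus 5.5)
The paper states this proposition without proof (it is imported from \cite{lilocal}), so your argument has to stand on its own. Step 1 and your S1 argument do. Lemma~\ref{lemma-theorem1} certifies both edges at $X$, and any separator of $V_1,V_2$ found inside $\MBplus(X)$ is a genuine one, so the triple is unshielded in the true MAG. The ``in every separator or in none'' property then transfers the collider verdict.

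The gap is in S2. Step 2 needs both edges of the triple to be true adjacencies, and you established that only for edges incident to $X$. On $X *\!\!\rightarrow V_1 \leftrightarrow V_2 \leftrightarrow \dots \leftarrow\!\!* V_i$, every edge after the first joins two vertices of $\MB(X)$. These are exactly the adjacencies your last paragraph concedes may be spurious. So ``the argument of Step 2 applies verbatim'' is unsupported, already at the base triple $\langle X,V_1,V_2\rangle$, which is centred at $V_1$ and is not an instance of S1. Your remark that a spurious edge can only make the path look covered deals with spurious chords. It does not deal with a spurious edge lying on the path itself. Such an edge would give a locally reported collider path that does not exist in $\PAG$. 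Neither ``local separators are globally valid'' nor the separating-set rule rules this out, because both presuppose that the triple's edges are real.

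What is missing is a certification of the path's edges, and this uses the structure of $\MB(X)$, not just locality of separation. Here is one way to do it.
\begin{itemize}
\item Arrowheads of $\mathcal{L}_X$ on true edges are sound for $\PAG$. Marginals over $\MBplus(X)$ of Markov-equivalent MAGs are Markov equivalent, and marginalization preserves ancestral relations. Hence $V_1\in\Ch(X)\cup\mathrm{Dis}(X)$.
\item Induct along the path. Suppose $V_j$ is a child of $X$ or lies in the district of $X$ or of a child of $X$. Then every parent and spouse of $V_j$ lies in $\MB(X)$.
\item The local arrowhead at $V_j$ gives $V_j\notin\An(V_{j+1})$. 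So an inducing path between $V_j$ and $V_{j+1}$ relative to $\vars[O]\setminus\MBplus(X)$ cannot leave $V_j$ by an edge out of $V_j$. Otherwise, since every collider must be an ancestor of an endpoint, you get either $V_j\in\An(V_{j+1})$ or a directed cycle.
\item The inducing path therefore leaves $V_j$ through a parent or spouse, which lies in $\MB(X)$. Being a non-endpoint outside the marginalized set, it must be a collider, so it is a spouse.
\item Iterating this, the whole inducing path stays inside $\MBplus(X)$. It is then an inducing path in the true MAG, and maximality forces $V_j$ and $V_{j+1}$ to be adjacent.
\item The local marks now transfer, giving $V_j\leftrightarrow V_{j+1}$ in truth whenever $j+1<i$. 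This continues the induction.
\end{itemize}
Only once every edge on the path is certified does your unshielded-triple argument go through.
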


\end{document}